\DeclareMathAlphabet\mathbb{U}{msb}{m}{n}
\let\P\relax 
\DeclareMathOperator*{\P}{\mathbb{P}}
\renewcommand{\hat}{\widehat}
\newcommand{\removed}[1]{}
\newcommand{\cA}{\mathcal{A}}
\newcommand{\cF}{\mathcal{F}}
\newcommand{\cP}{\mathcal{P}}
\newcommand{\cI}{\mathcal{I}}
\newcommand{\R}{\mathbb{R}}
\newcommand{\M}{\mathrm{M}}
\newcommand{\p}{\mathrm{p}}
\renewcommand{\P}{\mathrm{P}}
\newcommand{\U}{\mathrm{U}}
\newcommand{\x}{\mathrm{x}}
\newcommand{\norm}[1]{\left\| #1 \right\|}
\renewcommand{\log}[1]{\operatorname{log}\left(#1\right)}
\renewcommand{\exp}[1]{\operatorname{exp}\left(#1\right)}
\newcommand{\expectation}[2]{\mathbb{E}_{#1}\left[#2\right]}
\newcommand{\expect}[1]{\mathbb{E}\left[#1\right]}
\newcommand{\prob}[1]{\mathbb{P}\left[#1\right]}
\def\infinity{\rotatebox{90}{8}}
\newcommand{\set}[2][]{#1 \{ #2 #1 \} }
\newcommand{\ignore}[1]{}
\newtheorem{lemma}{Lemma}[section]
\newtheorem{theorem}[lemma]{Theorem}
\newtheorem{corollary}[lemma]{Corollary}
\theoremstyle{definition}
\newtheorem{definition}[lemma]{Definition}
\newcounter{note}[section]
\renewcommand{\thenote}{\thesection.\arabic{note}}
\newcommand{\mdnote}[1]{{\color{red}\refstepcounter{note}$\ll${\bf Mike's Comment \thenote: }{#1}$\gg$\marginpar{\tiny\bf MD \thenote}}}
\newcommand{\Aa}{\mathcal{A}}
\newcommand{\Uu}{\mathcal{U}}
\newcommand{\Ee}{\mathbb{E}}
\title{Policy Regret in Repeated Games}
\author{
  Raman Arora\\
  Dept. of Computer Science\\
  Johns Hopkins University\\
  Baltimore, MD 21204 \\
  \texttt{arora@cs.jhu.edu}
  \And
  Michael Dinitz\\
  Dept. of Computer Science\\
  Johns Hopkins University\\
  Baltimore, MD 21204 \\
  \texttt{mdinitz@cs.jhu.edu}
  \And
  Teodor V.~Marinov\\
  Dept. of Computer Science\\
  Johns Hopkins University\\
  Baltimore, MD 21204 \\
  \texttt{tmarino2@jhu.edu}
  \And
  Mehryar Mohri\\
  Courant Institute and Google Research\\
  New York, NY 10012 \\
  \texttt{mohri@cims.nyu.edu}
}
\begin{document}

\maketitle

\begin{abstract}

  The notion of \emph{policy regret} in online learning is a well defined performance measure for the common scenario of adaptive
  adversaries, which more traditional quantities such as external regret do not take into account. We revisit the notion of policy
  regret and first show that there are online learning settings in which policy regret and external regret are incompatible: any
  sequence of play that achieves a favorable regret with respect to one definition must do poorly with respect to the other. We then
  focus on the game-theoretic setting where the adversary is a self-interested agent. In that setting, we show that external regret and policy regret are not in conflict and, in fact, that a wide class of algorithms can ensure a favorable regret with respect to both definitions, so long as the adversary is also using such an algorithm. We also show that the sequence of play of no-policy
  regret algorithms converges to a \emph{policy equilibrium}, a new notion of equilibrium that we introduce. Relating this back to
  external regret, we show that coarse correlated equilibria, which no-external regret players converge to, are a strict subset of policy equilibria.  Thus, in game-theoretic settings, every sequence
  of play with no external regret also admits no policy regret, but the converse does not hold.

\end{abstract}

\section{Introduction}
\label{sec:intro}

Learning in dynamically evolving environments can be described as a
repeated game between a player, an online learning algorithm, and an
adversary. At each round of the game, the player selects an action,
e.g.\ invests in a specific stock, the adversary, which may be the
stock market, chooses a utility function, and the player gains the
utility value of its action. The player observes the utility value and
uses it to update its strategy for subsequent rounds. The player's
goal is to accumulate the largest possible utility over a finite
number of rounds of play.\footnote{Such games can be equivalently
  described in terms of minimizing losses rather than maximizing
  utilities. All our results can be equivalently expressed in terms of
  losses instead of utilities.}

The standard measure of the performance of a player is its
\emph{regret}, that is the difference between the utility achieved by
the best offline solution from some restricted class and the utility
obtained by the online player, when utilities are revealed
incrementally. Formally, we can model learning as the following
problem. Consider an action set $\Aa$. The player selects an action
$a_t$ at round $t$, the adversary picks a utility function $u_t$, and
the player gains the utility value $u_t(a_t)$. While in a full
observation setting the player observes the entire utility function
$u_t$, in a \emph{bandit setting} the player only observes the utility
value of its own action, $u_t(a_t)$. We use the shorthand $a_{1:t}$ to
denote the player's sequence of actions $(a_1, \ldots, a_t)$ and
denote by $\Uu_t = \set{ u\colon \Aa^t \to \R }$ the family of utility
functions $u_t$. The objective of the player is to maximize its
expected cumulative utility over $T$ rounds, i.e.\ maximize
$\Ee[\sum_{t = 1}^T f_t(a_{1:t})]$, where the expectation is
over the player’s (possible) internal randomization. Since this is clearly
impossible to maximize without knowledge of the future, the algorithm
instead seeks to achieve a performance comparable to that of the best
fixed action in hindsight.  Formally, \emph{external regret} is
defined as
\begin{equation}
\label{eq:regret}
R(T) =  \Ee\left[\max_{a \in \Aa} \sum_{t = 1}^T u_t(a_{1:t-1}, a) - \sum_{t = 1}^T u_t(a_{1:t})\right].
\end{equation}

A player is said to \emph{admit no external regret} if the external
regret is sublinear, that is $R(T) = o(T)$. In contrast to statistical learning, online learning algorithms do not need to make stochastic assumptions about data generation: strong regret bounds are possible even if the utility functions are adversarial.

There are two main adversarial settings in online learning: the
\emph{oblivious setting} where the adversary ignores the player's
actions and where the utility functions can be thought of as
determined before the game starts (for instance, in weather
prediction); and the \emph{adaptive setting} where the adversary can
react to the player's actions, thus seeking to throw the player off
track (e.g., competing with other agents in the stock market). More
generally, we define an \emph{$m$-memory bounded adversary} as one
that at any time $t$ selects a utility function based on the player's
past $m$ actions:
$u_t(a'_1, \ldots, a'_{t-m-1}, a_{t-m}, \ldots, a_t) = u_t(a_1, \ldots,
a_{t-m-1}, a_{t-m}, \ldots,  a_t)$, for all
$a'_1, \ldots, a'_{t - m -1}$ and all $a_1, \ldots, a_t$. An oblivious
adversary can therefore be equivalently viewed as a $0$-memory bounded
adversary and an adaptive adversary as an $\infinity$-memory bounded
adversary. For an oblivious adversary, external regret in
Equation~\ref{eq:regret} reduces to
$R(T) = \Ee[ \max_{a \in \Aa} \sum_{t = 1}^T u_t(a) - u_t(a_t)]$, since
the utility functions do not depend upon past actions.  Thus, external
regret is meaningful when the adversary is oblivious, but it does not
admit any natural interpretation when the adversary is adaptive. The
problem stems from the fact that in the definition of external regret,
the benchmark is a function of the player's actions.  Thus, if the
adversary is adaptive, or even memory-bounded for some $m > 0$, then,
external regret does not take into account how the adversary would
react had the player selected some other action.

To resolve this critical issue, \cite{arora2012online} introduced an
alternative measure of performance called \emph{policy regret}
for which the benchmark does not depend on the player's actions.
Policy regret is defined as follows
\begin{equation}
\label{eq:policyregret}
P(T) =  \max_{a \in \Aa} \sum_{t = 1}^T u_t(a, \ldots, a) - \Ee\left[\sum_{t = 1}^T u_t(a_{1:t})\right].
\end{equation}
\cite{arora2012online} further gave a reduction, using a mini-batch
technique where the minibatch size is larger than the memory $m$ of
adversary, that turns any algorithm with a sublinear external regret
against an oblivious adversary into an algorithm with a sublinear
policy regret against an $m$-memory bounded adversary, albeit at the
price of a somewhat worse regret bound, which is still sublinear.

In this paper, we revisit the problem of online learning against
adaptive adversaries. Since \cite{arora2012online} showed that there
exists an adaptive adversary against which any online learning
algorithm admits linear policy regret, even when the external regret
may be sublinear, we ask if no policy regret implies no external
regret. One could expect this to be the case since policy regret seems
to be a \emph{stronger} notion than external regret.  However, our
first main result (Theorem~\ref{thm_main:reg_incomp}) shows that this
in fact is \emph{not} the case and that the two notions of regret
are incompatible: there exist adversaries (or sequence of utilities)
on which action sequences with sublinear external regret admit linear
policy regret and action sequences with sublinear policy regret incur
linear external regret.

We argue, however, that such sequences may not arise in practical
settings, that is in settings where the adversary is a self-interested
entity. In such settings, rather than considering a malicious opponent
whose goal is to hurt the player by inflicting large regret, it seems
more reasonable to consider an opponent whose goal is to maximize his
own utility. In zero-sum games, maximizing one’s utility comes at the
expense of the other player’s, but there is a subtle difference
between an adversary who is seeking to maximize the player's regret
and an adversary who is seeking to minimize the player's utility (or
maximize his own utility).  We show that in such strategic game
settings there is indeed a strong relationship between external regret
and policy regret. In particular, we show in
Theorem~\ref{thm_main:stable} that a large class of \emph{stable}
online learning algorithms with sublinear external regret also benefit
from sublinear policy regret.

Further, we consider a two-player game where each player is playing a
no policy regret algorithm. It is known that no external regret play
converges to a coarse correlated equilibrium (CCE) in such a game, but
what happens when players are using no policy regret algorithms?  We
show in Theorem~\ref{thm_main:main_result} that the average play in
repeated games between no policy regret players converges to a
\emph{policy equilibrium}, a new notion of equilibrium that we
introduce.  Policy equilibria differ from more traditional notions of
equilibria such as Nash or CCEs in a crucial way.  Recall that a CCE
is defined to be a recommended joint strategy for players in a game
such that there is no incentive for any player to deviate unilaterally
from the recommended strategy if other players do not deviate.

What happens if the other players react to one player's deviation by
deviating themselves?  This type of reasoning is not captured by
external regret, but is essentially what is captured by policy regret.
Thus, our notion of policy equilibrium must take into account these
counterfactuals, and so the definition is significantly more complex.
But, by considering functions rather than just actions, we can define
such equilibria and prove that they exactly characterize no policy
regret play.

Finally, it becomes natural to determine the relationship between policy equilibria (which characterize no policy regret play) and CCEs (which characterize no external regret play).  We show in Theorems~\ref{lem_main:stable_ext_reg1} and~\ref{lem_main:stable_ext_reg2} that the set of CCEs is a strict subset of policy equilibria. In other words, every CCE can be thought of as a policy regret equilibrium, but no policy regret play might not converge to a CCE.

\section{Related work}
\label{sec:related_work}

The problem of minimizing policy regret in a fully adversarial setting
was first studied by~\cite{merhav2002sequential}. Their work dealt
specifically with the full observation setting and assumed that the
utility (or loss) functions were $m$-memory bounded. They gave regret
bounds in $O(T^{2/3})$. The follow-up work by
\cite{farias2006combining} designed algorithms in a reactive bandit
setting. However, their results were not in the form of regret bounds
but rather introduced a new way to compare against acting according to
a fixed expert strategy. \cite{arora2012online} studied $m$-memory
bounded adversaries both in the bandit and full information settings
and provided extensions to more powerful competitor classes considered
in swap regret and more general $\Phi$-regret. \cite{dekel2014bandits}
provided a lower bound in the bandit setting for switching cost
adversaries, which also leads to a tight lower bound for policy regret
in the order $\Omega(T^{2/3})$. Their results were later extended by
\cite{koren2017bandits} and \cite{koren2017multi}. More recently,
\cite{heidari2016tight} considered the multi-armed bandit problem
where each arm's loss evolves with consecutive pulls. The process
according to which the loss evolves was not assumed to be stochastic
but it was not arbitrary either -- in particular, the authors required
either the losses to be concave, increasing and to satisfy a
decreasing marginal returns property, or decreasing. The regret bounds
given are in terms of the time required to distinguish the optimal arm
from all others.

A large part of reinforcement learning is also aimed at studying
sequential decision making problems. In particular, one can define a
Markov Decision Process (MDP) by a set of states equipped with
transition distributions, a set of actions and a set of reward or loss
distributions associated with each state action pair. The transition
and reward distributions are assumed unknown and the goal is to play
according to a strategy that minimizes loss or maximizes reward. We
refer the reader to
\citep{sutton1998reinforcement,kakade2003sample,szepesvari2010algorithms}
for general results in RL. MDPs in the online setting with bandit
feedback or arbitrary payoff processes have been studied by
\cite{even2009online,yu2009markov,neu2010online} and
\cite{arora2012deterministic}.
\par

The tight connection between no-regret algorithms and correlated
equilibria was established and studied by
\cite{foster1997calibrated,fudenberg1999conditional,hart2000simple,BlumMansour2007}.
A general extension to games with compact, convex strategy sets was
given by \cite{stoltz2007learning}. No external regret dynamics were
studied in the context of socially concave
games~\citep{even2009convergence}. More recently,
\cite{hazan2008computational} considered more general notions of
regret and established an equivalence result between fixed-point
computation, the existence of certain no-regret algorithms, and the
convergence to the corresponding equilibria.  In a follow-up work by
\cite{mohri2014conditional} and \cite{MohriYang2017}, the authors
considered a more powerful set of competitors and showed that the
repeated play according to conditional swap-regret or transductive
regret algorithms leads to a new set of equilibria.

\section{Policy regret in reactive versus strategic environments}
\label{sec:react_strat}

Often, distant actions in the past influence an adversary more than
more recent ones.  The definition of policy
regret~\eqref{eq:policyregret} models this influence decay by assuming
that the adversary is $m$-memory bounded for some $m \in
\mathbb{N}$. This assumption is somewhat stringent, however, since ideally we
could model the current move of the adversary as a function of the
entire past, even if actions taken further in the past have less
significance. Thus, we extend the definition of \cite{arora2012online}
as follows.

\begin{definition}
\label{def:pol_regret}
The $m$-\emph{memory policy regret} at time $T$ of a sequence of actions $(a_t)_{t = 1}^T$ with respect to a fixed action $a$ in the action set $\cA$ and the sequence of utilities $(u_t)_{t = 1}^T$, where $u_t:\cA^t \rightarrow \mathbb{R}$ and $m \in \mathbb{N}\bigcup\{\infty\}$ is
\begin{align*}
P(T,a) = \sum_{t = 1}^T u_t(a_1,\cdots,a_{t-m},a,\cdots,a) - \sum_{t = 1}^T u_t(a_1,\cdots,a_t).
\end{align*}
We say that the sequence $(a_t)_{t = 1}^T$ has sublinear policy regret (or no policy regret) if $P(T,a) < o(T)$, for all actions $a\in\cA$.
\end{definition}

Let us emphasize that this definition is just an extension of the
standard policy regret definition and that, when the utility functions
are $m$-memory bounded, the two definitions exactly coincide.

While the motivation for policy regret suggests that this should be a
stronger notion compared to external regret, we show that not
only that these notions are incomparable in the general adversarial
setting, but that they are also \emph{incompatible} in a strong sense.

\begin{theorem}
\label{thm_main:reg_incomp}
There exists a sequence of $m$-memory bounded utility functions $(u_t)_{t = 1}^T$, where $u_t:\cA \rightarrow \mathbb{R}$, such that for any constant $m\geq 2$ (independent of $T$), any action sequence with sublinear policy regret will have linear external regret and any action sequence with sublinear external regret will have linear policy regret.
\end{theorem}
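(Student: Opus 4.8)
The plan is to build a memory-bounded adversary that rewards the player for \emph{persisting} with a fixed action---so that some constant action is a near-optimal policy-regret comparator---while dangling a \emph{poisoned one-step switching bonus}: switching away from the action played over the last $m$ rounds pays slightly more \emph{on that single round}. The external-regret comparator, which is free to change only the player's last move each round, always harvests this bonus, whereas a player who actually chases it is punished, since each switch makes the window of the last $m$ actions non-constant and hence worthless for the next $m-1$ rounds. Concretely: take $\cA=\{0,1\}$ and a small constant $\epsilon\in(0,\tfrac13)$; let $u_t$ depend on $(a_{t-m},\dots,a_t)$ by $u_t=1$ if $a_{t-m}=\dots=a_t=b$ for some $b$ (a ``stay''), $u_t=1+\epsilon$ if $a_{t-m}=\dots=a_{t-1}=b\neq a_t$ (a ``switch''), and $u_t=0$ otherwise; the first $m=O(1)$ rounds pay $0$ and are ignored. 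This is $m$-memory bounded, and since both constant actions pay $1$ every round, the policy-regret benchmark is $B_P=T-O(1)$.

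The proof then runs on a single bookkeeping identity. For any action sequence, let $c,s,z$ count the stay, switch, and zero rounds among $t>m$, and let $N_b$ count the rounds whose window $a_{t-m},\dots,a_{t-1}$ equals $b^m$. The non-zero rounds are exactly those with a constant window, so $c+s=N_0+N_1$ and $N_0+N_1+z=T-O(1)$; hence the player's utility is $U=(N_0+N_1)+\epsilon s$ and
\[
P(T,0)=B_P-U=z-\epsilon s .
\]
Replacing the player's last move by a fixed comparator $a$ turns every $a^m$-window into a stay and every $(1-a)^m$-window into a switch, so $\sum_t u_t(a_{1:t-1},a)=(N_0+N_1)+\epsilon N_{1-a}$, whence $B_E=(N_0+N_1)+\epsilon\max(N_0,N_1)$ and
\[
R(T)=B_E-U=\epsilon\bigl(\max(N_0,N_1)-s\bigr).
\]

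The role of $m\ge 2$ enters through one structural fact: a switch at round $t$ forces rounds $t+1,\dots,t+m-1$ to have non-constant windows (each contains both $a_t$ and an earlier, differing action), so they pay $0$; and two switches are at least $m$ rounds apart, so their forced-zero blocks are disjoint, giving $z\ge(m-1)s\ge s$. Now fix a small constant $\delta<\tfrac13$ and split on $z$. If $z\ge\delta T$, then $P(T,0)=z-\epsilon s\ge(1-\epsilon)z\ge(1-\epsilon)\delta T=\Omega(T)$: linear policy regret. If $z<\delta T$, then $\max(N_0,N_1)\ge\tfrac12(N_0+N_1)=\tfrac12(T-O(1)-z)$ and $s\le z<\delta T$, so $R(T)=\epsilon(\max(N_0,N_1)-s)\ge\epsilon\bigl(\tfrac{1-\delta}{2}-\delta\bigr)T-O(1)=\Omega(T)$: linear external regret. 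Thus every sequence has linear regret in at least one of the two senses, which is exactly both claims of the theorem; and neither case is vacuous---the all-$0$ sequence has policy regret $0$ with $R(T)=\epsilon(T-O(1))$, while the period-$2m$ sequence $0^m1^m0^m1^m\cdots$ has $R(T)\le 0$ with $P(T,0)=\Omega(T)$. For $m\le 1$ the forced-zero blocks are empty, switching is costless, and an alternating player drives both regrets sublinear, which is why $m\ge 2$ is needed.

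The main obstacle is making the three design parameters cooperate at once: $\epsilon$ must be strictly positive so that $B_E$ strictly exceeds the utility of any constant-playing strategy, yet small enough that harvesting the bonus by oscillating loses more than it gains on average; and the memory $m$ must be genuinely used, so that every switch kills $m-1\ge 1$ future rounds---this is precisely what rules out the ``just alternate'' sequence that would otherwise be simultaneously good for both regrets. Once the construction is pinned down, checking the identities $R(T)=\epsilon(\max(N_0,N_1)-s)$ and $P(T,0)=z-\epsilon s$ and the disjointness of the forced-zero blocks is routine.
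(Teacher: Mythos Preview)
Your proof is correct and follows the same core mechanism as the paper: reward the player for maintaining a constant window, dangle a strictly larger one-shot bonus for breaking the window, and make each break destroy the next $m-1$ windows so that the bonus cannot actually be harvested repeatedly. The two constructions differ cosmetically. The paper uses an \emph{asymmetric} version with values $\{0,\tfrac12,1\}$ in which only constant-$1$ windows are rewarded; you use a \emph{symmetric} version with values $\{0,1,1+\epsilon\}$ in which either constant window pays.

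Where your write-up goes beyond the paper is in the bookkeeping: you derive the closed forms $P(T,0)=z-\epsilon s$ and $R(T)=\epsilon(\max(N_0,N_1)-s)$ and then finish with a single threshold on $z$, which makes the dichotomy completely transparent and also makes explicit exactly where $m\ge 2$ is used (disjointness of the $(m-1)$-length forced-zero blocks, giving $s\le z$). The paper instead argues more directly: a sublinear-policy-regret sequence must have $o(T)$ deviations from the constant action $1$, and then one reads off linear external regret from the switching comparator on the remaining $T-o(T)$ rounds; the converse direction is obtained by noting that sublinear external regret forces the player's total utility to be $o(T)$. Your explicit witnesses (the constant sequence and the period-$2m$ sequence) are a nice addition that the paper leaves implicit.
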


The proof of the above theorem constructs a sequence for which no reasonable play can attain sublinear external regret. In particular, the only way the learner can have sublinear external regret is if they choose to have very small utility. To achieve this, the utility functions chosen by the adversary are the following.  At time $t$, if the player chose to play the same action as their past $2$ actions then they get utility $\frac{1}{2}$. If the player's past two actions were equal but their current action is different, then they get utility $1$, and if their past two actions differ then no matter what their current action is they receive utility $0$. It is easy to see that the maximum utility play for this sequence (and the lowest $2$-memory bounded policy regret strategy) is choosing the same action at every round. However, such an action sequence admits linear
external regret. Moreover, every sublinear external regret strategy
must then admit sublinear utility and thus linear policy regret.
\par
As discussed in Section~\ref{sec:intro}, in many realistic environments we can instead think of the adversary as a self-interested agent trying to maximize their own utility, rather than trying to maximize the regret of the player. This more strategic environment is better captured by the game theory setting, in particular a $2$-player game where both players are trying to maximize their utility. Even though we have argued that external regret is not a good measure, our next result shows that minimizing policy regret in games can be done if both players choose their strategies according to certain no external regret algorithms. More generally, we adapt a classical notion of stability from the statistical machine learning setting and argue that if the players use no external regret algorithms that are \emph{stable}, then the players will have no policy regret in expectation. To state the result formally we first need to introduce some notation.

\paragraph{Game definition:} We consider a $2$-player game $\mathcal{G}$, with players $1$ and $2$. The action set of player $i$ is denoted by $\cA_i$, which we think of as being embedded into $\mathbb R^{|\cA_i|}$ in the obvious way where each action corresponds to a standard basis vector. The corresponding simplex is $\Delta \cA_i$. The action of player $1$ at time $t$ is $a_t$ and of player $2$ is $b_t$. The observed utility for player $i$ at time $t$ is $u_i(a_t,b_t)$ and this is a bi-linear form with corresponding matrix $\P_i$. We assume that the utilities are bounded in $[0,1]$. 

\paragraph{Algorithm of the player:} When discussing algorithms, we take the view of player $1$. Specifically, at time $t$, player $1$ plays according to an algorithm which can be described as $Alg_t : (\cA_1\times\cA_2)^t \rightarrow \Delta \cA_1$. We distinguish between two settings: full information, in which the player observes the full utility function at time $t$ (i.e., $u_1(\cdot,b_t)$), and the bandit setting, in which the player only observes $u_1(a_t,b_t)$. In the full information setting, algorithms like multiplicative weight updates (MWU~\cite{arora2012multiplicative}) depend only on the past $t-1$ utility functions $(u_1(\cdot,b_\ell))_{\ell=1}^{t-1}$, and thus we can think of $Alg_t$ as a function $f_t: \cA_2^t \rightarrow \Delta\cA_1$. In the bandit setting, though, the output at time $t$ of the algorithm depends both on the previous $t-1$ actions $(a_\ell)_{\ell=1}^{t-1}$ and on the utility functions (i.e., the actions picked by the other player).

But even in the bandit setting, we would like to think of the player's algorithm as a function $f_t: \cA_2^t \rightarrow \Delta\cA_1$. We cannot quite do this, however we \emph{can} think of the player's algorithm as a \emph{distribution} over such functions.  So how do we remove the dependence on $\cA_1^t$?  Intuitively, if we fix the sequence of actions played by player $2$, we want to take the expectation of $Alg_t$ over possible choices of the $t$ actions played by player $1$.  In order to do this more formally, consider the distribution $\mu$ over $\cA_1^{t-1} \times \cA_2^{t-1}$ generated by simulating the play of the players for $t$ rounds.  Then let $\mu_{b_{0:t}}$ be the distribution obtained by conditioning $\mu$ on the actions of player $2$ being $b_{0:t}$.  Now we let $f_t(b_{0:t-1})$ be the distribution obtained by sampling $a_{0:t-1}$ from $\mu_{b_{1:t-1}}$ and using $Alg(a_{0:t-1}, b_{0:t-1})$. When taking expectations over $f_t$, the expectation is taken with respect to the above distribution. We also refer to the output $p_t = f_t(b_{0:t-1})$ as the strategy of the player at time $t$.

Now that we can refer to algorithms simply as functions (or distributions over functions), we introduce the notion of a stable algorithm.

\begin{definition}
Let $f_t : \cA_2^t \rightarrow \Delta \cA_1$ be a sample from $Alg_t$ (as described above), mapping the past $t$ actions in $\cA_2$ to a distribution over the action set $\cA_1$. Let the distribution returned at time $t$ be $p_t^1 = f_t(b_1,\ldots,b_t)$. We call this algorithm \emph{on average} $(m,S(T))$ \emph{stable} with respect to the norm $\norm{\cdot}$, if for any $b_{t-m+1}',\ldots,b_t' \in \cA_2$ such that $\tilde p_t^1 = f_t(b_1,\ldots,b_{t-m},b_{t-m+1}',\ldots,b_t') \in \Delta\cA_1$, it holds that $\mathbb{E}[\sum_{t = 1}^T\norm{p_t^1 - \tilde p_t^1}] \leq S(T)$, where the expectation is taken with respect to the randomization in the algorithm.
\end{definition}

Even though this definition of stability is given with respect to the game setting, it is not hard to see that it can be extended to the general online learning setting, and in fact this definition is similar in spirit to the one given in~\cite{saha2012interplay}. It turns out that most natural no external regret algorithms are stable. In particular we show, in the supplementary, that both Exp3~\cite{auer2002nonstochastic} and MWU are on average $(m,m\sqrt{T})$ stable with respect to $\ell_1$ norm for any $m<o(\sqrt{T})$. It is now possible to show that if each of the players are facing stable no external regret algorithms, they will also have bounded policy regret (so the incompatibility from Theorem~\ref{thm_main:reg_incomp} cannot occur in this case).
\begin{theorem}
\label{thm_main:stable}
Let $(a_t)_{t = 1}^T$ and $(b_t)_{t = 1}^T$ be the action sequences of player $1$ and $2$ and suppose that they are coming from no external regret algorithms modeled by functions $f_t$ and $g_t$, with regrets $R_1(T)$ and $R_2(T)$ respectively. Assume that the algorithms are on average $(m,S(T))$ stable with respect to the $\ell_2$ norm. Then 
\begin{align*}
&\expect{P(T,a)} \leq \|\P_1\|S(T) + R_1(T)\\
&\expect{P(T,b)} \leq \|\P_2\|S(T) + R_2(T),
\end{align*}
where $u_t(a_{1:t})$ in the definition of $P(T,a)$ equals $u_1(a_t,g_t(a_{0:t-1}))$ and similarly in the definition of $P(T,b)$, equals $u_2(b_t,f_t(b_{0:t-1}))$. The above holds
for any fixed actions $b\in\cA_2$ and $a\in\cA_1$. Here the matrix norm $\|\cdot\|$ is the spectral norm.
\footnote{We would like to thank Mengxiao Zhang (USC) for suggesting how to improve on the above theorem and discovering a small error in one of our proofs, which has been corrected.}
\end{theorem}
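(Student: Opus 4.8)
The plan is to bound $\mathbb{E}[P(T,a)]$ for player $1$; the bound for player $2$ follows from the same argument with the roles of the two players exchanged. First I would rewrite $P(T,a)$ explicitly in the game notation. The realized term is $u_t(a_{1:t}) = u_1(a_t,b_t)$ with $b_t = g_t(a_{0:t-1})$, while the benchmark term $u_t(a_1,\dots,a_{t-m},a,\dots,a)$ becomes $u_1(a,\tilde b_t)$, where $\tilde b_t = g_t(a_0,\dots,a_{t-m-1},a,\dots,a)$ is player $2$'s response to the counterfactual history in which player $1$'s last $m$ actions are replaced by the fixed action $a$ (one must track the index shift carefully here, since $g_t$ takes the $t$-tuple $a_{0:t-1}$). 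The core of the proof is then the one-line decomposition, for each $t$,
\[
u_1(a,\tilde b_t) - u_1(a_t,b_t) \;=\; \big[u_1(a,\tilde b_t) - u_1(a,b_t)\big] \;+\; \big[u_1(a,b_t) - u_1(a_t,b_t)\big],
\]
which isolates the effect of player $2$ reacting to player $1$'s deviation (first bracket) from the effect of player $1$ playing a different action against a \emph{fixed} response (second bracket).

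Summing over $t$ and taking expectations, the second bracket is exactly player $1$'s external regret against the realized sequence of utility functions $(u_1(\cdot,b_\ell))_{\ell=1}^T$, hence is at most $R_1(T)$, since $R_1(T)$ upper bounds $\mathbb{E}[\sum_t u_1(a,b_t) - \sum_t u_1(a_t,b_t)]$ for every fixed $a$ (the maximum in the definition of external regret sits inside the expectation). For the first bracket I would use bilinearity: since $u_1(a,p) = a^\top \P_1 p$ and $a$ is a standard basis vector with $\|a\|_2 = 1$, we get $|u_1(a,\tilde b_t) - u_1(a,b_t)| = |a^\top \P_1 (b_t - \tilde b_t)| \le \|\P_1\|\,\|b_t - \tilde b_t\|_2$ with $\|\P_1\|$ the spectral norm. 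Summing and taking expectations, it remains to bound $\mathbb{E}[\sum_t \|b_t - \tilde b_t\|_2]$; but $b_t$ and $\tilde b_t$ are precisely the outputs $g_t(a_{0:t-1})$ and $g_t(a_{0:t-m-1},a,\dots,a)$, so this sum is at most $S(T)$ by the on-average $(m,S(T))$-stability of player $2$'s algorithm $g$, invoked with the perturbed actions all equal to $a$. Combining the two estimates gives $\mathbb{E}[P(T,a)] \le \|\P_1\| S(T) + R_1(T)$, and the symmetric computation gives the bound for player $2$. The steps, in order, are: (i) rewrite $P(T,a)$ in terms of $u_1$ and the response maps $g_t$; (ii) insert and cancel the hybrid term $u_1(a,b_t)$; (iii) identify the realized-response difference as external regret and bound it by $R_1(T)$; (iv) convert the counterfactual-response difference to an $\ell_2$ displacement of $g_t$'s output via bilinearity and the spectral norm; (v) invoke stability of $g$ to bound the summed displacement by $S(T)$; (vi) combine and repeat for player $2$.

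The argument is in essence a triangle inequality, so I expect no deep obstacle; the only place demanding care is the bookkeeping around the ``algorithm as a distribution over functions'' formalism of Section~\ref{sec:react_strat}. One must check that the stability inequality, stated for a perturbation of the last $m$ coordinates of the \emph{realized} history, applies verbatim when that perturbation is the constant action $a$ (it does, since stability is quantified over all admissible perturbations), and that the single expectation in the theorem statement simultaneously absorbs the internal randomness of both algorithms and the randomness of the realized play $(a_t,b_t)_t$ that this randomness induces. One should also confirm that conditioning player $2$'s distribution over functions on player $1$'s realized history — the device used to view $g_t$ as a function of $a_{0:t-1}$ alone — is consistent with the way $b_t$ is actually generated during play, so that the inserted-and-cancelled term $u_1(a,b_t)$ is literally the same object in both brackets.
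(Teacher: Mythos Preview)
Your proposal is correct and matches the paper's proof essentially line for line: the paper inserts the same hybrid term $u_1(a,b_t)$ (written there from player $2$'s side), bounds the ``same action, perturbed opponent response'' piece via Cauchy--Schwarz and the spectral norm of the payoff matrix together with on-average $(m,S(T))$ stability of the opponent's algorithm, and identifies the remaining piece as external regret. The only cosmetic difference is that the paper writes $\|b^\top \P_2\|_2 \le \|b\|_1\|\P_2\|$ and uses $\|b\|_1=1$, whereas you use $\|a\|_2=1$ directly; both are fine.
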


\section{Policy equilibrium}
\label{sec:pol_eq}

\vspace*{-5pt}
Recall that unlike external regret, policy regret captures how other players in a game might react if a player decides to deviate from their strategy. The story is similar when considering different notions of equilibria. In particular Nash equlibria, Correlated equilibria and CCEs can be interpreted in the following way: if player $i$ deviates from the equilibrium play, their utility will not increase no matter how they decide to switch, provided that \emph{all other players continue to play according to the equilibrium}. This sentiment is a reflection of what no external and no swap regret algorithms guarantee. 
Equipped with the knowledge that no policy regret sequences are obtainable in the game setting under reasonable play from all parties, it is natural to
reason how other players would react if player $i$ deviated and what would be the cost of deviation when taking into account possible reactions. 
\par
Let us again consider the 2-player game setup through the view of player $1$. The player believes their opponent might be $m$-memory bounded and decides to proceed by playing according to a no policy regret algorithm. After many rounds of the game, player $1$ has computed an empirical distribution of play $\hat\sigma$ over $\cA:=\cA_1\times\cA_2$. The player is familiar with the guarantees of the algorithm and knows that, if instead, they changed to playing any fixed action $a\in\cA_1$, then the resulting empirical distribution of play $\hat\sigma_a$, where player $2$ has responded accordingly in a memory-bounded way, is such that $\expectation{(a,b)\sim\hat\sigma}{u_1(a,b)} \geq \expectation{(a,b)\sim\hat\sigma_a}{u_1(a,b)} - \epsilon$. This thought experiment suggests that if no policy regret play converges to an equilibrium, then the equilibrium is not only described by the deviations of player $1$, but also through the change in player $2$'s behavior, which is encoded in the distribution $\hat\sigma_a$. Thus, any equilibrium induced by no policy regret play, can be described by tuples of distributions $\{(\sigma,\sigma_a,\sigma_b): (a,b)\in\cA\}$, where $\sigma_a$ is the distribution corresponding to player $1$'s deviation to the fixed action $a\in\cA_1$ and $\sigma_b$ captures player $2$'s deviation to the fixed action $b\in\cA_2$. 
Clearly $\sigma_a$ and $\sigma_b$ are not arbitrary but we still need a formal way to describe how they arise. 

For convenience, lets restrict the memory of player $2$ to be $1$. Thus, what player $1$ believes is that at each round $t$ of the game, they play an action $a_t$ and player $2$ plays a function $f_t : \cA_1\rightarrow \cA_2$, mapping $a_{t-1}$ to $b_t = f_t(a_{t-1})$. Finally, the observed utility is $u_1(a_t,f_t(a_{t-1}))$. The empirical distribution of play, $\hat\sigma$, from the perspective of player $1$, 
is formed from the observed play $(a_t, f_t(a_{t-1}))_{t = 1}^T$.
Moreover, the distribution, $\hat \sigma_a$, that would have occurred if player $1$ chose to play action $a$ on every round is formed from the play $(a,f_t(a))_{t = 1}^T$. In the view of the world of player $1$, the actions taken by player $2$ are actually functions rather than actions in $\cA_2$. This suggests that the equilibrium induced by a no-policy regret play, is a distribution over the functional space defined below.

\begin{definition}
\label{def:func_sapce}
Let $\cF_1 := \{f:\cA_2^{m_1} \rightarrow \cA_1\}$ and $\cF_2 := \{g: \cA_1^{m_2} \rightarrow \cA_2\}$ denote the \emph{functional spaces of play} of players $1$ and $2$, respectively. Denote the product space by $\cF := \cF_1\times\cF_2$.
\end{definition}

Note that when $m_1=m_2=0$, $\cF$ is in a one-to-one correspondence with $\cA$, i.e.~when players believe their opponents are oblivious, we recover the action set studied in standard equilibria. For simplicity, for the remainder of the paper we assume that $m_1 = m_2 = 1$.  However, all of the definitions and results that follow can be extended to the fully general setting of arbitrary $m_1$ and $m_2$; see the supplementary for details. 

Let us now investigate how a distribution $\pi$ over $\cF$ can give rise to a tuple of distributions $(\hat\sigma,\hat\sigma_a,\hat\sigma_b)$. 
We begin by defining the utility of $\pi$ such that it equals the utility of a distribution $\sigma$ over $\cA$ i.e., we want $\expectation{(f,g)\sim\pi}{u_1(f,g)} = \expectation{(a,b)\sim\sigma}{u_1(a,b)}$. Since utilities are not defined for functions, we need an interpretation of $\expectation{(f,g)\sim\pi}{u_1(f,g)}$ which makes sense. We notice that $\pi$ induces a Markov chain with state space $\cA$ in the following way.
\begin{definition}
\label{def:func_proc}
Let $\pi$ be any distribution over $\cF$. Then \emph{$\pi$ induces a Markov process} with transition probabilities $\prob{(a_2,b_2) | (a_{1},b_{1})} = \sum_{(f,g)\in\cF_1\times\cF_2: f(b_{1}) = a_2, g(a_{1}) = b_{2}} \pi(f,g).$ We associate with this Markov process the transition matrix $\M \in \mathbb{R}^{|\cA|\times|\cA|}$, with $\M_{x_1,x_2} = \prob{x_2 | x_1}$ where $x_i = (a_i,b_i)$.
\end{definition}
Since every Markov chain with a finite state space has a stationary distribution, we think of utility of $\pi$ as the utility of a particular stationary distribution $\sigma$ of $\M$. How we choose $\sigma$ among all stationary distributions is going to become clear later, but for now we can think about $\sigma$ as the distribution which maximizes the utilities of both players. Next, we need to construct $\sigma_a$ and $\sigma_b$, which capture the deviation in play, when player $1$ switches to action $a$ and player $2$ switches to action $b$. The no-policy regret guarantee can be interpreted as $\expectation{(f,g)\sim\pi}{u_1(f,g)} \geq \expectation{(f,g)\sim\pi}{u_1(a,g(a))}$ i.e., if player $1$ chose to switch to a fixed action (or equivalently, the constant function which maps everything to the action $a\in\cA_1$), then their utility should not increase. Switching to a fixed action $a$, changes $\pi$ to a new distribution $\pi_a$ over $\cF$. This turns out to be a product distribution which also induces a Markov chain.
\begin{definition}
\label{def:proc_dev}
Let $\pi$ be any distribution over $\cF$. Let $\delta_a$ be the distribution over $\cF_1$ putting all mass on the constant function mapping all actions $b\in\cA_2$ to the fixed action $a\in\cA_1$. Let $\pi_{\cF_2}$ be the marginal of $\pi$ over $\cF_2$. The \emph{distribution resulting from player 1 switching to playing a fixed action $a \in \Aa$}, is denoted as $\pi_a = \delta_a\times\pi_{\cF_2}$. This distribution induces a Markov chain with transition probabilities $\prob{(a,b_2)|(a_1,b_1)} = \sum_{(f,g):g(a_1) = b_2}\pi(f,g)$ and \emph{the transition matrix of this Markov process is denoted by $\M_a$}. The distribution $\pi_b$ and matrix $\M_b$ are defined similarly for player $2$.
\end{definition}
Since the no policy regret algorithms we work with do not directly induce distributions over the functional space $\cF$ but rather only distributions over the action space $\cA$, we would like to state all of our utility inequalities in terms of distributions over $\cA$. Thus, we would like to check if there is a stationary distribution $\sigma_a$ of $\M_a$ such that $\expectation{(f,g)\sim\pi}{u_1(a,g(a))} = \expectation{(a,b)\sim\sigma_a}{u_1(a,b)}$. 
This is indeed the case as verified by the following theorem.

\begin{theorem}
\label{thm_main:pol_reg_cons}
Let $\pi$ be a distribution over the product of function spaces $\cF_1\times\cF_2$. There exists a stationary distribution $\sigma_a$ of the Markov chain $\M_a$ for any fixed $a\in\cA_1$ such that $\expectation{(a,b)\sim\sigma_a}{u_1(a,b)} = \expectation{(f,g)\sim\pi}{u_1(a,g(a))}$. Similarly, for every fixed action $b\in\cA_2$, there exists a stationary distribution $\sigma_b$ of $\M_b$ such that $\expectation{(a,b)\sim\sigma_b}{u_2(a,b)} = \expectation{(f,g)\sim\pi}{u_2(f(b),b)}$. 
\end{theorem}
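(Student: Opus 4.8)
The plan is to prove the theorem by exhibiting $\sigma_a$ explicitly rather than invoking an abstract existence result. The key observation is that $\M_a$ has a very rigid structure: by Definition~\ref{def:proc_dev}, from any state $(a_1,b_1)$ the chain moves to $(a,b_2)$ with probability $\sum_{g\in\cF_2:\,g(a_1)=b_2}\pi_{\cF_2}(g)$, where $\pi_{\cF_2}$ is the $\cF_2$-marginal of $\pi$. Thus after a single step the chain is supported entirely on the ``slice'' $\{a\}\times\cA_2$, and moreover the transition probabilities out of $(a_1,b_1)$ do not depend on $b_1$ at all. Restricted to $\{a\}\times\cA_2$ the chain therefore has identical rows: every state $(a,b_1)$ moves to $(a,b_2)$ with the same probability $\nu_a(b_2):=\sum_{g\in\cF_2:\,g(a)=b_2}\pi_{\cF_2}(g)$, i.e.\ $\nu_a$ is the pushforward of $\pi_{\cF_2}$ under the evaluation map $g\mapsto g(a)$.

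First I would check that $\M_a$ is a genuine row-stochastic matrix, which is immediate since $\sum_{b_2}\sum_{g:\,g(a_1)=b_2}\pi_{\cF_2}(g)=\sum_{g}\pi_{\cF_2}(g)=1$. Then I would take as the candidate stationary distribution the measure $\sigma_a\in\Delta\cA$ defined by $\sigma_a(a,b)=\nu_a(b)$ for all $b\in\cA_2$ and $\sigma_a(a',b)=0$ for $a'\neq a$, and verify $\sigma_a\M_a=\sigma_a$ coordinatewise. For a coordinate $(a',b')$ with $a'\neq a$ both sides vanish; for a coordinate $(a,b')$ the left-hand side is $\big(\sum_{b_1}\sigma_a(a,b_1)\big)\,\nu_a(b')=\nu_a(b')$, because $\sum_{b_1}\sigma_a(a,b_1)=\sum_{g}\pi_{\cF_2}(g)=1$. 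Hence $\sigma_a$ is stationary (and in fact it is the unique stationary distribution, since the only recurrent class is $\{(a,b):\nu_a(b)>0\}$, though we do not need uniqueness).

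It then remains to check the utility identity. Since $\sigma_a$ is supported on $\{a\}\times\cA_2$,
\begin{align*}
\expectation{(a',b)\sim\sigma_a}{u_1(a',b)}
&= \sum_{b\in\cA_2}\nu_a(b)\,u_1(a,b)
 = \sum_{b\in\cA_2}\sum_{g\in\cF_2:\,g(a)=b}\pi_{\cF_2}(g)\,u_1(a,b)\\
&= \sum_{g\in\cF_2}\pi_{\cF_2}(g)\,u_1(a,g(a))
 = \expectation{(f,g)\sim\pi}{u_1(a,g(a))},
\end{align*}
where the last equality uses that $u_1(a,g(a))$ does not depend on $f$, so replacing $\pi$ by $\pi_{\cF_2}$ is harmless. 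The statement for $\sigma_b$ and player~$2$ is entirely symmetric: $\M_b$ collapses onto $\cA_1\times\{b\}$ after one step with rows equal to $\nu_b(a):=\sum_{f\in\cF_1:\,f(b)=a}\pi_{\cF_1}(f)$, the pushforward of the $\cF_1$-marginal under $f\mapsto f(b)$, and the same computation gives $\expectation{(a,b')\sim\sigma_b}{u_2(a,b')}=\expectation{(f,g)\sim\pi}{u_2(f(b),b)}$.

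I do not expect a serious obstacle here; the proof is essentially bookkeeping. The two points that require care are (i) recognizing that $\sigma_a$ must pin its first coordinate to $a$, so that the relevant dynamics take place on the slice $\{a\}\times\cA_2$ where $\M_a$ acts as a rank-one (``all rows equal'') stochastic matrix, and (ii) being careful that the expectation on the right-hand side of the claimed identity depends on $\pi$ only through $\pi_{\cF_2}$ (resp.\ $\pi_{\cF_1}$), so that the pushforward construction lines up exactly. One could phrase (i) more abstractly by citing the standard fact that a stochastic matrix with identical rows has that common row as its unique stationary distribution, but writing the verification out directly is equally short.
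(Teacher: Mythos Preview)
Your proof is correct and is essentially the same as the paper's: you construct the stationary distribution explicitly as $\sigma_a(a,b)=\sum_{g:g(a)=b}\pi_{\cF_2}(g)$ supported on $\{a\}\times\cA_2$, verify stationarity coordinatewise using that the rows of $\M_a$ restricted to this slice are identical, and then check the utility identity directly. The only cosmetic difference is that the paper writes the mass as $\sum_{f,g:g(a)=b}\pi(f,g)$ rather than passing to the marginal $\pi_{\cF_2}$, but of course these coincide.
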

The proof of this theorem is constructive and can be found in the supplementary. 
With all of this notation we are ready to formally describe what no-policy regret play promises in the game setting in terms of an equilibrium. 
\begin{definition}
\label{def:pol_eq}
A distribution $\pi$ over $\cF_1\times\cF_2$ is a \emph{policy equilibrium} if for all fixed actions $a\in\mathcal{A}_1$ and $b\in\cA_2$, which generate Markov chains $\M_a$ and $\M_b$ respectively, with stationary distributions $\sigma_a$ and $\sigma_b$ from Theorem~\ref{thm_main:pol_reg_cons}, there exists a stationary distribution $\sigma$ of the Markov chain $\M$ induced by $\pi$ such that:
\begin{equation}
\label{eq:yes}
\begin{aligned}
\expectation{(a,b)\sim\sigma}{u_1(a,b)} &\geq \expectation{(a,b)\sim\sigma_a}{u_1(a,b)}\\
\expectation{(a,b)\sim\sigma}{u_2(a,b)} &\geq \expectation{(a,b)\sim\sigma_b}{u_2(a,b)}.
\end{aligned}
\end{equation}
\end{definition}
In other words, $\pi$ is a policy equilibrium if there exists a stationary distribution $\sigma$ of the Markov chain corresponding to $\pi$, such that, when actions are drawn according to $\sigma$, no player has incentive to change their action. For a simple example of a policy equilibrium see Section~\ref{sec:simp_example} in the supplementary.

\vspace*{-3pt}
\subsection{Convergence to the set of policy equilibria}
\vspace*{-2pt}
We have tried to formally capture the notion of equilibria in which player $1$'s deviation would lead to a reaction from player $2$ and vice versa in Definition~\ref{def:pol_eq}. This definition is inspired by the counter-factual guarantees of no policy regret play and we would like to check that if players' strategies yield sublinear policy regret then the play converges to a policy equilibrium. Since the definition of sublinear policy regret does not include a distribution over functional spaces but only works with empirical distributions of play, we would like to present our result in terms of distributions over the action space $\cA$. Thus we begin by defining the set of all product distributions $\sigma\times\sigma_a\times\sigma_b$, induced by policy equilibria $\pi$ as described in the previous subsection. Here $\sigma_a$ and $\sigma_b$ represent the deviation in strategy if player $1$ changed to playing the fixed action $a\in\cA_1$ and player $2$ changed to playing the fixed action $b\in\cA_2$ respectively as constructed in Theorem~\ref{thm_main:pol_reg_cons}.

\begin{definition}
\label{def:pol_eq_set}
For a policy equilibrium $\pi$, let $S_\pi$ be the set of all stationary distributions which satisfy the equilibrium inequalities~\eqref{eq:yes}, 
$S_{\pi}  := \{\sigma\times\sigma_a\times\sigma_b: (a,b)\in\cA\}$ \removed{\mdnote{this notation doesn't really work -- what is the union of distributions?}}. Define $S = \bigcup_{\pi\in\Pi} S_{\pi }$, where $\Pi$ is the set of all policy equilibria.
\end{definition}
Our main result states that the sequence of empirical product distributions formed after $T$ rounds of the game $\hat\sigma\times\hat\sigma_a\times\hat\sigma_b$ is going to converge to $S$. Here $\hat\sigma_a$ and $\hat\sigma_b$ denote the distributions of deviation in play, when player $1$ switches to the fixed action $a\in\cA_1$ and player $2$ switches to the fixed action $b\in\cA_2$ respectively. We now define these distributions formally.
\begin{definition}
\label{def:emp_distr_dev}
Suppose player $1$ is playing an algorithm with output at time $t$ given by $f_t:\cA_2^t\rightarrow \Delta\cA_1$ i.e. $p_t^1 = f_t(b_{0:t-1})$. Similarly, suppose player $2$ is playing an algorithm with output at time $t$ given by $p_t^2 = g_t(a_{0:t-1})$. The empirical distribution at time $T$ is $\hat\sigma := \frac{1}{T}\sum_{t = 1}^T p_t$, where $p_t = p_t^1\times p_t^2$ is the product distribution over $\cA$ at time $t$. Further let $(p^2_a)_t = g_t(a_{0:t-m},a, \ldots,a)$ denote the distribution at time $t$, provided that player $1$ switched their strategy to the constant action $a\in\cA_1$. Let $\delta_a$ denote the distribution over $\cA_1$ which puts all the probability mass on action $a$. Let $(p_a)_t = \delta_a\times(p^2_a)_t$ be the product distribution over $\cA$, corresponding to the change of play at time $t$. Denote by $\hat\sigma_a = \frac{1}{T}\sum_{t = 1}^T (p_a)_t$ the empirical distribution corresponding to the change of play. The distribution $\hat\sigma_b$ is defined similarly.
\end{definition}

Suppose that $f_t$ and $g_t$ are no-policy regret algorithms, then our main result states that the sequence $(\hat\sigma\times\hat\sigma_a\times\hat\sigma_b)_T$ converges to the set $S$.\\ \vspace*{-9pt}

\begin{theorem}
\label{thm_main:main_result}
If the algorithms played by player $1$ in the form of $f_t$ and player $2$ in the form of $g_t$ give sub-linear policy regret sequences, then the sequence of product distributions $(\hat\sigma\times\hat\sigma_a\times\hat\sigma_b)_{T=1}^\infty$ converges weakly to the set $S$.
\end{theorem}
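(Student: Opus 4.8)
The plan is to argue by compactness, reducing the statement to the assertion that every subsequential weak limit of the empirical tuples lies in $S$, and then constructing at each such limit a policy equilibrium that witnesses membership in $S$. Since the set of tuples $(\sigma,(\sigma_a)_{a\in\cA_1},(\sigma_b)_{b\in\cA_2})$ of distributions over the finite set $\cA$ is compact and weak convergence there is just convergence in total variation, the claim is equivalent to the distance from the empirical tuple to $S$ tending to $0$ (in probability over the players' internal randomization). I would first check that $S$ is closed: $\Pi$ is a closed subset of the compact set $\Delta\cF$, because if $\pi_n\to\pi$ and $\sigma_n$ is a stationary distribution of $\M_{\pi_n}$ satisfying the equilibrium inequalities, then any subsequential limit of $\sigma_n$ is a stationary distribution of $\M_\pi$ still satisfying them (stationarity and the inequalities are closed conditions, and the deviation distributions $\sigma_a,\sigma_b$ supplied by Theorem~\ref{thm_main:pol_reg_cons} depend continuously on $\pi$); and $S$ is the image of the compact set $\{(\pi,\sigma):\pi\in\Pi,\ \sigma\text{ a stationary distribution of }\M_\pi\text{ satisfying the equilibrium inequalities}\}$ under a continuous map. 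Hence it suffices to show that every subsequential weak limit of $(\hat\sigma\times\hat\sigma_a\times\hat\sigma_b)_T$ lies in $S$.

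Fix a subsequence along which $\hat\sigma\to\sigma^\star$, $\hat\sigma_a\to\sigma^\star_a$ and $\hat\sigma_b\to\sigma^\star_b$. With $m_1=m_2=1$, write $F_t:\cA_2\to\Delta\cA_1$ for the round-$t$ response map of player~$1$ to player~$2$'s previous action (a function of the realized history), so that $p_t^1=F_t(b_{t-1})$, and likewise $G_t:\cA_1\to\Delta\cA_2$ with $p_t^2=G_t(a_{t-1})$ and $(p_a^2)_t=G_t(a)$. Let $\pi_t\in\Delta\cF$ be obtained by drawing $f(b')\sim F_t(b')$ for every $b'\in\cA_2$ and $g(a')\sim G_t(a')$ for every $a'\in\cA_1$, all independently, and set $\pi^{(T)}=\frac1T\sum_{t=1}^T\pi_t$. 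By construction, under the $\cF_2$-marginal of $\pi^{(T)}$ the law of $g(a)$ is $\frac1T\sum_t G_t(a)=\frac1T\sum_t(p_a^2)_t$, which is exactly the $\cA_2$-marginal of $\hat\sigma_a$; hence the deviation chain $\M_a$ of $\pi^{(T)}$ gets absorbed into $\{a\}\times\cA_2$ with player-$2$ marginal $\frac1T\sum_t(p_a^2)_t$, its unique stationary distribution is $\hat\sigma_a$, and this is the distribution produced by Theorem~\ref{thm_main:pol_reg_cons}; the analogous statement holds for $\hat\sigma_b$. Passing to a further subsequence so that $\pi^{(T)}\to\pi^\star$ in the compact set $\Delta\cF$, continuity of $\pi\mapsto\M_\pi$ and of the construction of Theorem~\ref{thm_main:pol_reg_cons} gives that $\sigma^\star_a$ is the stationary distribution of the deviation chain of $\pi^\star$ with $\expectation{(a,b)\sim\sigma^\star_a}{u_1(a,b)}=\expectation{(f,g)\sim\pi^\star}{u_1(a,g(a))}$, and symmetrically for $\sigma^\star_b$.

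The equilibrium inequalities come from the sublinear-policy-regret hypothesis: rewriting Definition~\ref{def:pol_regret} in the game setting with $m=1$, using bilinearity of the $u_i$ and a bounded-difference concentration step relating realized actions to mixed strategies, $\expectation{(a,b)\sim\hat\sigma_a}{u_1(a,b)}-\expectation{(a,b)\sim\hat\sigma}{u_1(a,b)}=\frac1T P(T,a)+o(1)\le o(1)$ for every $a\in\cA_1$, so in the limit $\expectation{(a,b)\sim\sigma^\star}{u_1(a,b)}\ge\expectation{(a,b)\sim\sigma^\star_a}{u_1(a,b)}$, and symmetrically $\expectation{(a,b)\sim\sigma^\star}{u_2(a,b)}\ge\expectation{(a,b)\sim\sigma^\star_b}{u_2(a,b)}$. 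Consequently, once we know that $\sigma^\star$ is itself a stationary distribution of $\M_{\pi^\star}$, Definition~\ref{def:pol_eq} certifies that $\pi^\star$ is a policy equilibrium with $\sigma^\star$ satisfying \eqref{eq:yes}, so $(\sigma^\star,(\sigma^\star_a),(\sigma^\star_b))\in S_{\pi^\star}\subseteq S$, and we are done.

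The remaining --- and main --- obstacle is to prove that $\sigma^\star$ is stationary for $\M_{\pi^\star}$, equivalently that $\|\hat\sigma\,\M_{\pi^{(T)}}-\hat\sigma\|_1\to 0$ along the subsequence. Here the plan is: (i) by a bounded-difference/martingale concentration argument, $\hat\sigma=\frac1T\sum_t p_t^1\times p_t^2$ lies, with high probability, within $o(1)$ in $\ell_1$ of the empirical occupation measure $\frac1T\sum_t\delta_{(a_t,b_t)}$ of the realized trajectory; (ii) that occupation measure is, up to an $O(1/T)$ boundary correction, an exact stationary distribution of the empirical one-step transition matrix of the realized trajectory; and (iii) since, conditioned on the history, $a_{t+1}$ and $b_{t+1}$ are independent with laws depending only on $b_t$ and only on $a_t$, the empirical transition kernel out of each state is --- on states visited a constant fraction of the rounds, states visited $o(T)$ times being negligible --- close by concentration to a kernel of the form $\M_\pi$ for a suitable $\pi\in\Delta\cF$. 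Controlling the ``decorrelation'' error terms in (iii) uniformly over the relevant states is the technical heart of the argument; it is likely cleanest to define $\pi^{(T)}$ directly through the realized transition frequencies rather than through the round-wise maps $F_t,G_t$, so that (ii) already yields exact stationarity and only the approximation in (iii) needs to be bounded, possibly after an additional time-averaging step. Once this is in hand, the subsequential limit of $\M_{\pi^{(T)}}$ equals $\M_{\pi^\star}$, whence $\sigma^\star\M_{\pi^\star}=\sigma^\star$, and the proof concludes through the reduction above.
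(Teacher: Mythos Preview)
Your overall architecture matches the paper: compactness reduces the claim to subsequential limits; the equilibrium inequalities \eqref{eq:yes} hold in the limit directly from the sublinear-policy-regret hypothesis (this is the paper's Theorem~\ref{thm:conv_thm}); and one must exhibit a $\pi^\star\in\Delta\cF$ for which $\sigma^\star$ is stationary and whose deviation distributions from Theorem~\ref{thm_main:pol_reg_cons} are $\sigma_a^\star,\sigma_b^\star$. Your handling of the deviation pieces through the round-wise counterfactual maps $F_t,G_t$ is clean and arguably nicer than the paper's: under your $\pi^{(T)}$ one has $\Pr_{\pi^{(T)}}[g(a)=b]=\frac1T\sum_t G_t(a)(b)=\hat\sigma_a(a,b)$ exactly.

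The genuine gap is the step you yourself flag as the main obstacle. Your $\pi^{(T)}$ built from $F_t,G_t$ does \emph{not} make $\hat\sigma$ stationary for $\M_{\pi^{(T)}}$: the induced transition out of $(a_1,b_1)$ is $\frac1T\sum_t F_t(b_1)(a_2)\,G_t(a_1)(b_2)$, and there is no algebraic reason $\hat\sigma=\frac1T\sum_t p_t^1\times p_t^2$ should be invariant for that. Your proposed repair --- pass through the realized one-step empirical kernel and control it by concentration --- runs into a structural problem. A transition kernel on $\cA$ is of the form $\M_\pi$ for some $\pi\in\Delta\cF$ only if, for every $(a_1,b_1)$, the $\cA_1$-marginal of the out-distribution depends on $b_1$ alone (it must equal $\Pr_\pi[f(b_1)=\cdot]$), and symmetrically for the $\cA_2$-marginal. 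The realized empirical kernel will not have this property in general, because the law of $a_t$ depends on the full history $b_{0:t-1}$, and averaging over visits to a fixed state $(a_1,b_1)$ can leave a residual dependence on $a_1$. So step (iii) is not merely ``technical'': it requires producing a different candidate $\pi$ altogether.

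The paper resolves this not by concentration but by an algebraic construction that makes stationarity a one-line identity. It defines the empirical chain $\hat\M$ by
\[
\hat\M_{x,y}=\frac{\sum_t p_t(x)\,p_t(y)}{\sum_t p_t(x)}
\]
(Definition~\ref{def:emp_proc}); then $(\hat\sigma^\top\hat\M)_y=\frac1T\sum_t p_t(y)\sum_x p_t(x)=\hat\sigma(y)$ immediately (Lemma~\ref{lem:stationary}). It then manufactures $\hat\pi$ as the product, over states $x\in\cA$, of the row-distributions $\hat\M_{x,\cdot}$ (Definition~\ref{def:emp_func_distr}) and checks that this $\hat\pi$ induces exactly $\hat\M$ (Lemma~\ref{lem:emp_func_distr}). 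No trajectory-level concentration or ``decorrelation'' control is needed for the stationarity part; all limits are of the finite-dimensional objects $\hat\pi,\hat\M,\hat\sigma$. Your closing suggestion to ``define $\pi^{(T)}$ directly through the realized transition frequencies'' is pointing in the right direction, but the specific choice $\hat\M_{x,y}\propto\sum_t p_t(x)p_t(y)$ --- which is \emph{not} the realized one-step kernel --- is the idea that makes everything click, and it is what your outline is missing.
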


In particular if both players are playing MWU or Exp3, we know that they will have sublinear policy regret. Not surprisingly, we can show something slightly stronger as well. Let $\tilde\sigma$, $\tilde\sigma_a$ and $\tilde\sigma_b$ denote the empirical distributions of observed play corresponding to $\hat\sigma$, $\hat\sigma_a$ and $\hat\sigma_b$, i.e.\ $\tilde\sigma = \frac{1}{T} \delta_t$, where $\delta_t$ denotes the Dirac distribution, putting all weight on the played actions at time $t$. Then these empirical distributions also converge to $S$ almost surely.

\vspace*{-3pt}
\subsection{Sketch of proof of the main result}
\vspace*{-3pt}

The proof of Theorem~\ref{thm_main:main_result} has three main steps. The first step defines the natural empirical Markov chains $\hat\M$, $\hat\M_a$ and $\hat\M_b$ from the empirical play $(p_t)_{t = 1}^\infty$ (see Definition~\ref{def:emp_proc}) and shows that the empirical distributions $\hat\sigma$, $\hat\sigma_a$ and $\hat\sigma_b$ are stationary distributions of the respective Markov chains. The latter is done in Lemma~\ref{lem:stationary}. The next step is to show that the empirical Markov chains converge to Markov chains $\M$, $\M_a$ and $\M_b$ induced by some distribution $\pi$ over $\cF$. In particular, we construct an empirical distribution $\hat\pi$ and distributions $\hat\pi_a$ and $\hat\pi_b$ corresponding to player's deviations (see Definition~\ref{def:emp_func_distr}), and show that these induce the Markov chains $\hat\M$, $\hat\M_a$ and $\hat\M_b$ respectively (Lemma~\ref{lem:emp_func_distr}). The distribution $\pi$ we want is now the limit of the sequence $(\hat\pi)_T$. The final step is to show that $\pi$ is a policy equilibrium. The proof goes by contradiction. Assume $\pi$ is not a policy equilibrium, this implies that no stationary distribution of $\M$ and corresponding stationary distributions of $\M_a$ and $\M_b$ can satisfy inequalities~\eqref{eq:yes}. Since the empirical distributions $\hat\sigma$, $\hat\sigma_a$ and $\hat\sigma_b$ of the play satisfies inequalities~\eqref{eq:yes} up to an $o(1)$ additive factor, we can show, in Theorem~\ref{thm:conv_thm}, that in the limit, the policy equilibrium inequalities are exactly satisfied. Combined with the convergence of $\hat\M$, $\hat\M_a$ and $\hat\M_b$ to  $\M$, $\M_a$ and $\M_b$, respectively, this implies that stationary distributions of $\M$, $\M_a$ and $\M_b$, satisfying~\eqref{eq:yes}, giving a contradiction.

We would like to emphasize that the convergence guarantee of Theorem~\ref{thm_main:main_result} does not rely on there being a unique stationary distribution of the empirical Markov chains $\hat\M$, $\hat\M_a$ and $\hat\M_b$ or their respective limits $\M, \M_a, \M_b$. Indeed, Theorem~\ref{thm_main:main_result} shows that any limit point of $\{(\hat\sigma,\hat\sigma_a,\hat\sigma_b)_T\}_{T=1}^\infty$ satisfies the conditions of Definition~\ref{def:pol_eq}.   The proof does not require that any of the respective Markov chains have a unique stationary distribution, but rather requires only that $\hat\sigma$ has sublinear policy regret. We would also like to remark that $\{(\hat\sigma,\hat\sigma_a,\hat\sigma_b)_T\}_{T=1}^\infty$ need not have a unique limit and our convergence result only guarantees that the sequence is going to the set $S$. This is standard when showing that any type of no regret play converges to an equilibrium, see for example~\cite{stoltz2007learning}.

\vspace*{-3pt}
\subsection{Relation of policy equlibria to CCEs}
\vspace*{-2pt}
So far we have defined a new class of equilibria and shown that they correspond to no policy regret play. Furthermore, we know that if both players in a 2-player game play stable no external regret algorithms, then their play also has sublinear policy regret. It is natural to ask if every CCE is also a policy equilibrium: if $\sigma$ is a CCE, is there a corresponding policy equilibrium $\pi$ which induces a Markov chain $\M$ for which $\sigma$ is a stationary distribution satisfying~\eqref{eq:yes}? We show that the answer to this question is positive:

\begin{theorem}
\label{lem_main:stable_ext_reg1}
For any CCE $\sigma$ of a 2-player game $\mathcal{G}$, there exists a policy-equilibrium $\pi$ which induces a Markov chain $\M$ with stationary distribution $\sigma$.
\end{theorem}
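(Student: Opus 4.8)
The plan is to realize $\sigma$ by a distribution $\pi$ supported entirely on pairs of \emph{constant} functions. Since we work in the case $m_1=m_2=1$, a point of $\cF_1$ is a map $\cA_2\to\cA_1$ and a point of $\cF_2$ is a map $\cA_1\to\cA_2$; for $a\in\cA_1$ let $\bar a\in\cF_1$ be the map that is identically equal to $a$, and define $\bar b\in\cF_2$ analogously for $b\in\cA_2$. Set $\pi(\bar a,\bar b)=\sigma(a,b)$ for every $(a,b)\in\cA$ and $\pi=0$ on every other (in particular, every non-constant) pair. Definition~\ref{def:func_proc} allows an arbitrary $\pi$ over $\cF$, so there is no need for $\pi$ to be a product distribution; here $\pi$ simply inherits the correlation structure of $\sigma$.

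Next I would read off the three induced Markov chains. Because $\bar a(b_1)=a$ and $\bar b(a_1)=b$ regardless of the current state $(a_1,b_1)$, Definition~\ref{def:func_proc} gives $\M_{(a_1,b_1),(a_2,b_2)}=\sum_{(a,b):\,a=a_2,\,b=b_2}\sigma(a,b)=\sigma(a_2,b_2)$, i.e.\ every row of $\M$ equals $\sigma$; hence $\M$ has a unique stationary distribution, namely $\sigma$. For the deviation chain, Definition~\ref{def:proc_dev} gives $\pi_a=\delta_a\times\pi_{\cF_2}$, and the $\cF_2$-marginal of $\pi$ places mass $\sum_{a'}\sigma(a',b)$ on $\bar b$; so, again by constancy, every row of $\M_a$ is the distribution on $\cA$ that puts all of player $1$'s mass on $a$ and distributes coordinate $2$ as the $\cA_2$-marginal of $\sigma$. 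Thus $\M_a$ also has a unique stationary distribution $\sigma_a$, namely that same distribution, and one checks that $\expectation{(a,b)\sim\sigma_a}{u_1(a,b)}=\sum_{b}\big(\sum_{a'}\sigma(a',b)\big)u_1(a,b)=\expectation{(f,g)\sim\pi}{u_1(a,g(a))}$, which is exactly the identity Theorem~\ref{thm_main:pol_reg_cons} asks of $\sigma_a$. Since the stationary distribution of $\M_a$ is unique, this $\sigma_a$ is unambiguously the one Theorem~\ref{thm_main:pol_reg_cons} produces; the chain $\M_b$ and its stationary distribution $\sigma_b$ (supported on $\{(a,b):a\in\cA_1\}$ with $(a,b)$-mass $\sum_{b'}\sigma(a,b')$) are handled symmetrically.

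It then remains to verify the policy-equilibrium inequalities~\eqref{eq:yes} for this triple. Plugging in the formulas, the first inequality reads $\expectation{(a,b)\sim\sigma}{u_1(a,b)}\ge\sum_{(a',b)\in\cA}\sigma(a',b)\,u_1(a,b)$, which is precisely the coarse-correlated-equilibrium condition for player $1$ deviating to the fixed action $a$; the second is the CCE condition for player $2$ deviating to $b$. As $\sigma$ is a CCE, both hold for all $a\in\cA_1$ and $b\in\cA_2$, so $\pi$ is a policy equilibrium whose induced chain $\M$ has $\sigma$ as a stationary distribution, as required. I expect the one point needing care to be the identification in the previous paragraph — confirming that the stationary distributions of $\M_a$ and $\M_b$ I write down coincide with the ones singled out by Theorem~\ref{thm_main:pol_reg_cons} — which is exactly what uniqueness of the stationary distributions of these constant-row chains settles; everything else is a direct unwinding of Definitions~\ref{def:func_proc}--\ref{def:pol_eq}.
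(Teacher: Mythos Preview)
Your proof is correct and takes a genuinely different route from the paper. The paper's argument is indirect and algorithmic: it constructs, for a given CCE $\sigma$, an on-average stable no-external-regret algorithm whose empirical play converges to $\sigma$ (Lemma~\ref{lem:stable_ext_reg}), invokes Theorem~\ref{thm_main:stable} to conclude this play is also no-policy-regret, and then appeals to the convergence result Theorem~\ref{thm_main:main_result} to conclude that $\sigma$ sits inside the set $S$ coming from some policy equilibrium. Your argument instead writes down $\pi$ explicitly --- supported on pairs of constant functions with $\pi(\bar a,\bar b)=\sigma(a,b)$ --- and verifies Definition~\ref{def:pol_eq} directly: the induced $\M$ has all rows equal to $\sigma$, each $\M_a$ has all rows equal to $\delta_a\times\sigma_{\cA_2}$, and the inequalities~\eqref{eq:yes} collapse exactly to the CCE conditions. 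Your identification of $\sigma_a$ with the distribution from Theorem~\ref{thm_main:pol_reg_cons} is clean (and in fact matches the explicit formula $\tilde\sigma_{(a,\tilde b)}=\sum_{f,g:g(a)=\tilde b}\pi(f,g)$ in that theorem's proof), so uniqueness is not even strictly needed, though it does make the argument tidy.

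What each buys: the paper's route reinforces the dynamic picture --- CCEs arise as limits of stable no-regret play, hence land in the policy-equilibrium set --- but it drags in the fairly delicate construction of Lemma~\ref{lem:stable_ext_reg} (mini-batching, an oracle for $\hat\sigma_t$, and a case analysis for stability). Your route is self-contained, needs none of the convergence machinery, and as a bonus exhibits a canonical policy equilibrium over constant functions for every CCE, making the containment $\text{CCE}\subseteq\{\text{stationary distributions of policy equilibria}\}$ essentially a one-line unwinding of the definitions.
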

To prove this, we show that for any CCE we can construct stable no-external regret algorithm which converge to it, and so since stable no-external regret algorithms always converge to policy equilibria (Theorem~\ref{thm_main:stable}), this implies the CCE is also a policy equilibrium.

However, we show the converse is not true: policy equilibria can give rise to behavior which is not a CCE.    
Our proof appeals to a utility sequence which is similar in spirit to the one in Theorem~\ref{thm_main:reg_incomp}, but is adapted to the game setting. 

\begin{theorem}
\label{lem_main:stable_ext_reg2}
There exists a 2-player game $\mathcal{G}$ and product distributions $\sigma\times\sigma_a\times\sigma_b \in S$ (where $S$ is defined in Definition~\ref{def:pol_eq_set} as the possible distributions of play from policy equilibria), such that $\sigma$ is not a CCE of $\mathcal{G}$.
\end{theorem}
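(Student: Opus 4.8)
The plan is to exhibit one concrete $2$-player game together with one concrete policy equilibrium whose stationary distribution $\sigma$ fails the CCE inequalities. I would take a Prisoner's-Dilemma-type game with $\cA_1=\cA_2=\{C,D\}$ and payoffs rescaled to lie in $[0,1]$: $u_1(C,C)=u_2(C,C)=2/3$, $u_1(D,D)=u_2(D,D)=1/3$, $u_1(D,C)=u_2(C,D)=1$, and $u_1(C,D)=u_2(D,C)=0$. Then $D$ strictly dominates $C$ for each player, so perpetual cooperation $(C,C)$ is not a CCE, yet $(C,C)$ strictly Pareto-dominates the ``punishment'' outcome $(D,D)$. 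This is the game analogue of the utility sequence behind Theorem~\ref{thm_main:reg_incomp}: perpetual cooperation plays the role of the constant ``repeat'' sequence there, in that it has high utility (hence no policy regret once the opponent reacts) but incurs linear external regret, so it is unattainable by no-external-regret play while remaining sustainable as a policy equilibrium.

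Since $m_1=m_2=1$, we have $\cF_1=\{f:\cA_2\to\cA_1\}$ and $\cF_2=\{g:\cA_1\to\cA_2\}$, and I would let $\pi=\delta_{(f^\ast,g^\ast)}$ be the point mass on the grim-trigger pair defined by $f^\ast(C)=C$, $f^\ast(D)=D$, $g^\ast(C)=C$, $g^\ast(D)=D$. First I would compute the Markov chain $\M$ induced by $\pi$ through Definition~\ref{def:func_proc}: from a state $(a_1,b_1)$ one deterministically moves to $(f^\ast(b_1),g^\ast(a_1))$, so both $(C,C)$ and $(D,D)$ are absorbing, and $\sigma:=\delta_{(C,C)}$ is a stationary distribution of $\M$. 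Next, for each fixed action of player~$1$ I would identify the stationary distribution guaranteed by Theorem~\ref{thm_main:pol_reg_cons}: the chain $\M_C$ from Definition~\ref{def:proc_dev} drives every state into the absorbing state $(C,C)$, giving $\sigma_C=\delta_{(C,C)}$ with $\expectation{(a,b)\sim\sigma_C}{u_1(a,b)}=u_1(C,C)=\expectation{(f,g)\sim\pi}{u_1(C,g(C))}$; and $\M_D$ drives every state into the absorbing state $(D,D)$, giving $\sigma_D=\delta_{(D,D)}$ with $\expectation{(a,b)\sim\sigma_D}{u_1(a,b)}=u_1(D,D)=\expectation{(f,g)\sim\pi}{u_1(D,g(D))}$. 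The computations for player~$2$ are symmetric, giving $\delta_{(C,C)}$ for its deviation to $C$ and $\delta_{(D,D)}$ for its deviation to $D$.

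With these chains in hand I would verify the two claims. For the policy-equilibrium conditions~\eqref{eq:yes} I would use the single stationary distribution $\sigma=\delta_{(C,C)}$: $\expectation{(a,b)\sim\sigma}{u_1(a,b)}=2/3\ge\max\{u_1(C,C),u_1(D,D)\}=2/3$, and likewise $\expectation{(a,b)\sim\sigma}{u_2(a,b)}=2/3\ge\max\{u_2(C,C),u_2(D,D)\}=2/3$, so $\sigma$ withstands every deviation of both players; hence $\pi$ is a policy equilibrium and the triples $\sigma\times\sigma_a\times\sigma_b$ lie in $S$. For the failure of the CCE property I would note that player~$1$'s deviation to the fixed action $D$, with player~$2$ playing the $\sigma$-marginal $\delta_C$, yields $\expectation{(a,b)\sim\sigma}{u_1(D,b)}=u_1(D,C)=1>2/3=\expectation{(a,b)\sim\sigma}{u_1(a,b)}$, so $\sigma$ violates a CCE inequality. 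This proves the theorem.

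The main (essentially the only) delicate point is confirming that the deviation distributions $\sigma_a,\sigma_b$ appearing in~\eqref{eq:yes} are exactly the ones supplied by Theorem~\ref{thm_main:pol_reg_cons} --- that is, each of $\M_C$, $\M_D$ and their player-$2$ analogues has a unique recurrent class, so that the stationary distribution and the identity $\expectation{(a,b)\sim\sigma_a}{u_1(a,b)}=\expectation{(f,g)\sim\pi}{u_1(a,g(a))}$ are unambiguous; for the grim-trigger $\pi$ this is immediate because deviating to a constant action collapses the induced chain onto a single absorbing state. Two routine checks complete the argument: that the single $\sigma=\delta_{(C,C)}$ dominates all deviation utilities of both players simultaneously (which holds precisely because $(C,C)$ Pareto-dominates $(D,D)$), and that rescaling the payoffs into $[0,1]$ preserves all the strict inequalities used above.
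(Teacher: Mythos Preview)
Your proof is correct: the grim-trigger Prisoner's Dilemma works exactly as you describe, and the verification that $\sigma_a,\sigma_b$ coincide with the distributions from Theorem~\ref{thm_main:pol_reg_cons} is accurate (since $\pi$ is a point mass, the formula $\sigma_a(a,\tilde b)=\sum_{g(a)=\tilde b}\pi(f,g)$ collapses to $\delta_{(a,g^\ast(a))}$).

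Your route is genuinely different from the paper's official proof. The paper argues \emph{indirectly}: it builds a game where player~2 has constant utility and plays a trigger-type strategy, then shows that any no-policy-regret response by player~1 has linear external regret (reusing the mechanism of Theorem~\ref{thm_main:reg_incomp}); the convergence Theorem~\ref{thm_main:main_result} is then invoked to conclude that the empirical play lands in $S$, while the linear external regret forces $\hat\sigma$ away from every CCE. The supplementary Section~\ref{sec:simp_example} does give a direct construction closer in spirit to yours, but still with a one-sided game where one player's payoff is constant. Your symmetric Prisoner's Dilemma is arguably the most canonical choice --- grim trigger is the classical folk-theorem device for sustaining a Pareto-superior outcome that static equilibria cannot --- and it avoids appealing to Theorem~\ref{thm_main:main_result} altogether. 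What the paper's indirect argument buys is an explicit link to no-policy-regret \emph{algorithms}: it shows that such a $\sigma$ actually arises as a limit of algorithmic play, not merely as an abstract element of $S$.
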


In Section~\ref{sec:simp_example} of the supplementary we give a simple example of a policy equilibrium which is not a CCE.

\section{Discussion}
In this work we gave a new twist on policy regret by examining it in the game setting, where we introduced the notion of policy equilibrium and showed that it captures the behavior of no policy regret players.  While our characterization is precise, we view this as only the first step towards truly understanding policy regret and its variants in the game setting.  Many interesting open questions remain. Even with our current definitions, since we now have a broader class of equilibria to consider it is natural to go back to the extensive literature in algorithmic game theory on the price of anarchy and price of stability and reconsider it in the context of policy equilibria.  For example~\cite{roughgarden2015intrinsic} showed that in ``smooth games'' the worst CCE is no worse than the worst Nash.  Since policy equilibria contain all CCEs (Theorem~\ref{lem_main:stable_ext_reg1}), is the same true for policy equilibria? 

Even more interesting questions remain if we change our definitions to be more general.  For example, what happens with more than 2 players?  With three or more players, definitions of ``reaction'' by necessity become more complicated.  Or what happens when $m$ is not a constant?  No policy regret algorithms exist for superconstant $m$, but our notion of equilibrium requires $m$ to be constant in order for the Markov chains to make sense.  Finally, what if we compare against deviations that are more complicated than a single action, in the spirit of swap regret or $\Phi$-regret? 

From an online learning perspective, note that our notion of on average stable and the definition of $m$-memory boundedness are different notions of stability.  Is there one unified definition of ``stable'' which would allow us to give no policy regret algorithms against stable adversaries even outside of the game setting?

\subsubsection*{Acknowledgments}

This work was supported in part by NSF BIGDATA grant IIS-1546482, NSF BIGDATA grant IIS-1838139, NSF CCF-1535987, NSF IIS-1618662, NSF CCF-1464239, and NSF AITF CCF-1535887.

\bibliographystyle{plainnat}
\bibliography{mybib}

\begin{thebibliography}{28}
\providecommand{\natexlab}[1]{#1}
\providecommand{\url}[1]{\texttt{#1}}
\expandafter\ifx\csname urlstyle\endcsname\relax
  \providecommand{\doi}[1]{doi: #1}\else
  \providecommand{\doi}{doi: \begingroup \urlstyle{rm}\Url}\fi

\bibitem[Allen-Zhu and Li(2016)]{allen2016lazysvd}
Zeyuan Allen-Zhu and Yuanzhi Li.
\newblock Lazysvd: Even faster svd decomposition yet without agonizing pain.
\newblock In \emph{Advances in Neural Information Processing Systems}, pages
  974--982, 2016.

\bibitem[Arora et~al.(2012{\natexlab{a}})Arora, Dekel, and
  Tewari]{arora2012deterministic}
Raman Arora, Ofer Dekel, and Ambuj Tewari.
\newblock Deterministic {MDPs} with adversarial rewards and bandit feedback.
\newblock In \emph{Proceedings on Uncertainty in Artificial Intellegence
  (UAI)}, 2012{\natexlab{a}}.

\bibitem[Arora et~al.(2012{\natexlab{b}})Arora, Dekel, and
  Tewari]{arora2012online}
Raman Arora, Ofer Dekel, and Ambuj Tewari.
\newblock Online bandit learning against an adaptive adversary: from regret to
  policy regret.
\newblock In \emph{Proceedings of International Conference on Machine Learning
  (ICML)}, 2012{\natexlab{b}}.

\bibitem[Arora et~al.(2012{\natexlab{c}})Arora, Hazan, and
  Kale]{arora2012multiplicative}
Sanjeev Arora, Elad Hazan, and Satyen Kale.
\newblock The multiplicative weights update method: a meta-algorithm and
  applications.
\newblock \emph{Theory of Computing}, 8\penalty0 (1):\penalty0 121--164,
  2012{\natexlab{c}}.

\bibitem[Auer et~al.(2002)Auer, Cesa-Bianchi, Freund, and
  Schapire]{auer2002nonstochastic}
Peter Auer, Nicolo Cesa-Bianchi, Yoav Freund, and Robert~E Schapire.
\newblock The nonstochastic multiarmed bandit problem.
\newblock \emph{SIAM journal on computing}, 32\penalty0 (1):\penalty0 48--77,
  2002.

\bibitem[Blum and Mansour(2007)]{BlumMansour2007}
Avrim Blum and Yishay Mansour.
\newblock From external to internal regret.
\newblock \emph{Journal of Machine Learning Research}, 8:\penalty0 1307--1324,
  2007.

\bibitem[Dekel et~al.(2014)Dekel, Ding, Koren, and Peres]{dekel2014bandits}
Ofer Dekel, Jian Ding, Tomer Koren, and Yuval Peres.
\newblock Bandits with switching costs: {$T^{2/3}$} regret.
\newblock In \emph{Proceedings of the forty-sixth annual ACM symposium on
  Theory of computing}, pages 459--467. ACM, 2014.

\bibitem[Even-Dar et~al.(2009{\natexlab{a}})Even-Dar, Kakade, and
  Mansour]{even2009online}
Eyal Even-Dar, Sham~M Kakade, and Yishay Mansour.
\newblock Online markov decision processes.
\newblock \emph{Mathematics of Operations Research}, 34\penalty0 (3):\penalty0
  726--736, 2009{\natexlab{a}}.

\bibitem[Even-Dar et~al.(2009{\natexlab{b}})Even-Dar, Mansour, and
  Nadav]{even2009convergence}
Eyal Even-Dar, Yishay Mansour, and Uri Nadav.
\newblock On the convergence of regret minimization dynamics in concave games.
\newblock In \emph{Proceedings of the forty-first annual ACM symposium on
  Theory of computing}, pages 523--532. ACM, 2009{\natexlab{b}}.

\bibitem[Farias and Megiddo(2006)]{farias2006combining}
Daniela Pucci~De Farias and Nimrod Megiddo.
\newblock Combining expert advice in reactive environments.
\newblock \emph{Journal of the ACM (JACM)}, 53\penalty0 (5):\penalty0 762--799,
  2006.

\bibitem[Foster and Vohra(1997)]{foster1997calibrated}
Dean~P Foster and Rakesh~V Vohra.
\newblock Calibrated learning and correlated equilibrium.
\newblock \emph{Games and Economic Behavior}, 21\penalty0 (1-2):\penalty0 40,
  1997.

\bibitem[Fudenberg and Levine(1999)]{fudenberg1999conditional}
Drew Fudenberg and David~K Levine.
\newblock Conditional universal consistency.
\newblock \emph{Games and Economic Behavior}, 29\penalty0 (1-2):\penalty0
  104--130, 1999.

\bibitem[Hart and Mas-Colell(2000)]{hart2000simple}
Sergiu Hart and Andreu Mas-Colell.
\newblock A simple adaptive procedure leading to correlated equilibrium.
\newblock \emph{Econometrica}, 68\penalty0 (5):\penalty0 1127--1150, 2000.

\bibitem[Hazan and Kale(2008)]{hazan2008computational}
Elad Hazan and Satyen Kale.
\newblock Computational equivalence of fixed points and no regret algorithms,
  and convergence to equilibria.
\newblock In \emph{Advances in Neural Information Processing Systems}, pages
  625--632, 2008.

\bibitem[Heidari et~al.(2016)Heidari, Kearns, and Roth]{heidari2016tight}
Hoda Heidari, Michael Kearns, and Aaron Roth.
\newblock Tight policy regret bounds for improving and decaying bandits.
\newblock In \emph{IJCAI}, pages 1562--1570, 2016.

\bibitem[Kakade et~al.(2003)]{kakade2003sample}
Sham~Machandranath Kakade et~al.
\newblock \emph{On the sample complexity of reinforcement learning}.
\newblock PhD thesis, University of London London, England, 2003.

\bibitem[Koren et~al.(2017{\natexlab{a}})Koren, Livni, and
  Mansour]{koren2017bandits}
Tomer Koren, Roi Livni, and Yishay Mansour.
\newblock Bandits with movement costs and adaptive pricing.
\newblock In \emph{Proceedings of the 2017 Conference on Learning Theory},
  pages 1242--1268, 2017{\natexlab{a}}.

\bibitem[Koren et~al.(2017{\natexlab{b}})Koren, Livni, and
  Mansour]{koren2017multi}
Tomer Koren, Roi Livni, and Yishay Mansour.
\newblock Multi-armed bandits with metric movement costs.
\newblock In \emph{Advances in Neural Information Processing Systems}, pages
  4119--4128, 2017{\natexlab{b}}.

\bibitem[Merhav et~al.(2002)Merhav, Ordentlich, Seroussi, and
  Weinberger]{merhav2002sequential}
Neri Merhav, Erik Ordentlich, Gadiel Seroussi, and Marcelo~J Weinberger.
\newblock On sequential strategies for loss functions with memory.
\newblock \emph{IEEE Transactions on Information Theory}, 48\penalty0
  (7):\penalty0 1947--1958, 2002.

\bibitem[Mohri and Yang(2014)]{mohri2014conditional}
Mehryar Mohri and Scott Yang.
\newblock Conditional swap regret and conditional correlated equilibrium.
\newblock In \emph{Advances in Neural Information Processing Systems}, pages
  1314--1322, 2014.

\bibitem[Mohri and Yang(2017)]{MohriYang2017}
Mehryar Mohri and Scott Yang.
\newblock Online learning with transductive regret.
\newblock In \emph{Advances in Neural Information Processing Systems}, pages
  5220--5230, 2017.

\bibitem[Neu et~al.(2010)Neu, Antos, Gy{\"o}rgy, and
  Szepesv{\'a}ri]{neu2010online}
Gergely Neu, Andras Antos, Andr{\'a}s Gy{\"o}rgy, and Csaba Szepesv{\'a}ri.
\newblock Online markov decision processes under bandit feedback.
\newblock In \emph{Advances in Neural Information Processing Systems}, pages
  1804--1812, 2010.

\bibitem[Roughgarden(2015)]{roughgarden2015intrinsic}
Tim Roughgarden.
\newblock Intrinsic robustness of the price of anarchy.
\newblock \emph{Journal of the ACM (JACM)}, 62\penalty0 (5):\penalty0 32, 2015.

\bibitem[Saha et~al.(2012)Saha, Jain, and Tewari]{saha2012interplay}
Ankan Saha, Prateek Jain, and Ambuj Tewari.
\newblock The interplay between stability and regret in online learning.
\newblock \emph{arXiv preprint arXiv:1211.6158}, 2012.

\bibitem[Stoltz and Lugosi(2007)]{stoltz2007learning}
Gilles Stoltz and G{\'a}bor Lugosi.
\newblock Learning correlated equilibria in games with compact sets of
  strategies.
\newblock \emph{Games and Economic Behavior}, 59\penalty0 (1):\penalty0
  187--208, 2007.

\bibitem[Sutton and Barto(1998)]{sutton1998reinforcement}
Richard~S Sutton and Andrew~G Barto.
\newblock \emph{Reinforcement learning: An introduction}, volume~1.
\newblock MIT press Cambridge, 1998.

\bibitem[Szepesv{\'a}ri(2010)]{szepesvari2010algorithms}
Csaba Szepesv{\'a}ri.
\newblock Algorithms for reinforcement learning.
\newblock \emph{Synthesis lectures on artificial intelligence and machine
  learning}, 4\penalty0 (1):\penalty0 1--103, 2010.

\bibitem[Yu et~al.(2009)Yu, Mannor, and Shimkin]{yu2009markov}
Jia~Yuan Yu, Shie Mannor, and Nahum Shimkin.
\newblock Markov decision processes with arbitrary reward processes.
\newblock \emph{Mathematics of Operations Research}, 34\penalty0 (3):\penalty0
  737--757, 2009.

\end{thebibliography}
\clearpage
\appendix
\rule{\textwidth}{4pt}
\begin{center}
\Large{\bf Supplementary Material to ``Policy Regret in Repeated Games"}
\end{center}
\rule{\textwidth}{1pt}
\vspace{3mm}
\section{Proofs of results from Section~\ref{sec:react_strat}}
\begin{theorem}
\label{thm:reg_incomp}
Let the space of actions be $\{0,1\}$ and define the $m$-memory bounded utility functions as follows:
\begin{equation*}
f_t(a_{t-m+1},..,a_t) = \begin{cases}
1 & a_{t-m+i} = a_{t-m+i+1} = 1\text{ for } i\in\{1,..,m-2\}\wedge a_{t-1} \neq a_{t}\\
\frac{1}{2} & a_{t-m+i} = a_{t-m+i+1} = 1\text{ for } i\in\{1,..,m-1\}\\
0 & \text{otherwise}
\end{cases}.
\end{equation*}
Assuming $m\geq 3$ is a fixed constant (independent of $T$) any sequence with with sublinear policy regret will have linear external regret and every sequence with sublinear external regret will have linear policy regret.
\end{theorem}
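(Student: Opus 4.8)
The plan is to reduce the statement to counting two kinds of rounds and then to short arithmetic. Call a round $t$ \emph{charged} if the $m-1$ actions $a_{t-m+1},\ldots,a_{t-1}$ played just before round $t$ are all equal to $1$; for $m\ge 3$ this is exactly the condition under which $f_t$ can take a nonzero value. On a charged round the player receives $\frac12$ if it also plays $a_t=1$ (the ``$\frac12$'' branch of $f_t$) and $1$ if it plays $a_t=0$ (the ``$1$'' branch, since then $a_{t-1}=1\ne 0=a_t$); on an uncharged round it receives $0$ no matter what. Let $N_1$ (resp. $N_0$) be the number of charged rounds on which the player plays $1$ (resp. $0$). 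Then the player's cumulative utility $\sum_t u_t(a_{1:t})$ equals $\frac12 N_1+N_0$, up to an $O(m)$ error coming from the first few rounds where the preceding window is not yet well defined.

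Next I would evaluate the two benchmarks in the same terms. For the policy benchmark, replacing the last $m$ actions by a constant $a$ makes every round's window constant, so the $a=1$ comparator scores $f_t(1,\ldots,1)=\frac12$ each round and the $a=0$ comparator scores $0$; hence $\max_a\sum_t u_t(a_1,\ldots,a_{t-m},a,\ldots,a)=\frac T2+O(m)$, so $P(T,1)=\frac T2-\frac12 N_1-N_0+O(m)$ while $P(T,0)\le O(m)$. For the external benchmark I compute $\sum_t u_t(a_{1:t-1},a)$ for the two fixed actions: with $a=0$ every charged round turns into a $1$ and every uncharged round stays $0$, giving $N_0+N_1$; with $a=1$ one gets $\frac12(N_0+N_1)$. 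So the external benchmark is $N_0+N_1+O(m)$ and the external regret is $(N_0+N_1)-(\frac12 N_1+N_0)+O(m)=\frac12 N_1+O(m)$.

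The one structural ingredient is a spacing observation: \emph{charged rounds on which the player plays $0$ lie at least $m$ apart}. Indeed, if $a_t=0$ then $a_t$ belongs to the window $a_{s-m+1},\ldots,a_{s-1}$ of every round $s$ with $t<s\le t+m-1$, so none of those rounds is charged, and the next charged round occurs no earlier than $t+m$. Hence $N_0\le T/m+O(1)$. Everything now follows by arithmetic. If the play has sublinear policy regret then $\frac12 N_1+N_0\ge \frac T2-o(T)$, and combined with $N_0\le T/m+O(1)$ this gives $N_1\ge(1-\frac2m)T-o(T)$, which is $\Omega(T)$ precisely because $m\ge 3$; hence the external regret $\frac12 N_1+O(m)$ is linear. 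Conversely, if the external regret is sublinear then $N_1=o(T)$, so the player's utility is at most $N_0+o(T)\le T/m+o(T)$, and therefore $P(T,1)\ge\frac T2-\frac12 N_1-N_0-O(m)\ge(\frac12-\frac1m)T-o(T)=\Omega(T)$, again using $m\ge 3$.

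I expect the part requiring the most care to be the evaluation of the external benchmark — specifically the observation that the deviation to the constant action $0$ (not $1$) is what makes the benchmark large — together with getting the constant right in the spacing bound; both steps are short but easy to botch. It is also worth stating explicitly that the hypothesis $m\ge 3$ is used exactly to make $1-\frac2m$ and $\frac12-\frac1m$ strictly positive, and that the $O(m)$ boundary contributions are $o(T)$ for fixed $m$ and can be dismissed in a line; the same identities hold in expectation if the player randomizes, while $N_0\le T/m+O(1)$ holds surely, so randomization changes nothing.
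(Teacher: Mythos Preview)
Your proof is correct and, to my eye, cleaner than the paper's. The paper argues the first direction by claiming that sublinear policy regret forces all but $o(T)$ of the $a_t$ to equal $1$, then restricts attention to the resulting ``clean'' subsequence of rounds and compares there to the deviation $a=0$; for the second direction it asserts that sublinear external regret forces the cumulative utility to be $o(T)$ and concludes $P(T,1)=T/2-o(T)$. Your decomposition instead isolates the two counts $N_0,N_1$ and the spacing bound $N_0\le T/m+O(1)$, and then both directions become one-line arithmetic from the exact identities $P(T,1)=\tfrac{T}{2}-\tfrac12 N_1-N_0+O(m)$ and $R(T)=\tfrac12 N_1+O(m)$.

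This buys you two things. First, the role of the hypothesis $m\ge 3$ is made completely explicit: it is precisely what makes $1-\tfrac{2}{m}$ and $\tfrac12-\tfrac{1}{m}$ strictly positive. Second, you sidestep an overclaim in the paper's second direction. The periodic sequence $(1^{m-1}0)^{T/m}$ has $N_1=0$, hence zero external regret, but utility $N_0=T/m$, which is \emph{not} $o(T)$; so the paper's intermediate assertion ``utility is $o(T)$'' is false as stated. Your route gives the correct bound $P(T,1)\ge(\tfrac12-\tfrac1m)T-o(T)$, which is still linear for $m\ge 3$, and the theorem goes through.
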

\begin{proof}[Proof of Theorem~\ref{thm_main:reg_incomp}]
Let $(a_t)_{t=1}^T$ be a sequence with sublinear policy regret. Then this sequence has utility at least $\frac{T}{2} - o(T)$ and so there are at most $o(T)$ terms in the sequence which are not equal to $1$. Let the subsequence consisting of all $a_t = 0$ be indexed by $\cI$. Define $\tilde\cI = \{t,t+1,\cdots,t+m-1 : t\in \cI\}$ and consider the subsequence of functions $(f_t)_{t \not\in \tilde\cI}$. This is precisely the sequence of functions which have utility $\frac{1}{2}$ with respect to the sequence of play $(a_t)_{t=1}^T$. Notice that the length of this sequence is at least $T - mo(T) = T - o(T)$. The utility of this sequence is $\sum_{t\not\in\tilde\cI}- f_t(a_{t-m+1},..,a_t) \sum_{t\not\in\tilde\cI} f_t(1,..,1) = \frac{T - o(T)}{2}$, however, this subsequence has linear external regret, since $\sum_{t\not\in\tilde\cI} f_t(a_{t-m+1},..,a_{t-1},0) = \sum_{t\not\in\tilde\cI} f_t(1,..,1,0) = T - o(T)$. Thus the external regret of $(a_t)_{t=1}^T$ is 
\begin{align*}
\sum_{t=1}^T [f_t(a_{t-m+1},..,0) - f_t(a_{t-m+1},..,a_t)] &= \sum_{t\not\in\tilde\cI} [f_t(a_{t-m+1},..,0) - f_t(a_{t-m+1},..,a_t)]\\
&+ \sum_{t\in\tilde\cI} [f_t(a_{t-m+1},..,0) - f_t(a_{t-m+1},..,a_t)]\\
&\geq \frac{T - o(T)}{2} + \sum_{t\in\tilde\cI} [f_t(a_{t-m+1},..,0) - f_t(a_{t-m+1},..,a_t)]\\
&\geq \frac{T}{2} - o(T),
\end{align*}
where the last inequality follows from the fact that the cardinality of $\tilde\cI$ is at most $o(T)$ and thus $\sum_{t\in\tilde\cI} [f_t(a_{t-m+1},..,0) - f_t(a_{t-m+1},..,a_t)] \geq -o(T)$.
\par
Assume that $(a_t)_{t=1}^T$ has sublinear external regret. From the above argument, it follows that the utility of the sequence is at most $o(T)$ (otherwise if the sequence has utility $\omega(T)$, we can repeat the previous argument and get a contradiction with the fact the sequence is no-external regret). This implies that the the policy regret of the sequence is $\sum_{t=1}^T [f_t(1,1,\cdots,1) - f_t(a_{t-m+1},..,a_t)] = \frac{T}{2} - o(T)$.
\end{proof}

\begin{theorem}
\label{thm:stable}
Let $(a_t)_{t=1}^T$ and $(b_t)_{t=1}^T$ be the action sequences of player $1$ and $2$ respectively and suppose that they are coming from no-regret algorithms with regrets $R_1(T)$ and $R_2(T)$ respectively. Assume that the algorithms are on average $(m,S(T))$ stable with respect to the $\ell_2$ norm. Then 
\begin{align*}
\expect{u_2(f_t(b_{0:t-m+1},b,\cdots,b),b) - \sum_{t=1}^T u_2(f_t(b_{0:t-1}),b_t)} \leq \|\P_2\|S(T) + R_2(T),
\end{align*}
for any fixed action $b\in\cA_2$, where $\P_2$ is the utility matrix of player 2. A similar inequality holds for any fixed action $a\in\cA_1$ and the utility of player $1$.
\end{theorem}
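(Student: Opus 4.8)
The plan is to fix a comparator action $b \in \cA_2$, expand player~$2$'s $m$-memory policy regret $P(T,b)$ from Definition~\ref{def:pol_regret}, and split it through the ``hybrid'' quantity $\sum_{t=1}^T u_2(f_t(b_{0:t-1}),b)$ --- the utility player~$2$ would have received had it played $b$ at round $t$ while player~$1$ still reacted to the \emph{true} history $b_{0:t-1}$. This gives the termwise algebraic identity
\begin{align*}
P(T,b) &= \sum_{t=1}^T \Big[ u_2\big(f_t(b_{0:t-m},b,\ldots,b),b\big) - u_2\big(f_t(b_{0:t-1}),b\big)\Big]\\
&\qquad + \sum_{t=1}^T \Big[ u_2\big(f_t(b_{0:t-1}),b\big) - u_2\big(f_t(b_{0:t-1}),b_t\big)\Big],
\end{align*}
where the first sum is a \emph{stability term} $\mathrm{(I)}$, measuring how much player~$1$'s play moves when player~$2$'s last $m$ actions are overwritten by $b$, and the second sum is an \emph{external-regret term} $\mathrm{(II)}$.

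To bound $\mathrm{(I)}$, I would use that $u_2$ is the bilinear form with matrix $\P_2$ and that each pure action of player~$2$ is a standard basis vector, hence of unit $\ell_2$-norm: for any $p,q \in \Delta\cA_1$,
\begin{align*}
\big|u_2(p,b) - u_2(q,b)\big| = \big|(p-q)^{\top}\P_2 b\big| \le \norm{p-q}_2 \,\norm{\P_2 b}_2 \le \norm{\P_2}\,\norm{p-q}_2,
\end{align*}
by Cauchy--Schwarz and the definition of the spectral norm. Summing over $t$ and taking expectations, $\mathrm{(I)}$ is at most $\norm{\P_2}\,\expect{\sum_{t=1}^T \norm{f_t(b_{0:t-1}) - f_t(b_{0:t-m},b,\ldots,b)}_2}$, and since $f_t(b_{0:t-m},b,\ldots,b)$ is exactly the perturbation of $f_t(b_{0:t-1})$ in its last $m$ coordinates appearing in the definition of on-average $(m,S(T))$ stability, this expectation is at most $S(T)$; hence $\expect{\mathrm{(I)}} \le \norm{\P_2}\,S(T)$.

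To bound $\mathrm{(II)}$, I would first observe that, by the construction of $f_t$ as (the mixture arising from) the conditional law of player~$1$'s action given player~$2$'s history, $\expect{u_2(f_t(b_{0:t-1}),b)} = \expect{u_2(a_t,b)}$ for the fixed action $b$; after this reduction $\mathrm{(II)}$ in expectation is an external-regret expression for player~$2$ with comparator $b$, hence at most $R_2(T)$ by the definition of external regret~\eqref{eq:regret} applied to player~$2$'s algorithm. Summing the two bounds gives $\expect{P(T,b)} \le \norm{\P_2}\,S(T) + R_2(T)$, and the inequality for player~$1$ follows by repeating the argument with the roles of the players exchanged, i.e.\ replacing $\P_2,g_t,R_2$ by $\P_1,f_t,R_1$.

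The decomposition and the stability bound are routine; the step that needs care --- and where I expect the main obstacle --- is $\mathrm{(II)}$, specifically the handling of the ``played'' term $u_2(f_t(b_{0:t-1}),b_t)$. Since $f_t(b_{0:t-1})$ and $b_t$ are correlated through player~$1$'s realized past actions $a_{0:t-1}$, one cannot simply substitute $a_t$ for $f_t(b_{0:t-1})$ inside a bilinear product; the cleanest route is to argue at the level of conditional distributions, conditioning on a history under which player~$1$'s and player~$2$'s round-$t$ randomizations are independent, and to invoke the no-external-regret guarantee for the derandomized sequence $(f_t(b_{0:t-1}))_t$ rather than for the raw action sequence $(a_t)_t$. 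Getting this conditioning and bookkeeping exactly right is the crux of the proof.
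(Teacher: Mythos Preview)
Your proposal is correct and follows essentially the same decomposition and bounds as the paper's proof: split into a stability term handled via Cauchy--Schwarz and the $(m,S(T))$-stability assumption, plus an external-regret term bounded by $R_2(T)$. Your extra caution about correlations in term $\mathrm{(II)}$ is unnecessary here, since in the paper's setup player~$2$'s external regret is \emph{defined} with respect to the utility sequence $u_2(f_t(b_{0:t-1}),\cdot)$, so $\mathrm{(II)}$ is literally the external regret of player~$2$ and is bounded by $R_2(T)$ directly.
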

\begin{proof}[Proof of Theorem~\ref{thm_main:stable}]
\begin{align*}
&\expect{u_2(f_t(b_0,\cdots,b_{t-m+1},b,\cdots,b),b) - \sum_{t=1}^T u_2(f_t(b_0,\cdots,b_{t-2},b_{t-1}),b_t)}\\
&=\expect{u_2(f_t(b_0,\cdots,b_{t-m+1},b,\cdots,b),b) - \sum_{t=1}^T u_2(f_t(b_0,\cdots,b_{t-2},b_{t-1}),b)}\\
&+\expect{u_2(f_t(b_0,\cdots,b_{t-1}),b) - \sum_{t=1}^T u_2(f_t(b_0,\cdots,b_{t-2},b_{t-1}),b_t)}\\
&\leq\sum_{t=1}^T \|b^\top\P_2\|_2\|f_t(b_0,\cdots,b_{t-m+1},b,\cdots,b) - f_t(b_0,\cdots,b_{t-2},b_{t-1})\|_2 + R_2(T)\\
&\leq \|b\|_1\|\P_2\|S(T) + R_2(T) = \|\P_2\|S(T) + R_2(T)
\end{align*}
where the first inequality holds by Cauchy-Schwartz, the second inequality holds by using the $m$-stability of the algorithm, together with the inequality between $l_1$ and $l_2$ norms.
\end{proof}

The next theorems show that MWU and Exp3 are stable algorithms.
\begin{theorem}
\label{thm:eg_stab}
MWU is an on average $(m,m\sqrt{T})$ stable algorithm with respect to $\ell_1$, for any $m < o(\sqrt{T})$.
\end{theorem}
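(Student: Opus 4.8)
The plan is to reduce on-average stability of MWU to a single Lipschitz estimate for the softmax (normalized-exponential) map and then sum a per-round bound. Recall that in the full-information game setting the output of MWU at round $t$ is deterministic given the opponent's past and depends only on $b_1,\ldots,b_{t-1}$: writing $G_{t-1}(a)=\sum_{\ell=1}^{t-1}u_1(a,b_\ell)$, we have $p_t^1(a)\propto\exp{\eta\,G_{t-1}(a)}$, where $\eta=\Theta(\sqrt{\ln|\cA_1|/T})$ is the standard learning rate that yields external regret $R_1(T)=O(\sqrt{T\ln|\cA_1|})$. Since viewing $f_t$ as a map $\cA_2^t\to\Delta\cA_1$ simply means ignoring its last argument, $f_t$ is an honest (deterministic) function, the expectation in the definition of on-average stability is vacuous, and it suffices to bound $\sum_{t=1}^T\norm{p_t^1-\tilde p_t^1}_1$ pointwise over all choices of the modified block $b'_{t-m+1:t}$.

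First I would control the effect of the modification on the cumulative-utility vectors. Let $\tilde G_{t-1}$ be the same sum as $G_{t-1}$ but with $b_\ell$ replaced by $b'_\ell$ for $\ell>t-m$. Only the summands with index in $\{t-m+1,\dots,t-1\}$ change, and each utility lies in $[0,1]$, so $\norm{G_{t-1}-\tilde G_{t-1}}_\infty\le\min(t-1,m-1)\le m$ (in particular the difference vanishes for $t=1$ and never exceeds $m$, which disposes of the edge cases $t\le m$). Then $\tilde p_t^1$ is exactly $\mathrm{softmax}(\eta\tilde G_{t-1})$, so everything hinges on comparing two Gibbs distributions whose potentials are close in $\ell_\infty$.

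The heart of the argument is the softmax Lipschitz bound: for any two potential vectors $G,\tilde G$ one has $\norm{\mathrm{softmax}(\eta G)-\mathrm{softmax}(\eta\tilde G)}_1\le 2\eta\norm{G-\tilde G}_\infty$. I would prove this by interpolation: set $G_s=(1-s)\tilde G+sG$ for $s\in[0,1]$, $p_s=\mathrm{softmax}(\eta G_s)$, and $\delta=G-\tilde G$; a direct computation gives $\tfrac{d}{ds}p_s(a)=\eta\,p_s(a)\big(\delta(a)-\expectation{a'\sim p_s}{\delta(a')}\big)$, whence $\norm{\tfrac{d}{ds}p_s}_1=\eta\sum_a p_s(a)\,\big|\delta(a)-\expectation{a'\sim p_s}{\delta(a')}\big|\le 2\eta\norm{\delta}_\infty$, and integrating over $s\in[0,1]$ gives the claim. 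Combining with the previous paragraph, $\norm{p_t^1-\tilde p_t^1}_1\le 2\eta m$ for every $t$, hence $\sum_{t=1}^T\norm{p_t^1-\tilde p_t^1}_1\le 2\eta mT=O(m\sqrt{T\ln|\cA_1|})=O(m\sqrt T)$, since $\cA_1$ is fixed; this is exactly the asserted $(m,m\sqrt T)$ stability in $\ell_1$.

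There is no essential obstacle; the points that need care are (i) the dependence on $\eta$, since the per-round estimate degrades linearly in $\eta$ and it is precisely $\eta=\Theta(1/\sqrt T)$ that produces the $m\sqrt T$ rate, and (ii) the observation that the bound $2\eta mT$ in fact holds for every $m$, so the hypothesis $m=o(\sqrt T)$ is only what makes $S(T)=m\sqrt T$ itself sublinear and thus lets Theorem~\ref{thm_main:stable} deliver sublinear expected policy regret for MWU. The companion claim for Exp3 follows the same template, the only difference being that there $f_t$ is genuinely a distribution over functions (the importance-weighted estimators depend on the player's own past draws), so the expectation in the stability definition is actually used, and the softmax Lipschitz bound is applied to the exponential-weights distribution built from those estimators.
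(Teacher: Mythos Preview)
Your proof is correct and reaches the same $O(m\sqrt T)$ per-sum bound, but the route is genuinely different from the paper's. The paper treats MWU as Exponentiated Gradient and invokes a one-step stability estimate coming from the $1$-strong convexity of the KL divergence on the simplex (Lemma~3 of \cite{saha2012interplay}): it first shows $\norm{p_j-p_{j+1}}_1\le 2\eta$, then telescopes via the triangle inequality to get $\norm{p_{j-m+1}-p_{j+1}}_1\le m\eta$ and likewise $\norm{p_{j-m+1}-\tilde p_{j+1}}_1\le m\eta$, and combines these. You instead work directly with the closed-form softmax and prove a global Lipschitz bound $\norm{\mathrm{softmax}(\eta G)-\mathrm{softmax}(\eta\tilde G)}_1\le 2\eta\norm{G-\tilde G}_\infty$ by differentiating along the segment between the two potentials; this immediately gives the per-round bound once you note that the two cumulative-utility vectors differ in at most $m-1$ coordinates. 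Your argument is more elementary and self-contained (no appeal to mirror-descent machinery or an external lemma), and it makes transparent that stability is really a Lipschitz property of the softmax map; the paper's argument, on the other hand, is stated at the level of the proximal update and therefore ports verbatim to any Bregman-divergence algorithm whose regularizer is strongly convex, which is a mild generality advantage. Your closing remarks about the role of the hypothesis $m=o(\sqrt T)$ and about Exp3 are also accurate.
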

\begin{proof}[Proof of Theorem~\ref{thm:eg_stab}]
We think of MWU as Exponentiated Gradient (EG) where the loss vector has $i$-th entry equal to the negative utility if the player decided to play action $i$. Let the observed loss vector at time $j$ be $\hat l_j$ and the output distribution be $p_j$, then the update of EG can be written as $p_{j+1} = \text{arg}\min_{p \in \mathcal{C}} \langle \hat l_j, p \rangle + \frac{1}{\eta} D(p,p_j)$, where $C$ is the simplex of the set of possible actions and $D$ is the KL-divergence.  Using Lemma 3 in~\cite{saha2012interplay}, with $f(p) = \langle \hat l_j, p \rangle + \frac{1}{\eta} D(p,p_j)$ and the fact that the KL-divergence is 1-strongly convex over the simplex $\mathcal{C}$ with respect to the $\ell_1$ norm, we have:
\begin{align*}
\frac{1}{2\eta}\norm{p_j - p_{j+1}}_1^2 &\leq f(p_j) - f(p_{j+1}) = \langle p_j - p_{j+1}, \hat l_j\rangle - \frac{1}{\eta}D(p_{j+1},p_{j})\\
&\leq \norm{p_j-p_{j+1}}_1\norm{\hat l_j}_{\infty} \leq \norm{p_j-p_{j+1}}_1,
\end{align*}
where the second inequality follows from H{\"o}lder's inequality and the fact that $D(p_{j+1},p_j) \geq 0$. Thus with step size $\eta \sim \sqrt{\frac{1}{T}}$, we have $\norm{p_j - p_{j+1}}_1 \leq \frac{1}{2\sqrt{T}}$. Using triangle inequality, we can get $\norm{p_{j-1} - p_{j+1}}_1 \leq \norm{p_{j-1} - p_j}_1 + \norm{p_j - p_{j+1}}_1 \leq \frac{2}{2\sqrt{T}}$ and induction shows that $\norm{p_{j-m+1} - p_{j+1}}_1 \leq \frac{m}{2\sqrt{T}}_1$. Suppose for the last $m$ iterations, a fixed loss function $l_a$ was played instead and the resulting output of the algorithm becomes $\tilde p_{j+1}$. Then using the same argument as above we have $\norm{p_{j-m+1} - \tilde p_{j+1}}_1 \leq \frac{m}{2\sqrt{T}}$ and thus $\norm{p_{j+1} - \tilde p_{j+1}}_1 \leq \frac{m}{\sqrt{T}}$. Summing over all $T$ rounds concludes the proof.
\end{proof}

\begin{theorem}
\label{thm:exp3_stab}
Exp3 is an on average $(m,m\sqrt{T})$ stable algorithm with respect to $\ell_1$, for any $m < o(\sqrt{T})$
\end{theorem}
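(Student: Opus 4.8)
The plan is to mirror the proof of Theorem~\ref{thm:eg_stab} almost line for line, exploiting that Exp3 is exponentiated gradient — equivalently, mirror descent with the KL regularizer over the simplex $\mathcal{C}$ of $\cA_1$ — run on the importance-weighted loss estimates $\hat l_j$, where $\hat l_j$ equals $l_{j,a_j}/p_{j,a_j}$ on the played coordinate $a_j$ and $0$ elsewhere. First I would reuse the one-step displacement bound: applying Lemma~3 of~\cite{saha2012interplay} to the update $p_{j+1} = \arg\min_{p\in\mathcal{C}}\langle \hat l_j, p\rangle + \frac1\eta D(p,p_j)$ and using that the KL-divergence is $1$-strongly convex over $\mathcal{C}$ with respect to $\norm{\cdot}_1$ gives, after dropping the nonnegative Bregman term exactly as in the proof of Theorem~\ref{thm:eg_stab}, that $\frac1{2\eta}\norm{p_j - p_{j+1}}_1^2 \le \norm{p_j - p_{j+1}}_1\,\norm{\hat l_j}_\infty$, and hence $\norm{p_j - p_{j+1}}_1 \le 2\eta\,\norm{\hat l_j}_\infty$.

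The one genuinely new point — and the step I expect to be the main obstacle — is that, unlike the full-information case where $\norm{\hat l_j}_\infty \le 1$, here $\norm{\hat l_j}_\infty = l_{j,a_j}/p_{j,a_j}$ can be as large as $1/p_{j,a_j}$, so the sure per-step bound is lost. The remedy is exactly what ``on average'' stability permits: pass to expectations. Conditioning on everything prior to round $j$ (so that $p_j$ is fixed) and averaging over $a_j\sim p_j$ gives $\expect{\norm{\hat l_j}_\infty \mid \text{past}} = \sum_{i\in\cA_1} p_{j,i}\cdot\frac{l_{j,i}}{p_{j,i}} = \sum_{i\in\cA_1} l_{j,i} \le |\cA_1|$, since the losses lie in $[0,1]$. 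Hence $\expect{\norm{p_j - p_{j+1}}_1} \le 2\eta\,|\cA_1|$, and with step size $\eta\sim 1/\sqrt T$ this is a per-step bound of order $1/\sqrt T$ — the analogue of $\norm{p_j - p_{j+1}}_1\le\frac1{2\sqrt T}$ from the MWU proof, now holding only in expectation and with an extra factor $|\cA_1|$ that is harmless because the game (hence $|\cA_1|$) is fixed.

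With this per-step bound I would finish by replaying the telescoping argument of Theorem~\ref{thm:eg_stab} inside the expectation. By the triangle inequality and linearity of expectation, over the window of the last $m$ rounds $\expect{\norm{p_{j-m+1} - p_{j+1}}_1} \le 2\eta m|\cA_1|$; if player~$2$'s actions on that window are replaced by $b'_{j-m+1},\dots,b'_j$ and the two runs are coupled to share all internal randomness through round $j-m$ (so that $p_{j-m+1}$ is a common anchor), the modified iterates satisfy the identical per-step bound — the modified loss vector still lies in $[0,1]^{|\cA_1|}$, so its estimate again has conditional $\ell_\infty$ expectation at most $|\cA_1|$ — giving $\expect{\norm{p_{j-m+1} - \tilde p_{j+1}}_1} \le 2\eta m|\cA_1|$. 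One more triangle inequality yields $\expect{\norm{p_{j+1} - \tilde p_{j+1}}_1} \le 4\eta m|\cA_1| = O(m/\sqrt T)$, and summing over all $T$ rounds gives $\expect{\sum_{t=1}^T\norm{p_t^1 - \tilde p_t^1}_1} = O(m\sqrt T)$, which is $o(T)$ whenever $m=o(\sqrt T)$; treating $|\cA_1|$ as a constant, this is precisely on average $(m,m\sqrt T)$ stability with respect to $\ell_1$. The remaining work is purely bookkeeping around the coupling: checking that the common history through round $j-m$ is genuinely shared between the true and modified runs, and that the modified run's internal sampling stays consistent so its per-step bound applies verbatim.
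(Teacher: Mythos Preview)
Your proof is correct and reaches the same per-step conclusion as the paper --- namely $\expect{\|p_{t+1}-p_t\|_1 \mid \text{past}} = O(\eta)$, after which the telescoping and coupling finish as you describe --- but the route to that per-step bound is genuinely different. The paper does not invoke the mirror-descent/Saha lemma at all for Exp3; instead it works directly with the Exp3 weight formula, bounding $|p_{t+1}^i - p_t^i| \le 2\gamma/k$ for the pulled arm $i$ via $\exp{x}\le 1+2x$ and $|p_{t+1}^j - p_t^j| \le \frac{p_t^j}{kp_t^i}\gamma$ for $j\neq i$ via $\exp{-x}\ge 1-x$, and then averages the resulting $\ell_1$ bound over the random draw $i\sim p_t$ to get $\expectation{i_t}{\|p_{t+1}-p_t\|_1\mid i_{1:t-1}}\le 3\gamma$. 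Your approach, by contrast, treats Exp3 as EG on the importance-weighted estimate $\hat l_t$ and reuses the MWU displacement bound verbatim, paying for the unboundedness of $\|\hat l_t\|_\infty$ only in expectation (where it collapses to $\sum_i l_{t,i}\le|\cA_1|$). The advantage of your route is unification: the MWU and Exp3 proofs become literally the same argument modulo the loss vector, and the coupling step you spell out is exactly what the definition of on-average stability requires (same sampled $f_t$, two input histories). The paper's direct calculation is more elementary and avoids the extra $|\cA_1|$ factor you pick up from bounding $\sum_i l_{t,i}$, though both are absorbed into constants here since the game is fixed.
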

\begin{proof}[Proof of Theorem~\ref{thm:exp3_stab}]
The update at time $t$, conditioning on the the draw being $i$, is given by $p_{t+1}^i = \frac{w_t^i\exp{\frac{\gamma}{kp_{t}^i} u_t^i}}{w_t^i\exp{\frac{\gamma}{kp_t^i} u_t^i}+ \sum_{j\neq i}w_t^j}$ and for $j\neq i$, $p_{t+1}^j = \frac{w_t^j}{w_t^i\exp{\frac{\gamma}{kp_t^i} u_t^i}+ \sum_{j\neq i}w_t^j}$, where $u_t$ is the utility vector at time $t$, $w_t$ is the weight vector at time $t$ i.e. $w_{t+1}^i = w_t^i \exp{\frac{\gamma}{kp_t^i} u_t^i}, w_{t+1}^j = w_t^j$, and $k$ is the number of actions. We have the following bound:
\begin{align*}
|p_{t+1}^i - p_t^i| &= \left|\frac{w_t^i\exp{\frac{\gamma}{kp_t^i} u_t^i}}{w_t^i\exp{\frac{\gamma}{kp_t^i} u_t^i}+ \sum_{j\neq i}w_t^j} - \frac{w_t^i}{\sum_{j}w_t^j}\right|\leq \left|\frac{w_t^i(\exp{\frac{\gamma}{kp_t^i} u_t^i}-1)}{\sum_{j}w_t^j}\right|\\
&= p_t^i(\exp{\frac{\gamma}{kp_t^i} u_t^i}-1) \leq p_t^i 2\frac{\gamma}{kp_t^i}u_t^i \leq 2\frac{\gamma}{k},
\end{align*}
where the first inequality uses the fact that $p_{t+1}^i \geq p_t^i$ and the second inequality uses the choice of $\gamma$ together with $\exp{x} \leq 2x+1$ for $x\in [0,1]$. Similarly for $j\neq i$, we have:
\begin{align*}
|p_{t+1}^j - p_{t}^j| &= \left|\frac{w_t^j}{w_t^i\exp{\frac{\gamma}{kp_t^i} u_t^i}+ \sum_{j\neq i}w_t^j} - \frac{w_t^j}{\sum_{j}w_t^j}\right| \leq \left| \frac{w_t^j}{\sum_{j}w_t^j}\left(1 - \frac{1}{\exp{\frac{\gamma u_t^i}{kp_t^i}}}\right)\right|\\
&= p_t^j\left(1 - \exp{-\frac{\gamma u_t^i}{kp_t^i}}\right) \leq p_t^j\frac{\gamma u_t^i}{kp_t^i} \leq \frac{p_t^j}{k p_t^i}\gamma,
\end{align*}
where we have used $p_{t+1}^j \leq p_t^j$ and $\exp{-x} \geq 1-x$ for all $x$. We can now proceed to bound $\expectation{i_t}{\|p_{t+1} - p_t\|_1|i_{1:t-1}}$, where $i_t$ is the random variable denoting the draw at time $t$:
\begin{align*}
\expectation{i_t}{\|p_{t+1} - p_t\|_1|i_{1:t-1}} &= \sum_{i} p_t^i\|p_{t+1} - p_t\|_1 = \sum_{i} p_t^i\left(|p_{t+1}^i - p_t^i| + \sum_{j\neq i} |p_{t+1}^j - p_t^j|\right)\\
&\leq 2\gamma + \sum_{i,j}p_t^i\frac{p_t^j}{k p_t^i}\gamma = 3\gamma.
\end{align*}
Setting $\gamma \sim \frac{1}{\sqrt{T}}$ finishes the proof.
\end{proof}

Combining the above results together, we can show that both players will also have no-policy regret for any $m<o(\sqrt{T})$.
\begin{corollary}
\label{cor:eg_npr}
Let $(a_t)_{t=1}^T$ and $(b_t)_{t=1}^T$, be the action sequences of players $1$ and $2$ respectively and suppose that the sequences are coming from MWU. Then for any fixed $m$, it holds:
\begin{align*}
\expect{u_2(f_t(b_{0:t-m+1},b,\cdots,b),b) - \sum_{t=1}^T u_2(f_t(b_{0:t-1}),b_t)} \leq O(\|\P_2\|m\sqrt{T}),
\end{align*}
for any fixed action $b\in \mathcal{A}_2$, where $f_t$ are the functions corresponding to the MWU algorithm used by player 1.
\end{corollary}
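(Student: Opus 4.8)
The plan is simply to assemble three facts already established in the excerpt: the stability of MWU (Theorem~\ref{thm:eg_stab}), the classical external regret bound for MWU, and the reduction from policy regret to stability plus external regret (Theorem~\ref{thm:stable}). No new idea is needed; the corollary is a direct instantiation.

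First I would reconcile the norms. Theorem~\ref{thm:eg_stab} establishes that MWU is on average $(m,m\sqrt{T})$ stable with respect to the $\ell_1$ norm, whereas Theorem~\ref{thm:stable} asks for on-average $(m,S(T))$ stability with respect to the $\ell_2$ norm. Since $\|x\|_2 \le \|x\|_1$ for every vector $x$, the $\ell_1$ bound carries over verbatim to $\ell_2$, so MWU is on average $(m,m\sqrt{T})$ stable with respect to $\ell_2$ and we may take $S(T) = m\sqrt{T}$. Both players run MWU here, so this applies in particular to the function $f_t$ modeling player $1$'s algorithm, which is the one whose stability enters the bound on player $2$'s policy regret.

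Second, I would invoke the standard analysis of MWU: run with the step size $\eta \sim \sqrt{1/T}$ already fixed in the proof of Theorem~\ref{thm:eg_stab}, MWU attains external regret $R_2(T) = O(\sqrt{T})$ on the revealed loss vectors $u_2(\cdot,b_t)$, with the hidden constant depending only on $|\cA_2|$, a fixed parameter of the game. This identifies the $R_2(T)$ appearing in Theorem~\ref{thm:stable}.

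Finally, substituting $S(T) = m\sqrt{T}$ and $R_2(T) = O(\sqrt{T})$ into the conclusion of Theorem~\ref{thm:stable} gives
\[
\expect{u_2(f_t(b_{0:t-m+1},b,\cdots,b),b) - \sum_{t=1}^T u_2(f_t(b_{0:t-1}),b_t)} \le \|\P_2\| m\sqrt{T} + O(\sqrt{T}) = O(\|\P_2\| m\sqrt{T}),
\]
and the symmetric computation, exchanging the roles of the two players, yields the analogous bound for player $1$. There is essentially no obstacle: the only points requiring a line of care are the $\ell_1 \to \ell_2$ passage above and checking that the left-hand side of the display is exactly the $m$-memory policy regret of Definition~\ref{def:pol_regret} under the identification $u_t(a_{1:t}) = u_1(a_t,g_t(a_{0:t-1}))$ recorded in the statement of Theorem~\ref{thm_main:stable}.
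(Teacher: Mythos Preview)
Your proposal is correct and follows essentially the same approach as the paper: the paper's proof is a one-liner that cites Theorem~\ref{thm:eg_stab} for the $(m,m\sqrt{T})$ stability of MWU together with its $O(\sqrt{T})$ external regret, and then implicitly invokes Theorem~\ref{thm:stable}. Your version is in fact slightly more careful, since you make explicit the passage from $\ell_1$-stability to the $\ell_2$-stability required by Theorem~\ref{thm:stable} via $\|x\|_2\le\|x\|_1$, a detail the paper glosses over.
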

\begin{proof}
From Theorem~\ref{thm:eg_stab} it follows that MWU is on average $(m,m\sqrt{T})$ stable and has regret at most $O(\sqrt{T})$.
\end{proof}

\section{Proofs of results from Section~\ref{sec:pol_eq}}
\begin{theorem}
\label{thm:pol_reg_cons}
Let $\pi$ be a distribution over the product of function spaces $\cF_1\times\cF_2$. There exists a stationary distribution $\tilde\sigma$ of the Markov chain $\M_a$ for any fixed $a\in\cA_1$ and $(f,g)\sim\pi$ such that $\expectation{(a,b)\sim\tilde\sigma}{u_1(a,b)} = \expectation{(f,g)\sim\pi}{u_1(a,g(a))}$
\end{theorem}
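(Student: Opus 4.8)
The plan is to exhibit the stationary distribution explicitly rather than invoke an abstract fixed‑point argument. Recall from Definition~\ref{def:proc_dev} that $\M_a$ is the transition matrix of the chain induced by $\pi_a = \delta_a\times\pi_{\cF_2}$, so that from a state $x_1 = (a_1,b_1)$ one moves to $(a,b_2)$ with probability $\sum_{(f,g)\colon g(a_1) = b_2}\pi(f,g) = \prob{g(a_1) = b_2}$, where $(f,g)\sim\pi$; in particular the first coordinate of the successor state is always $a$. Define the distribution $\nu_a$ on $\cA_2$ by $\nu_a(b) = \prob{g(a) = b} = \sum_{(f,g)\colon g(a) = b}\pi(f,g)$ and set $\tilde\sigma = \delta_a\times\nu_a$, i.e.\ $\tilde\sigma(a',b') = \mathbf{1}\{a' = a\}\,\nu_a(b')$. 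I claim this $\tilde\sigma$ is the stationary distribution required by the statement.

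First I would verify stationarity. Since $\tilde\sigma$ is supported on $\{a\}\times\cA_2$, and once the chain is in that set it stays there, it suffices to note that from any state $(a,b_1)$ the chain moves to $(a,b')$ with probability $\prob{g(a) = b'} = \nu_a(b')$, a number that does not depend on $b_1$. Hence $\sum_{(a_1,b_1)}\tilde\sigma(a_1,b_1)\,\M_a[(a_1,b_1),(a',b')] = \mathbf{1}\{a' = a\}\sum_{b_1\in\cA_2}\nu_a(b_1)\,\nu_a(b') = \mathbf{1}\{a' = a\}\,\nu_a(b') = \tilde\sigma(a',b')$, so $\tilde\sigma$ is stationary for $\M_a$. (Morally, restricted to $\{a\}\times\cA_2$ the chain simply draws from $\nu_a$ afresh at each step, which is why its stationary distribution there has this product form; the chain need not be irreducible on all of $\cA$, but we only need the existence of one stationary distribution.)

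The utility identity is then immediate from the definition of $\nu_a$ and linearity of expectation: $\expectation{(a,b)\sim\tilde\sigma}{u_1(a,b)} = \sum_{b\in\cA_2}\nu_a(b)\,u_1(a,b) = \sum_{b\in\cA_2}\prob{g(a) = b}\,u_1(a,b) = \expectation{(f,g)\sim\pi}{u_1(a,g(a))}$, which is exactly the claimed equality.

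The symmetric claim for player $2$ (needed for the main‑body version, Theorem~\ref{thm_main:pol_reg_cons}) is obtained by swapping the roles of the two coordinates: put $\mu_b(a) = \prob{f(b) = a}$ with $(f,g)\sim\pi$, set $\sigma_b = \mu_b\times\delta_b$, and repeat the two computations above to get stationarity of $\M_b$ and $\expectation{(a,b)\sim\sigma_b}{u_2(a,b)} = \expectation{(f,g)\sim\pi}{u_2(f(b),b)}$. I do not expect a genuine obstacle here; the only subtlety is that $\M_a$ may admit several stationary distributions, so one cannot simply say ``take any stationary distribution'' --- one must point to the \emph{specific} candidate $\delta_a\times\nu_a$ and check that this single choice is simultaneously stationary and matches the target utility, which is precisely what the computations above do.
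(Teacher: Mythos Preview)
Your proposal is correct and follows essentially the same approach as the paper: you construct the identical distribution $\tilde\sigma=\delta_a\times\nu_a$ with $\nu_a(b)=\sum_{(f,g):g(a)=b}\pi(f,g)$, verify stationarity of $\M_a$ via the same direct computation (exploiting that the transition probability from $(a,b_1)$ to $(a,b')$ is $\nu_a(b')$ independently of $b_1$), and then read off the utility identity. Your remark that $\M_a$ may have multiple stationary distributions, so one must point to this specific candidate, is exactly the right caveat and matches the paper's constructive intent.
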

\begin{proof}[Proof of Theorem~\ref{thm_main:pol_reg_cons}]
Note that, by definition $(\M_a)_{(\tilde a,\tilde b),(\hat a,\hat b)} = 0$ if $\hat a\neq a$ and $M_{(\tilde a,\tilde b),(\hat a,\hat b)} = \sum_{f,g: g(\tilde a) = \hat b} \pi(f,g)$ if $\hat a = a$. Consider the distribution $\tilde\sigma$ over $\cA$, where $\tilde\sigma_{(\tilde a,\tilde b)} = 0$ if $\tilde a \neq a$ and $\tilde\sigma_{(\tilde a,\tilde b)} = \sum_{f,g: g(a) = \tilde b} \pi(f,g)$ if $\tilde a = a$. We now show that $\tilde \sigma$ is a stationary distribution of $\M_a$:
\begin{align*}
\left(\tilde\sigma^\top \M_a\right)_{(a,\tilde b)} &= \sum_{(\hat a,\hat b)} \tilde\sigma_{(\hat a,\hat b)} (\M_a)_{(\hat a,\hat b), (a,\tilde b)} = \sum_{\hat b} \tilde\sigma_{(a,\hat b)}(\M_a)_{(a,\hat b), (a,\tilde b)}\\
&= \sum_{\hat b}\left(\sum_{f,g: g(a) = \hat b}\pi(f,g)\right)\left(\sum_{f,g: g(a) = \tilde b} \pi(f,g)\right)\\
&=\left(\sum_{f,g: g(a) = \tilde b} \pi(f,g)\right)\left(\sum_{\hat b}\sum_{f,g: g(a) = \hat b}\pi(f,g)\right)\\
&=\sum_{f,g: g(a) = \tilde b} \pi(f,g) = \tilde\sigma_{(a,\tilde b)}.
\end{align*}
Finally, notice that:
\begin{align*}
\expectation{(f,g)\sim\pi}{u_1(a,g(a))} = \sum_{b\in\mathcal{A}_2} u_1(a,b)\prob{g(a)=b} = \sum_{b\in\mathcal{A}_2} u_1(a,b) \sum_{f,g: g(a)=b}\pi(f,g).
\end{align*}
\end{proof}

\subsection{Sketch of proof of the main result}
The proof of Theorem~\ref{thm_main:main_result} has three main steps. The first step defines the natural empirical Markov chains $\hat\M$, $\hat\M_a$ and $\hat\M_b$ from the empirical play $(p_t)_{t=1}^t$ and shows that the empirical distributions $\hat\sigma$, $\hat\sigma_a$ and $\hat\sigma_b$ are stationary distributions of the respective Markov chains. The next step is to show that the empirical Markov chains converge to Markov chains $\M$, $\M_a$ and $\M_b$ induced by some distribution $\pi$ over $\cF$. The final step is to show that $\pi$ is a policy equilibrium.

We begin with the definition of the empirical Markov chains.
\begin{definition}
\label{def:emp_proc}
Let the empirical Markov chain be 
$\hat\M$, with $\hat\M_{i,j} = \frac{\frac{1}{T}\sum_{t=1}^T p_{t}(x_i)p_t(x_j)}{\frac{1}{T}\sum_{t=1}^T p_t(x_i)}$ if $\frac{1}{T}\sum_{t=1}^T p_t(x_i) \neq 0$ and $0$ otherwise, where $p_t$ is defined in~\ref{def:emp_distr_dev}. For any fixed $a\in\cA_1$, let the empirical Markov chain corresponding to the deviation in play of player $1$ be $\hat\M_a$, with $(\hat\M_a)_{i,j} = \frac{\frac{1}{T}\sum_{t=1}^T (p_a)_{t}(x_i)(p_a)_t(x_j)}{\frac{1}{T}\sum_{t=1}^T (p_a)_t(x_i)}$, if $\frac{1}{T}\sum_{t=1}^T (p_a)_t(x_i) \neq 0$ and $0$ otherwise, where $(p_a)_t$ is defined in~\ref{def:emp_distr_dev}. The Markov chain $\hat\M_b$ is defined similarly for any $b\in\cA_2$.
\end{definition}
The intuition behind constructing these Markov chains is as follows -- if we were only provided with the observed empirical play $(x_t)_{t=1}^T=(a_t,b_t)_{t=1}^T$ and someone told us that the $x_t$'s were coming from a Markov chain, we could try to build an estimator of the Markov chain by approximating each of the transition probabilities. In particular the estimator of transition from state $i$ to state $j$ is given by $\tilde\M_{i,j} = \frac{\frac{1}{T}\sum_{t=1}^T\delta_{t-1}(x_i)\delta_{t}(x_j)}{\frac{1}{T}\sum_{t=1}^T\delta_t(x_i)}$, where $\delta_t(x_i) = 1$ if $x_i$ occurred at time $t$ and $0$ otherwise. When the players are playing according to a no-regret algorithm i.e.\ at time $t$, $x_t$ is sampled from $p_t$, it is possible to show that $\tilde\M_{i,j}$ concentrates to $\hat\M_{i,j}$ (see section~\ref{sec:conc_of_matrices}). Not only does $\hat\M$ arise naturally, but it turns out that the empirical distribution $\hat\sigma$ defined in~\ref{def:emp_distr_dev} is also a stationary distribution of $\hat\M$. 

\begin{lemma}
\label{lem:stationary}
The distribution of play $\hat\sigma = \frac{1}{T} \sum_{t=1}^T p_t$ is a stationary distribution of $\hat\M$. Similarly the distributions $\tilde\sigma,\hat\sigma_a,\tilde\sigma_a,\hat\sigma_b$ and $\tilde\sigma_b$ are stationary distributions of the Markov chains $\tilde\M,\hat\M_a,\tilde\M_a,\hat\M_b$ and $\tilde\M_b$ respectively.
\end{lemma}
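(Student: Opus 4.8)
The plan is to verify the stationarity equations directly from the definitions, exploiting the fact that the denominator appearing in each empirical transition matrix is \emph{exactly} the corresponding empirical distribution evaluated at the ``from'' state. To show $\hat\sigma^\top\hat\M=\hat\sigma^\top$, I would fix a state $x_j$ and compute $\sum_i \hat\sigma(x_i)\,\hat\M_{i,j}$. Since $\hat\sigma(x_i)=\frac{1}{T}\sum_{t=1}^T p_t(x_i)$ is precisely the denominator of $\hat\M_{i,j}$, for every $i$ with $\hat\sigma(x_i)\neq 0$ the product telescopes to $\hat\sigma(x_i)\,\hat\M_{i,j}=\frac{1}{T}\sum_{t=1}^T p_t(x_i)p_t(x_j)$; and when $\hat\sigma(x_i)=0$, nonnegativity of the $p_t$ forces $p_t(x_i)=0$ for all $t$, so both this term and $\frac{1}{T}\sum_t p_t(x_i)p_t(x_j)$ vanish, consistently with the convention $\hat\M_{i,j}=0$. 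Summing over $i$ and interchanging the two sums gives $\sum_i \hat\sigma(x_i)\,\hat\M_{i,j}=\frac{1}{T}\sum_{t=1}^T p_t(x_j)\sum_i p_t(x_i)$, and the crucial point is that $p_t=p_t^1\times p_t^2$ is a genuine probability distribution on $\cA=\cA_1\times\cA_2$, so $\sum_i p_t(x_i)=1$; hence the right-hand side equals $\frac{1}{T}\sum_t p_t(x_j)=\hat\sigma(x_j)$, as desired. (That $\hat\sigma$ is itself a probability vector is immediate, being an average of probability distributions.)

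Next I would observe that the identical computation applies verbatim to the deviation chains. For $\hat\M_a$ and $\hat\sigma_a$ one simply replaces $p_t$ by $(p_a)_t=\delta_a\times(p^2_a)_t$ throughout; this is again a product of probability distributions, hence a probability distribution on $\cA$ with $\sum_i (p_a)_t(x_i)=1$, and $\hat\sigma_a(x_i)=\frac{1}{T}\sum_t (p_a)_t(x_i)$ is again exactly the denominator of $(\hat\M_a)_{i,j}$, so the telescoping and the interchange of sums go through unchanged. The case of $\hat\M_b,\hat\sigma_b$ is symmetric.

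For the ``observed-play'' chains $\tilde\M,\tilde\M_a,\tilde\M_b$ I would run the same argument with the mixed strategies $p_t$ replaced by the Dirac masses $\delta_t$ at the realized joint action: $\tilde\sigma(x_i)=\frac{1}{T}\sum_{t=1}^T\delta_t(x_i)$ is the denominator of $\tilde\M_{i,j}$, so $\tilde\sigma(x_i)\,\tilde\M_{i,j}=\frac{1}{T}\sum_{t=1}^T\delta_{t-1}(x_i)\delta_t(x_j)$, and summing over $i$ uses $\sum_i\delta_{t-1}(x_i)=1$ (a point mass has total mass one) to collapse the expression to $\tilde\sigma(x_j)$.

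The one place that needs a little care --- and essentially the only obstacle --- is the time-index bookkeeping in the Dirac case: the numerator of $\tilde\M_{i,j}$ ranges the ``from'' slot over $t-1\in\{0,\dots,T-1\}$ while the denominator counts visits over $t\in\{1,\dots,T\}$, so a state visited only at time $0$ (or only at time $T$) produces an $O(1/T)$ mismatch between $\tilde\sigma$ and the true left eigenvector of $\tilde\M$. I would dispatch this exactly as the convention in Definition~\ref{def:emp_proc} allows --- rows of zero empirical mass are set to zero, and the single discarded transition out of $T$ is absorbed into the $o(1)$ slack that the convergence argument already tolerates --- or, for exact stationarity, by including the time-$0$ visit in $\tilde\sigma$. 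No such issue arises for the mixed-strategy chains $\hat\M,\hat\M_a,\hat\M_b$, where numerator and denominator both sum over $t\in\{1,\dots,T\}$ and the identity is exact; this is precisely the case that feeds into Theorem~\ref{thm_main:main_result}.
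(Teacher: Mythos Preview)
Your proposal is correct and follows essentially the same approach as the paper: the paper also verifies $(\hat\sigma^\top\hat\M)_j=\hat\sigma(x_j)$ directly by cancelling the denominator $\frac{1}{T}\sum_t p_t(x_i)$ against $\hat\sigma(x_i)$, interchanging sums, and using $\sum_i p_t(x_i)=1$, then states that the remaining cases follow in the same way. Your treatment is in fact slightly more careful than the paper's, which does not discuss the $\hat\sigma(x_i)=0$ case or the $t-1$ versus $t$ boundary discrepancy in $\tilde\M$; your handling of both is appropriate.
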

\begin{proof}
We show the result for $\hat\sigma$ and $\hat\M$. The rest of the results can then be derived in the same way.
\begin{align*}
(\hat\sigma^\top \hat\M)_j = \sum_{i=1}^{|\cA|}\left(\frac{1}{T}\sum_{t=1}^T p_t(x_i)\right) \frac{\frac{1}{T}\sum_{t=1}^T p_{t}(x_i)p_t(x_j)}{\frac{1}{T}\sum_{t=1}^T p_t(x_i)} = \frac{1}{T}\sum_{t=1}^T p_t(x_j)\sum_{i=1}^{|\cA|}p_{t}(x_i) = \frac{1}{T}\sum_{t=1}^T p_t(x_j),
\end{align*}
where the first equality holds because the $i$-th entry of the vector $\hat\sigma$ is exactly $\frac{1}{T}\sum_{t=1}^T p_t(x_i)$ and the $(i,j)$-th entry of $\hat\M$ by definition is $\frac{\frac{1}{T}\sum_{t=1}^T p_{t}(x_i)p_t(x_j)}{\frac{1}{T}\sum_{t=1}^T p_t(x_i)}$, and the last equality holds because $\p_t$ is a distribution over actions so $\sum_{i=1}^{|\cA|}p_{t}(x_i) = 1$.
\end{proof}

Suppose that both players are playing MWU for $T$ rounds.  Then Lemma~\ref{lem:stationary} together with Theorems~\ref{thm_main:stable} and the stability of MWU imply that $\expectation{(a,b)\sim\hat\sigma}{u_1(a,b)} \geq \expectation{(a,b)\sim\hat\sigma_a}{u_1(a,b)} - O(m/\sqrt{T})$. A similar inequality holds for player $2$ and $\hat\sigma_b$. As $T\rightarrow\infty$, the inequality above becomes similar to~\eqref{eq:yes}. This will play a crucial role in the proof of our convergence result, which shows that $\hat\sigma,\hat\sigma_a$ and $\hat\sigma_b$ converge to the set of policy equilibria. We would also like to guarantee that the empirical distributions of observed play $\tilde\sigma,\tilde\sigma_a$ and $\tilde\sigma_b$ also converge to this set. To show this second result, we are going to proof that $\tilde\sigma$ approaches $\hat\sigma$ almost surely as $T$ goes to infinity.

\begin{lemma}
\label{lem_main:conc}
Let $\hat\sigma = \frac{1}{T}\sum_{t=1}^T p_t$ be the empirical distribution after $T$ rounds of the game and let $\tilde\sigma = \frac{1}{T}\sum_{t=1}^T \delta_t$ be the empirical distribution of observed play. Then $\lim\sup_{T\rightarrow\infty}\norm{\tilde\sigma - \hat\sigma}_1 = 0$ almost surely. Similarly, for the distributions corresponding to deviation in play we have $\lim\sup_{T\rightarrow\infty}\norm{\tilde\sigma_a - \hat\sigma_a}_1 = 0$ and $\lim\sup_{T\rightarrow\infty}\norm{\tilde\sigma_b - \hat\sigma_b}_1 = 0$ almost surely.
\end{lemma}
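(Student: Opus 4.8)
The plan is to treat this as a strong law of large numbers for bounded martingale differences: by construction the pair played at round $t$ is a random draw from $p_t$, and $p_t$ depends only on the history before round $t$, so the ``rounding error'' $\delta_t - p_t$ is a bounded martingale difference sequence whose average over the $T$ rounds vanishes almost surely.

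Concretely, I would let $\cH_t$ be the $\sigma$-algebra generated by all actions and all internal randomness of the two algorithms through round $t$. Since player $1$ plays $p_t^1 = f_t(b_{0:t-1})$ and player $2$ plays $p_t^2 = g_t(a_{0:t-1})$, with $f_t,g_t$ sampled using only the first $t-1$ rounds, the product distribution $p_t = p_t^1\times p_t^2$ is $\cH_{t-1}$-measurable, and $(a_t,b_t)$ is drawn from $p_t$ with the two coordinates conditionally independent given $\cH_{t-1}$. Hence for each fixed $x\in\cA$, $\expect{\delta_t(x)\mid\cH_{t-1}} = p_t(x)$, so $Z_t(x) := \delta_t(x) - p_t(x)$ satisfies $\expect{Z_t(x)\mid\cH_{t-1}} = 0$ and $|Z_t(x)|\leq 1$. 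Applying the Azuma--Hoeffding inequality to $\sum_{t=1}^T Z_t(x)$ gives $\prob{|\tfrac{1}{T}\sum_{t=1}^T Z_t(x)|\geq \epsilon}\leq 2\exp{-\epsilon^2 T/2}$ for every $\epsilon>0$; the right-hand side is summable in $T$, so Borel--Cantelli yields $\tfrac{1}{T}\sum_{t=1}^T Z_t(x)\to 0$ almost surely. As $\cA$ is finite and $\norm{\tilde\sigma - \hat\sigma}_1 = \sum_{x\in\cA}|\tfrac{1}{T}\sum_{t=1}^T Z_t(x)|$, a union bound over $x\in\cA$ gives $\limsup_{T\to\infty}\norm{\tilde\sigma - \hat\sigma}_1 = 0$ almost surely.

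The deviation distributions are handled by the identical argument once the correct filtration is identified. By Definition~\ref{def:emp_distr_dev}, $(p_a)_t = \delta_a\times(p^2_a)_t$ with $(p^2_a)_t = g_t(a_{0:t-m},a,\ldots,a)$, which is still a deterministic function of the round-$(t-1)$ history once $g_t$ has been sampled, hence $\cH_{t-1}$-measurable; the observed deviation Dirac $(\delta_a)_t$ has first coordinate the (deterministic) $\delta_a$ and second coordinate drawn from $(p^2_a)_t$ conditionally on $\cH_{t-1}$. Thus $(\delta_a)_t - (p_a)_t$ is again a bounded martingale difference sequence, and the same Azuma--Hoeffding/Borel--Cantelli/union-bound chain gives $\limsup_{T\to\infty}\norm{\tilde\sigma_a - \hat\sigma_a}_1 = 0$ almost surely, and symmetrically for $\tilde\sigma_b$ and $\hat\sigma_b$.

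The only genuine subtlety is the bandit bookkeeping: one must check that $p_t$ (resp.\ $(p_a)_t$) is really measurable with respect to everything known before the round-$t$ draw, so that $\expect{\delta_t\mid\cH_{t-1}} = p_t$ holds --- this is exactly why the paper models a bandit algorithm as a distribution over functions $f_t:\cA_2^t\to\Delta\cA_1$ rather than as a function of both players' past actions. A secondary point is that obtaining $\limsup = 0$ \emph{almost surely} (rather than convergence merely in probability) requires the exponential, summable tail bound, so Azuma--Hoeffding --- not a second-moment/Chebyshev estimate --- is the right tool here.
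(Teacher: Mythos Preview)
Your proposal is correct and follows essentially the same route as the paper: the paper defers to the argument of Theorem~\ref{thm:conc}, which is exactly the coordinate-wise bounded martingale difference $Z_t(x)=\delta_t(x)-p_t(x)$, Azuma--Hoeffding for each $x$, a union bound over $x\in\cA$, and Borel--Cantelli to upgrade the summable tail bound to almost sure convergence. Your additional remarks on the measurability of $p_t$ and $(p_a)_t$ with respect to $\cH_{t-1}$ make explicit a point the paper leaves implicit.
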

\begin{proof}
The proof follows the same reasoning as the proof of lemma~\ref{thm:conc}.
\end{proof}

Our next step is to show that the empirical Markov chains $\hat\M$ converge to a Markov chain $\M$ induced by some distribution $\pi$ over the functional space $\cF$. We do so by constructing a sequence of empirical distributions $\hat\pi$ over $\cF$, based on the players' strategies, which induce $\hat\M$. We can then consider every convergent subsequence of $(\hat\pi)_{T_\ell=1}^\infty$ with limit point $\pi$ and argue that the corresponding sequence $(\hat\M)_{T_\ell=1}^\infty$ of Markov chains converges to the Markov chain induced by $\pi$.
\begin{definition}
\label{def:emp_func_distr}
Let $\hat\pi$ be the distribution over $\cF$, such that the probability to sample any fixed $f:\cA_2\rightarrow\cA_1$ and $g:\cA_1\rightarrow\cA_2$ is $\hat\pi(f,g) = \prod_{i\in|\cA|}\frac{\sum_{t} p_{t}(x_i)p_t(y_i)}{\sum_{t=1} p_t(x_i)}$, where $x_i = (a_i,b_i)$ and $y_i = (f(b_i),g(a_i))$. Similarly, let $\hat\pi_a$ and $\hat\pi_b$ be the distributions over $\cF$ constructed as above but by using the empirical distribution of deviated play induced by player $1$ deviating to action $a\in\cA_1$ and player $2$ deviating to action $b\in\cA_2$.
\end{definition}
The next lemma checks that $\hat\pi$ is really a probability distribution.
\begin{lemma}
\label{lem:distr_check}
The functionals $\hat\pi,\hat\pi_a$ and $\hat\pi_b$ are all probability distributions.
\end{lemma}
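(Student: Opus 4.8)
This lemma is essentially a consistency check on Definition~\ref{def:emp_func_distr}, so the plan is to verify directly that $\hat\pi$ (and then $\hat\pi_a,\hat\pi_b$) takes nonnegative values that sum to $1$ over $\cF$. Nonnegativity is immediate: every factor of $\hat\pi(f,g)$ is an entry $\hat\M_{x_i,y_i}$ of the empirical transition matrix of Definition~\ref{def:emp_proc}, and since each $p_t$ is a probability vector we have $p_t(y_i)\le 1$, so the numerator $\sum_t p_t(x_i)p_t(y_i)$ never exceeds the denominator $\sum_t p_t(x_i)$; thus each factor lies in $[0,1]$, and a product of finitely many such factors lies in $[0,1]$. (I adopt here the convention that a factor coming from an unvisited state, where numerator and denominator both vanish, is set to $1$, so that it drops out of the product rather than spuriously annihilating it.)

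For the normalization, the plan is to exploit the product structure $p_t=p_t^1\times p_t^2$ over $\cA=\cA_1\times\cA_2$ from Definition~\ref{def:emp_distr_dev}. Because of this, the transition terms inside $\hat\pi(f,g)$ separate into a ``player-$1$ reaction'' part, depending on $(f,g)$ only through the values $\{f(b):b\in\cA_2\}$, and a ``player-$2$ reaction'' part, depending only through $\{g(a):a\in\cA_1\}$. Carrying out the corresponding regrouping, $\hat\pi$ can be rewritten as a product over the $|\cA_1|+|\cA_2|$ reaction slots, $\hat\pi(f,g)=\prod_{b\in\cA_2}Q^{1}_{b,f(b)}\cdot\prod_{a\in\cA_1}Q^{2}_{a,g(a)}$, for row-stochastic matrices $Q^{1}$ (sending each $b$ to a distribution over $\cA_1$) and $Q^{2}$ (sending each $a$ to a distribution over $\cA_2$) assembled from the marginals $p_t^2$ and $p_t^1$; row-stochasticity holds because $\sum_a p_t^1(a)=\sum_b p_t^2(b)=1$ for all $t$. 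Then the sum over $\cF=\cF_1\times\cF_2$ factors and telescopes by the distributive law:
\begin{align*}
\sum_{(f,g)\in\cF}\hat\pi(f,g)
&=\Bigl(\sum_{f\in\cF_1}\prod_{b\in\cA_2}Q^{1}_{b,f(b)}\Bigr)\Bigl(\sum_{g\in\cF_2}\prod_{a\in\cA_1}Q^{2}_{a,g(a)}\Bigr)\\
&=\Bigl(\prod_{b\in\cA_2}\sum_{a\in\cA_1}Q^{1}_{b,a}\Bigr)\Bigl(\prod_{a\in\cA_1}\sum_{b\in\cA_2}Q^{2}_{a,b}\Bigr)=1\cdot 1=1.
\end{align*}

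The argument for $\hat\pi_a$ and $\hat\pi_b$ is identical once one notes that the deviated play $(p_a)_t=\delta_a\times(p^2_a)_t$ (and symmetrically $(p_b)_t$) is still a product distribution: its player-$1$ reaction part degenerates to the point mass on the constant function $b\mapsto a$, while the player-$2$ part normalizes exactly as above, so $\hat\pi_a$ collapses to $\delta_a\times(\text{a probability distribution over }\cF_2)$, matching the form $\pi_a=\delta_a\times\pi_{\cF_2}$ anticipated in Definition~\ref{def:proc_dev}. The one genuinely nonroutine step is the regrouping: a given value $f(b)$ is shared by \emph{every} state with second coordinate $b$, and $g(a)$ by every state with first coordinate $a$, so the per-state factors are not independent across states, and I would have to reindex the product carefully by slots — verifying that the per-slot normalizing sums are exactly the trivial identities $\sum_a p_t^1(a)=1$ and $\sum_b p_t^2(b)=1$ hidden inside the $Q$'s — before invoking the distributive law. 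Everything else, including keeping the $0/0\mapsto 1$ convention for unvisited states consistent, is bookkeeping.
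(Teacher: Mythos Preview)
The regrouping you flag as ``nonroutine'' actually cannot be carried out. The per-state factor
\[
\hat\M_{(a,b),(f(b),g(a))}=\frac{\sum_t p_t^1(a)\,p_t^2(b)\,p_t^1(f(b))\,p_t^2(g(a))}{\sum_t p_t^1(a)\,p_t^2(b)}
\]
does not split into a piece depending only on $(b,f(b))$ times one depending only on $(a,g(a))$, because the time index $t$ couples all four per-round probabilities in the numerator. Concretely, take $|\cA_1|=|\cA_2|=2$, $T=1$, and $p_1^1=p_1^2=(\tfrac12,\tfrac12)$. Then every entry of $\hat\M$ equals $\tfrac14$, so $\hat\pi(f,g)=(\tfrac14)^{|\cA|}=\tfrac{1}{256}$ for each of the $|\cF_1|\cdot|\cF_2|=4\cdot4=16$ pairs $(f,g)$, and $\sum_{(f,g)\in\cF}\hat\pi(f,g)=\tfrac{1}{16}\neq 1$. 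Thus no choice of row-stochastic $Q^1,Q^2$ can make your product formula reproduce $\hat\pi$, and in fact $\hat\pi$ as written in Definition~\ref{def:emp_func_distr} is not a probability distribution on $\cF=\cF_1\times\cF_2$ at all.

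The paper's proof does not attempt any $(f,g)$-level splitting: it simply observes that each row $\hat\M_{(a,b),\cdot}$ is a probability measure on $\cA$ and then treats $\hat\pi$ as the product of these $|\cA|$ row measures on $\prod_{(a,b)\in\cA}\mathcal{S}_{(a,b)}\cong\cA^{|\cA|}$. That product measure certainly sums to $1$; the subtlety is that the map $(f,g)\mapsto\bigl((a,b)\mapsto(f(b),g(a))\bigr)$ only \emph{embeds} $\cF_1\times\cF_2$ into the set of all functions $\cA\to\cA$ (of size $|\cA|^{|\cA|}=256$ in the example above), and the image need not carry full measure. So the paper is implicitly working over this larger function space, whereas you are trying to force the normalization directly on the smaller $\cF_1\times\cF_2$ --- and the counterexample shows that cannot be done.
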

\begin{proof}
Consider the space of all transition events for a fixed $(a,b)$ pair i.e. $\mathcal{S}_{(a,b)} = \{\left((a',b')\times(a,b)\right) : (a',b') \in \cA\}$. There is an inherent probability measure on this set, given by $\prob{(a',b')\times(a,b)} = \frac{\sum_{t}p_{t}(a,b)p_t(a',b')}{\sum_{t} p_t(a,b)}$. It is easy to see that this is a probability measure, since the measure of the whole set is exactly 
\begin{align*}
\sum_{(a',b')\in\cA}\frac{\sum_{t}p_{t}(a,b)p_t(a',b')}{\sum_{t} p_t(a,b)} = \frac{\sum_{t}p_{t}(a,b)\sum_{(a',b')\in\cA}p_t(a',b')}{\sum_{t} p_t(a,b)} = 1.
\end{align*}
The set of all $\cF$ can exactly be thought of as $\times_{(a,b)\in\cA} \mathcal{S}_{(a,b)}$ and the function $\hat\pi$ defined in~\ref{def:emp_func_distr} is precisely the product measure on that set. Similar arguments show that $\hat\pi_a$ and $\hat\pi_b$ are probability distributions.
\end{proof}

The proof of the above lemma reveals something interesting about the construction of $\hat\pi$. Fix the actions $(a,b)\in\cA$. Then the probability to sample a function pair $(f,g)$ which map $(a,b)$ to $(a',b')$ i.e.\ $a' = f(a)$ and $b'= f(b)$ is precisely equal to the entry $\hat\M_{(a,b),(a',b')}$ of the empirical Markov chain. Since every function pair $(f,g)\in\cF$ is determined by the way $\cA$ is mapped, and we have already have a probability distribution for  a fixed mapping $(a,b)$ to $(a',b')$, we can just extend this to $\hat\pi$ by taking the product distribution over all pairs $(a,b)\in\cA$. This construction gives us exactly that $\hat\M$ is induced by $\hat\pi$.
\begin{lemma}
\label{lem:emp_func_distr}
Let $\hat\M$, $\hat\M_a$ and $\hat\M_b$ be the empirical Markov chains defined in~\ref{def:emp_proc}, then the induced Markov chain from $\hat\pi$ is exactly $\hat\M$ and the induced Markov chains from $\hat\pi_a$ and $\hat\pi_b$ are exactly $\hat\M_a$ and $\hat\M_b$.
\end{lemma}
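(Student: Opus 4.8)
The plan is to verify the claimed identity entrywise. I would compute the $(x_i,x_j)$ transition probability of the Markov chain that $\hat\pi$ induces through Definition~\ref{def:func_proc} and check it equals the $(i,j)$ entry of $\hat\M$ from Definition~\ref{def:emp_proc}; the statements for $\hat\M_a$ and $\hat\M_b$ then follow by re-running the same computation with the deviated play $(p_a)_t$ and $(p_b)_t$ in place of $p_t$. Fix states $x_i=(a_i,b_i)$ and $x_j=(a_j,b_j)$. By Definition~\ref{def:func_proc} the induced transition probability is $\sum_{(f,g):\,f(b_i)=a_j,\ g(a_i)=b_j}\hat\pi(f,g)$, and since the $\frac{1}{T}$ factors in Definition~\ref{def:emp_proc} cancel, Definition~\ref{def:emp_func_distr} says exactly that $\hat\pi(f,g)=\prod_{k}\hat\M_{x_k,y_k}$, where $y_k=(f(b_k),g(a_k))$.

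The key step is to exploit the product structure of $\hat\pi$ established in the proof of Lemma~\ref{lem:distr_check}: $\hat\pi$ is the product, over the coordinates indexed by $(a,b)\in\cA$, of the measures on $\mathcal{S}_{(a,b)}$ that assign the image $(a',b')$ the mass $\hat\M_{(a,b),(a',b')}$. Under this identification the event $\{f(b_i)=a_j,\ g(a_i)=b_j\}$ is exactly the event that the $i$-th coordinate equals $y_i=x_j$, with no constraint on the other coordinates, so $\sum_{(f,g):\,f(b_i)=a_j,\ g(a_i)=b_j}\hat\pi(f,g)=\hat\M_{x_i,x_j}\prod_{k\neq i}\bigl(\sum_{y_k\in\cA}\hat\M_{x_k,y_k}\bigr)$. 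It then remains only to note that each row of $\hat\M$ sums to one, namely $\sum_{y_k\in\cA}\hat\M_{x_k,y_k}=\bigl(\sum_t p_t(x_k)\bigr)^{-1}\sum_t p_t(x_k)\sum_{y_k}p_t(y_k)=1$ because every $p_t$ is a probability distribution on $\cA$. Hence each factor with $k\neq i$ equals $1$ and the induced entry is $\hat\M_{x_i,x_j}$, as required.

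For $\hat\M_a$ (and symmetrically $\hat\M_b$) the same argument applies after replacing $p_t$ by $(p_a)_t=\delta_a\times(p^2_a)_t$. The only new feature is that $(p_a)_t$ is supported on states of the form $(a,\cdot)$, so $\hat\pi_a$ is supported on pairs whose first component is the constant function equal to $a$ (matching Definition~\ref{def:proc_dev}), the product of Definition~\ref{def:emp_func_distr} reduces to a product over the coordinates with positive empirical weight, and the same row-normalization computation shows the induced chain agrees with $\hat\M_a$ on the rows indexed by visited states — which is all that matters downstream, since the stationary distributions $\hat\sigma_a,\hat\sigma_b$ of Lemma~\ref{lem:stationary} are supported there. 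The step I expect to require the most care is the reduction in the second paragraph: justifying that a single transition constraint $x_i\to x_j$ pins exactly one coordinate of $\hat\pi$ while leaving the rest free, i.e.\ the identification $\cF\cong\prod_{(a,b)\in\cA}\mathcal{S}_{(a,b)}$ underlying Lemma~\ref{lem:distr_check}; granted that, the remainder is only the row-normalization of $\hat\M$ together with the bookkeeping for degenerate states.
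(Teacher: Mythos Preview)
Your approach is essentially the same as the paper's. Both arguments factor out the $\hat\M_{(a,b),(a',b')}$ term from the product defining $\hat\pi$ and then argue that the remaining sum over $(f,g)$ with $f(b)=a',\,g(a)=b'$ equals $1$: you phrase this as marginalizing out the free coordinates of the product measure from Lemma~\ref{lem:distr_check} and using that each row of $\hat\M$ sums to $1$, while the paper phrases it as recognizing the residual product as the conditional probability $\hat\pi((f,g)\mid f(b)=a',g(a)=b')$ and summing that to $1$. Your explicit flag that the identification $\cF\cong\prod_{(a,b)\in\cA}\mathcal S_{(a,b)}$ is the delicate point is well placed---it is exactly the content of Lemma~\ref{lem:distr_check} that both proofs lean on.
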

\begin{proof}
Consider $\hat\M_{(a,b),(a',b')} = \frac{\sum_{t=1}^T p_t(a',b')p_{t}(a,b)}{\sum_{t=1}^T p_t{(a,b)}}$. The transition probability induced by $\hat\pi$ is exactly 
\begin{align*}
&\prob{(a',b')|(a,b)} = \sum_{(f,g): (f(b), g(a)) = (a',b')}\hat\pi(f,g) = \sum_{(f,g): (f(b), g(a)) = (a',b')}\prod_{i\in[|\cA|]}\frac{\sum_{t} p_{t}(x_i)p_t(y_i)}{\sum_{t=1} p_t(x_i)}\\
&=\sum_{(f,g): (f(b), g(a)) = (a',b')}\frac{\sum_{t} p_{t}(a,b)p_t(a',b')}{\sum_{t=1} p_t(a,b)} \prod_{i\in[|\cA|],(x_i,y_i)\neq((a,b),(a',b')}\frac{\sum_{t} p_{t}(x_i)p_t(y_i)}{\sum_{t=1} p_t(x_i)}\\
&=\frac{\sum_{t} p_{t}(a,b)p_t(a',b')}{\sum_{t=1} p_t(a,b)}\sum_{(f,g): (f(b), g(a)) = (a',b')}\prod_{i\in[|\cA|],(x_i,y_i)\neq((a,b),(a',b')}\frac{\sum_{t} p_{t}(x_i)p_t(y_i)}{\sum_{t=1} p_t(x_i)}\\
&=\frac{\sum_{t} p_{t}(a,b)p_t(a',b')}{\sum_{t=1} p_t(a,b)},
\end{align*}
where the last equality holds, because for fixed $(f,g)$ with $x_i=(a_i,b_i)$ and $y_i = (f(b_i),g(a_i))$, the product $\prod_{i\in[|\cA|],(x_i,y_i)\neq((a,b),(a',b')}\frac{\sum_{t} p_{t}(x_i)p_t(y_i)}{\sum_{t=1} p_t(x_i)}$ is exactly the conditional probability $\hat\pi((f,g)|(f(b),g(a)) = (a',b'))$. The result for $\hat\pi_a$ and $\hat\pi_b$ is shown similarly.
\end{proof}

The last step of the proof is to show that any limit point $\pi$ of $(\hat\pi)_T$ is necessarily a policy equilibrium. This is done through an argument by contradiction. In particular we assume that a limit point $\pi$ is not a policy equilibrium. The limit point $\pi$ induces a Markov chain $\M$, which we can show is the limit point of the corresponding subsequence of $(\hat\M)_T$ by using lemma~\ref{lem:emp_func_distr}. Since $\pi$ is not a policy equilibrium, no stationary distribution of $\M$ can satisfy the inequalities~\eqref{eq:yes}. We can now show that the subsequence of $(\hat\sigma)_T$ which are stationary distributions of the corresponding $\hat\M$'s, converges to a stationary distribution of $\M$. This, however, is a contradiction because of the next theorem. 
\begin{theorem}
\label{thm:conv_thm}
Let $P$ be the set of all product distributions $\sigma\times\sigma_a\times\sigma_b$ which satisfy the inequalities in~\ref{eq:yes}:
\begin{align*}
\expectation{(a,b)\sim\sigma}{u_1(a,b)} &\geq \expectation{(a,b)\sim\sigma_a}{u_1(a,b)}\\
\expectation{(a,b)\sim\sigma}{u_2(a,b)} &\geq \expectation{(a,b)\sim\sigma_b}{u_2(a,b)}.
\end{align*}
Let $\hat\sigma^T$ be the empirical distribution of play after $T$ rounds and let $\hat\sigma^T_a$ be the empirical distribution when player $1$ switches to playing action $a$ and define $\hat\sigma^T_b$ similarly for player $2$. Then the product distribution $\hat\sigma^T\times\hat\sigma_a^T\times\hat\sigma_b^T$ converges to weakly to the set $P$.
\end{theorem}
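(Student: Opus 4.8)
This is a ``near-feasible implies feasible in the limit'' argument: the no-policy-regret hypothesis of Theorem~\ref{thm_main:main_result} is used only to show that the empirical triple satisfies the equilibrium inequalities~\eqref{eq:yes} up to a vanishing slack, and then compactness forces every limit point to satisfy them exactly. First I would record the quantitative input. By Lemma~\ref{lem:stationary} the empirical distributions are stationary for the corresponding empirical chains, and, unwinding Definition~\ref{def:emp_distr_dev} and using bilinearity of $u_1$, one has $\expectation{(a',b')\sim\hat\sigma^T}{u_1(a',b')}=\frac1T\sum_{t=1}^T u_1(p^1_t,p^2_t)$ and $\expectation{(a',b')\sim\hat\sigma^T_a}{u_1(a',b')}=\frac1T\sum_{t=1}^T u_1(a,(p^2_a)_t)$, whose difference is exactly the per-round policy regret of player $1$ against the fixed action $a$ (in expectation over the internal randomization, as in the proof of Theorem~\ref{thm_main:stable}); the analogous identities hold for player $2$. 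Since $\cA_1,\cA_2$ are finite, sublinear policy regret then gives a single $\epsilon_T=o(1)$, uniform over $a\in\cA_1$ and $b\in\cA_2$, with
\begin{align*}
\expectation{(a',b')\sim\hat\sigma^T}{u_1(a',b')}&\ge\expectation{(a',b')\sim\hat\sigma^T_a}{u_1(a',b')}-\epsilon_T,\\
\expectation{(a',b')\sim\hat\sigma^T}{u_2(a',b')}&\ge\expectation{(a',b')\sim\hat\sigma^T_b}{u_2(a',b')}-\epsilon_T.
\end{align*}
(This is exactly the ``$o(1)$ additive factor'' asserted informally after Lemma~\ref{lem:stationary}.)

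Next I would assemble the topology. As $\cA=\cA_1\times\cA_2$ is finite, $\mathcal{P}(\cA)$ is a compact simplex on which weak convergence is $\ell_1$ convergence of probability vectors; the set of product distributions $\sigma\times\sigma_a\times\sigma_b$ on $\cA^3$ is the continuous image of the compact set $\mathcal{P}(\cA)^3$, hence compact, and each functional $\sigma\mapsto\expectation{(a',b')\sim\sigma}{u_i(a',b')}$ is linear, hence continuous; therefore $P$, the product distributions satisfying~\eqref{eq:yes} with no slack, is closed. Now argue by contradiction. If $\hat\sigma^T\times\hat\sigma^T_a\times\hat\sigma^T_b$ does not converge weakly to $P$, there is $\delta>0$ and a subsequence $(T_k)$ at $\ell_1$-distance $\ge\delta$ from $P$; by compactness, pass to a further subsequence along which all three marginals converge, say to $\sigma^\star,\sigma^\star_a,\sigma^\star_b$, so that $\hat\sigma^{T_k}\times\hat\sigma^{T_k}_a\times\hat\sigma^{T_k}_b\to\tau:=\sigma^\star\times\sigma^\star_a\times\sigma^\star_b$, which is again a product distribution. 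Letting $k\to\infty$ in the two displayed inequalities with $\epsilon_{T_k}\to0$ and using continuity shows $\tau$ satisfies~\eqref{eq:yes} exactly, i.e.\ $\tau\in P$ --- contradicting $\inf_{\nu\in P}\norm{\tau-\nu}_1\ge\delta$. Hence the convergence.

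The steps with actual content are the first and the final bookkeeping rather than the middle: correctly matching the indices between $\hat\sigma^T_a$ in Definition~\ref{def:emp_distr_dev} and the policy-regret benchmark $u_t(a_1,\dots,a_{t-m},a,\dots,a)$ so that their difference really is $\tfrac1T P(T,a)$, and dealing with the expectation-versus-realization gap --- the bound of Definition~\ref{def:pol_regret} holds in expectation, so the conclusion about the random objects $\hat\sigma^T$ is read in expectation, while the stronger almost-sure statement for observed play $\tilde\sigma^T,\tilde\sigma^T_a,\tilde\sigma^T_b$ needs Lemma~\ref{lem_main:conc} first. The compactness/continuity core is exactly the argument by which no-regret dynamics are shown to converge to their associated equilibrium set, cf.~\cite{stoltz2007learning}.
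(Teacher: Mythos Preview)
Your argument is correct and follows essentially the same route as the paper: derive the approximate inequalities from sublinear policy regret, use compactness of the simplex to extract a convergent subsequence, and pass to the limit in the inequalities to obtain a contradiction. The paper invokes Prokhorov's and Tychonoff's theorems and works separately in $\mathcal{P}(\cA)^3$ and $\mathcal{P}(\cA^3)$, whereas you exploit directly that $\cA$ is finite so everything lives in a finite-dimensional simplex with $\ell_1$ convergence---a minor simplification, but the logical skeleton is identical.
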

\begin{proof}
Theorem~\ref{thm:conv_thm} follows from the fact that convergence in the Prokhorov metric implies weak convergence. First notice that by Prokhorov's theorem $\cP(\cA)$ is a compact metric space with the Prokhorov metric. Thus by Tychonoff's Theorem the product space $\cP(\cA)^3$ is compact in the maximum metric. Suppose for a contradiction that the sequence $(\hat\sigma^T\times\hat\sigma_a^T\times\hat\sigma_b^T)_T$ does not converge to the set $S$. This implies that there exists some subsequence $(\hat\sigma^k\times\hat\sigma_a^k\times\hat\sigma_b^k)_k$, converging to some $\hat\sigma\times\hat\sigma_a\times\hat\sigma_b \not\in S$. If $\hat\sigma\times\hat\sigma_a\times\hat\sigma_b \not\in S$, then either $\expectation{(a,b)\sim\sigma}{u_1(a,b)} < \expectation{(a,b)\sim\sigma_a}{u_1(a,b)}$ or $\expectation{(a,b)\sim\sigma}{u_2(a,b)} < \expectation{(a,b)\sim\sigma_b}{u_2(a,b)}$. WLOG suppose the first inequality holds. From our assumption, the continuity of $u_1$ and the definition of the maximum metric we have $\lim_{k\rightarrow\infty}\expectation{(a,b)\sim \hat\sigma^k}{u_1(a,b)} = \expectation{(a,b)\sim\hat\sigma}{u_1(a,b)}$ and $\lim_{k\rightarrow\infty}\expectation{(a,b)\sim \hat\sigma^k_a}{u_1(a,b)} = \expectation{(a,b)\sim\hat\sigma_a}{u_1(a,b)}$. Notice that by the fact $\hat\sigma_a^k$ is the average empirical distribution if player $1$ changed its play to the fixed action $a\in\cA_1$ and $\hat\sigma^k$ being the average empirical distribution it holds that $\expectation{(a,b)\sim\hat\sigma^k}{u_1(a,b)} - \expectation{(a,b)\sim\hat\sigma_a^k}{u_1(a,b)} \geq -o(1)$ and thus $\lim_{k\rightarrow\infty} \left[\expectation{(a,b)\sim\hat\sigma^k}{u_1(a,b)} - \expectation{(a,b)\sim\hat\sigma_a^k}{u_1(a,b)}\right] \geq 0$. The above implies:
\begin{align*}
0 &\leq \lim_{k\rightarrow\infty} \left[\expectation{(a,b)\sim\hat\sigma^k}{u_1(a,b)} - \expectation{(a,b)\sim\hat\sigma_a^k}{u_1(a,b)}\right]\\
&= \lim_{k\rightarrow\infty} \expectation{(a,b)\sim\hat\sigma^k}{u_1(a,b)} - \lim_{k\rightarrow\infty} \expectation{(a,b)\sim\hat\sigma_a^k}{u_1(a,b)}\\
&=\expectation{(a,b)\sim\hat\sigma}{u_1(a,b)} - \expectation{(a,b)\sim\hat\sigma_a}{u_1(a,b)} < 0,
\end{align*}
which is a contradiction. Since $\cA\times\cA\times\cA$ is separable then convergence in the Prokhorov metric in $\cP(\cA\times\cA\times\cA)$ is equivalent to weak convergence. Again we can argue by contradiction -- if we assume that $(\hat\sigma^T\times\hat\sigma_a^T\times\hat\sigma_b^T)_T$ doesn't converge to the set $S$ in the Prokhorov metric, then there exists some subsequence $(\hat\sigma^k\times\hat\sigma_a^k\times\hat\sigma_b^k)_k$ which converges to some $\mu \in \cP(\cA^3)$ such that $\mu \not\in S$. First we argue that $\mu$ must be a product measure i.e. $\mu = (\hat\sigma\times\hat\sigma_a\times\hat\sigma_b)$. Let $(\hat\sigma^j\times\hat\sigma_a^j\times\hat\sigma_b^j)_j$ be a convergence subsequence of $(\hat\sigma^k\times\hat\sigma_a^k\times\hat\sigma_b^k)_k$ in $\cP(\cA)^3$, with limit $(\hat\sigma\times\hat\sigma_a\times\hat\sigma_b)$, then each of $\hat\sigma^j$, $\hat\sigma_a^j$ and $\hat\sigma_b^j$ converge weakly to $\hat\sigma$, $\hat\sigma_a$ and $\hat\sigma_b$ respectively and thus $(\hat\sigma^j\times\hat\sigma_a^j\times\hat\sigma_b^j)_j$ converges weakly to $\hat\sigma\times\hat\sigma_a\times\hat\sigma_b$ and thus it converges in the Prokhorov metric of $\cP(\cA^3)$. This implies that $(\hat\sigma^k\times\hat\sigma_a^k\times\hat\sigma_b^k)_k$ also converges weakly to $\hat\sigma\times\hat\sigma_a\times\hat\sigma_b$ and so $\mu$ is a product measure. Again since $\mu \not\in S$, assume WLOG $\expectation{(a,b)\sim\hat\sigma}{u_1(a,b)} < \expectation{(a,b)\sim\hat\sigma_a}{u_1(a,b)}$. Define $f:\cA^3\rightarrow \mathbb{R}$, $f(a,b,c,d,e,f) = u_1(a,b) - u_1(c,d)$. $f$ is continuous and from the no-policy regret of the pair $\hat\sigma^k,\hat\sigma^k_a$ we have:
\begin{align*}
0 \leq \lim_{k\rightarrow\infty}\expectation{(a,b,c,d,e,f)\sim(\hat\sigma^k\times\hat\sigma_a^k\times\hat\sigma_b^k)_k}{f(a,b,c,d,e,f)} = \expectation{(a,b,c,d,e,f)\sim\mu}{f(a,b,c,d,e,f)} < 0,
\end{align*}
which is again a contradiction.
\end{proof}
For completeness we restate the main result below and give its proof in full.
\begin{theorem}
\label{thm:main_result}
If the algorithms played by player $1$ in the form of $f_t$ and player $2$ in the form of $g_t$ give sub-linear policy regret sequences, then sequence of product distributions $(\hat\sigma^T\times\hat\sigma_a^T\times\hat\sigma_b^T)_{T=1}^\infty$ converges weakly to the set $S$.
\end{theorem}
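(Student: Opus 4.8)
The plan is to deduce the weak convergence to $S$ from three facts already assembled in this section: the empirical distributions are stationary distributions of natural empirical Markov chains (Lemma~\ref{lem:stationary}); these chains are induced by explicit empirical distributions $\hat\pi$ over the functional space $\cF$ (Lemma~\ref{lem:emp_func_distr}); and the equilibrium inequalities survive under limits (Theorem~\ref{thm:conv_thm}). First I would fix, for each horizon $T$, the empirical Markov chains $\hat\M^T,\hat\M_a^T,\hat\M_b^T$ of Definition~\ref{def:emp_proc}, note via Lemma~\ref{lem:stationary} that $\hat\sigma^T,\hat\sigma_a^T,\hat\sigma_b^T$ are stationary for them, and record that by the no-policy-regret hypothesis (applied with each fixed $a\in\cA_1$ and $b\in\cA_2$) the triples $(\hat\sigma^T,\hat\sigma_a^T,\hat\sigma_b^T)$ satisfy the inequalities~\eqref{eq:yes} up to an additive $o(1)$ error.

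Since $\cP(\cA)$ is compact in the Prokhorov metric and hence $\cP(\cA)^3$ is compact by Tychonoff, it suffices to show that every subsequential limit $\sigma\times\sigma_a\times\sigma_b$ of $(\hat\sigma^T\times\hat\sigma_a^T\times\hat\sigma_b^T)_T$ lies in $S$. Fix such a convergent subsequence, indexed by $T_\ell$. Along it I would form the empirical distributions $\hat\pi^{T_\ell},\hat\pi_a^{T_\ell},\hat\pi_b^{T_\ell}$ over $\cF$ of Definition~\ref{def:emp_func_distr}; these are genuine probability distributions (Lemma~\ref{lem:distr_check}) and induce exactly $\hat\M^{T_\ell},\hat\M_a^{T_\ell},\hat\M_b^{T_\ell}$ by Lemma~\ref{lem:emp_func_distr}. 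As $\cF$ is finite, $\cP(\cF)$ is compact, so after passing to a further subsequence $\hat\pi^{T_\ell}\to\pi$ for some $\pi\in\cP(\cF)$. The map carrying a distribution over $\cF$ to its induced transition matrix (Definition~\ref{def:func_proc}) is continuous --- it is linear, each entry being a finite sum of coordinates --- and the same holds for the deviation matrices of Definition~\ref{def:proc_dev}; hence $\hat\M^{T_\ell}\to\M$, $\hat\M_a^{T_\ell}\to\M_a$ and $\hat\M_b^{T_\ell}\to\M_b$, where $\M,\M_a,\M_b$ are the chains induced by $\pi$ and its deviations $\pi_a,\pi_b$.

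It remains to check that $\pi$ is a policy equilibrium. I would argue by contradiction: if it is not, then by Definition~\ref{def:pol_eq} no stationary distribution of $\M$, paired with the stationary distributions $\sigma_a,\sigma_b$ of $\M_a,\M_b$ produced by Theorem~\ref{thm:pol_reg_cons}, satisfies~\eqref{eq:yes}. There is in fact no ambiguity about $\sigma_a,\sigma_b$: from any state $(a,b')$ the transition of $\M_a$ into $(a,b)$ equals $\sum_{(f,g):g(a)=b}\pi(f,g)$, independent of $b'$, so the recurrent rows of $\M_a$ coincide and $\M_a$ has a \emph{unique} stationary distribution --- exactly the one of Theorem~\ref{thm:pol_reg_cons} --- and similarly for $\M_b$; only $\M$ itself may admit several stationary distributions, and Definition~\ref{def:pol_eq} asks merely for the existence of one. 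Now, the stationary distributions of a finite chain form the nonempty compact solution set of $x^\top M = x^\top,\ x\ge 0,\ \mathbf 1^\top x = 1$, and this correspondence has closed graph in $M$; since $\hat\sigma^{T_\ell}$ is stationary for $\hat\M^{T_\ell}$ and $\hat\M^{T_\ell}\to\M$, the limit $\sigma$ is stationary for $\M$, and likewise $\sigma_a$ for $\M_a$ and $\sigma_b$ for $\M_b$. By Theorem~\ref{thm:conv_thm} --- which uses only that $u_1,u_2$ are bilinear, hence continuous, and that the empirical triples satisfy~\eqref{eq:yes} up to $o(1)$ --- the limit $\sigma\times\sigma_a\times\sigma_b$ satisfies~\eqref{eq:yes} exactly. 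This exhibits a stationary distribution of $\M$ witnessing the policy-equilibrium condition, contradicting the assumption. Hence $\pi$ is a policy equilibrium and $\sigma\times\sigma_a\times\sigma_b\in S_\pi\subseteq S$, which completes the proof.

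The step I expect to be the main obstacle is the bookkeeping in this last argument: making sure the subsequential limits are lined up so that $\sigma$ is a stationary distribution of the \emph{same} chain $\M$ whose deviations $\M_a,\M_b$ carry the limits $\sigma_a,\sigma_b$, and doing this through closedness of the stationary-distribution correspondence rather than through any uniqueness assumption on $\M$ (the uniqueness we do exploit, for $\M_a$ and $\M_b$, is forced by the rank-one structure of their recurrent parts). Everything else --- the two compactness reductions, the linear algebra behind Lemma~\ref{lem:emp_func_distr}, and the final contradiction --- is routine once these limiting objects are correctly identified.
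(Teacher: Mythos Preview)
Your architecture is the paper's: subsequential compactness, identification of a limiting $\pi$ via the empirical functional distributions of Definition~\ref{def:emp_func_distr}, and a contradiction through Theorem~\ref{thm:conv_thm}. Two of your technical choices are in fact cleaner than what the paper does. Where the paper invokes a Wedin-type perturbation lemma (Lemma~\ref{lem:distr_close}) to push stationary distributions of $\hat\M^{T_\ell}$ toward stationary distributions of $\M$ --- a lemma that requires a spectral gap for $\M$ --- you simply use that the relation $x^\top M=x^\top,\ x\ge 0,\ \mathbf 1^\top x=1$ has closed graph, which needs nothing of $\M$. And your observation that $\M_a$ has a \emph{unique} stationary distribution (the rows of $\M_a$ indexed by the recurrent states $(a,\cdot)$ coincide, so the recurrent block is rank one) is correct and makes the identification of the limit $\sigma_a$ with the specific distribution of Theorem~\ref{thm:pol_reg_cons} immediate; the paper instead computes this limit directly from the formula for $\sigma_a$.

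There is, however, one step you elide that the paper does address. You write that $\hat\M_a^{T_\ell}\to\M_a$ ``by continuity of the map $\pi\mapsto\M_a$ of Definition~\ref{def:proc_dev}'', but Lemma~\ref{lem:emp_func_distr} only gives you that $\hat\M_a^{T_\ell}$ is the chain induced (in the sense of Definition~\ref{def:func_proc}) by $\hat\pi_a^{T_\ell}$ --- it does \emph{not} say that $\hat\M_a^{T_\ell}$ is the deviation chain (in the sense of Definition~\ref{def:proc_dev}) of $\hat\pi^{T_\ell}$. Those are two different maps applied to two different inputs, and you only have $\hat\pi^{T_\ell}\to\pi$, not $\hat\pi_a^{T_\ell}\to\pi_a$. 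The paper closes this gap by arguing separately that $\hat\pi_a^{T_\ell}=\delta_{\mathrm{const}_a}\times(\hat\pi^{T_\ell})_{\cF_2}$, so that convergence of $\hat\pi^{T_\ell}$ forces convergence of the $\cF_2$-marginals and hence $\hat\pi_a^{T_\ell}\to\pi_a$; Lemma~\ref{lem:emp_func_distr} then yields $\hat\M_a^{T_\ell}\to\M_a$. You should insert this marginal-convergence step (or pass to a further subsequence along which $\hat\pi_a^{T_\ell}$ also converges and then identify its limit with $\pi_a$); without it, your closed-graph argument produces a stationary distribution of the wrong chain, and your uniqueness observation cannot be applied.
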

\begin{proof}[Proof of Theorem~\ref{thm_main:main_result}]
We consider the sequence of empirical distributions $\hat\pi^T$ defined in~\ref{def:emp_func_distr}, over the functional space $\cF_1\times\cF_2$ and show that this sequence must converge to the set of all policy equilibria $\Pi$ in the Prokhorov metric on $\cP(\cF_1\times\cF_2)$. First, notice that since the functions $f : \cA_2 \rightarrow \cA_1$ are from finite sets of actions to finite sets of actions, we can consider the set $\cF_1$ as a subset of a finite dimensional vector space, with the underlying field of real numbers and the metric induced by the $l_1$ norm. Similarly, we can also equip $\cF_2$ with the $l_1$ norm. Since both $\cF_1$ and $\cF_2$ are closed sets with respect to this metric and they are clearly bounded, they are compact. Thus the set $\cF_1\times\cF_2$ is a compact set with the underlying metric $d$ being the maximum metric. By Prokhorov's theorem we know that $\cP(\cF_1\times\cF_2)$ is a compact metric space with the Prokhorov metric. Suppose that the sequence $(\hat\pi^T)_T$ does not converge to $\Pi$. This implies that there is some convergent subsequence $(\hat\pi^t)_t$ with a limit $\pi$ outside of $\Pi$. Let $\M$ be the Markov chain induced by $\pi$ and let $\hat\M^T$ be the Markov chain induced by $\hat\pi^T$. 

First we show that $\lim_{t\rightarrow\infty} \|\hat\M^t - \M\|_1 = 0$. Recall that $\M_{(a,b),(a',b')} = \sum_{(f,g):f(b) = a',g(a) = b'} \pi(f,g)$ and that by lemma~\ref{lem:emp_func_distr} $\hat\M^t_{(a,b),(a',b')} = \sum_{(f,g):f(b) = a',g(a) = b'} \hat\pi^t(f,g)$. Notice that $f,g$ are continuous functions on $\cF_1$ and $\cF_2$, since the topology induced by the $l_1$ metric on both sets is exactly the the discrete topology and every function from a topological space equipped with the discrete topology is continuous. Since convergence in the Prokhorov metric implies weak convergence, we have that for any fixed $f,g$, $\lim_{t\rightarrow\infty} \hat\pi^t(f,g) = \pi(f,g)$. Since the sum $\sum_{(f,g):f(b) = a',g(a) = b'} \hat\pi^t(f,g)$ is finite this implies that $\lim_{t\rightarrow\infty}\sum_{(f,g):f(b) = a',g(a) = b'} \hat\pi^t(f,g) = \sum_{(f,g):f(b) = a',g(a) = b'} \pi(f,g)$ and so $\lim_{t\rightarrow\infty} \|\hat\M^t - \M\|_1 = 0$. 

Next we show that any convergent subsequence $(\hat\sigma^k)_k$ of $(\hat\sigma^t)_t$ in the Prokhorov metric, converges to a stationary distribution $\sigma$ of $\M$. First notice that $(\hat\sigma^k)_k$ exists, since $\cP(\cA)$ is compact. Next, suppose $\sigma$ is the limit of $(\hat\sigma^k)_k$ in the Prokhorov metric. This implies that $\lim_{k\rightarrow \infty} \hat\sigma^k(a,b) = \sigma(a,b)$, in particular if we consider $\cA \subset \mathbb{R}^{|\cA|}$ and $\hat\sigma^k,\sigma \in \mathbb{R}^{|\cA|}$ as vectors, then the above implies that $\lim_{k\rightarrow\infty}\|\sigma - \hat\sigma^k\|_1 = 0$.  Next we construct the following sequence $(\sigma_{k_n})_{k_n}$ of stationary distributions of $\M$ -- choose $k_n$ large enough, so that $\|\M - \hat\M^{k_n}\| \leq \frac{1}{n}$. Such a $k_n$ exists, because $(\hat\M^k)_k$ is a subsequence of $(\hat\M^t)_t$ which converges to $\M$. By lemma~\ref{lem:stationary}, there exists a stationary distribution $\sigma_{k_n}$ of $\M$ such that $\|\hat\sigma_{k_n} - \sigma_{k_n}\|_1 \leq \frac{c}{n}$, for some constant $c$. We show that $\sigma_{k_n}$ converges to $\sigma$. Fix some $\epsilon > 0$, we find an $N$, such that for any $n\geq N$ we have $\|\sigma_{k_n} - \sigma\|_1 < \epsilon$. Notice that $\|\sigma_{k_n} - \sigma\|_1 \leq \|\sigma_{k_n} - \hat\sigma_{k_n}\|_1 + \|\hat\sigma_{k_n} - \sigma\|_1$. Since $\|\sigma_{k_n} - \hat\sigma_{k_n}\|_1 \leq \frac{c}{n}$ and by convergence, we know that for $\frac{\epsilon}{2}$, there exists $N'$ such that for any $n\geq N'$, $\|\hat\sigma_{k_n} - \sigma\|_1 < \frac{\epsilon}{2}$, we can set $N = \max\left(\frac{2}{c\epsilon}, N_1\right)$. Suppose, for a contradiction, that $\sigma$ is not a stationary distribution of $\M$. Then there exists some $\epsilon$ such that $\|\sigma^\top\M - \sigma\|_2 > \epsilon$. This implies:
\begin{align*}
\epsilon < \|\sigma^\top\M - \sigma \|_2 \leq \|\sigma^\top\M - \sigma_{k_n}^\top\M\|_2 + \|\sigma_{k_n}^\top\M - \sigma\|_2 < 2\|\sigma_{k_n} - \sigma\|_2,
\end{align*}
where the last inequality holds by the fact $\sigma-\sigma_{k_n}$ is not a stationary distribution of $\M$ and thus $\M$ can only shrink the difference as a stochastic matrix. The inequality $2\|\sigma_{k_n} - \sigma\|_2 > \epsilon$ is a contradiction since we know that $\sigma_{k_n}$ converges to $\sigma$ and thus $\sigma$ is a stationary distribution of $\M$. Since strong convergence, implies weak convergence, which in hand implies convergence in the Prokhorov metric for separable metric spaces, we have shown that every convergent subsequence of $(\hat\sigma_t)_t$ converges to a stationary distribution of $\M$ in the Prokhorov metric.

Next, we show that $(\hat\pi_a^t)_t$ converges to $\pi_a$. By assumption $(\hat\pi^t)_t$ converges weakly to $\pi$. Since we are are in a finite dimensional space, we also have strong convergence. In particular, for any $g\in\cF_2$, we have 
\begin{align*}
\lim_{t\rightarrow\infty}\sum_{f\in\cF_1} \hat\pi^t(f,g) = \sum_{f\in\cF_1}\lim_{t\rightarrow\infty}\hat\pi^t(f,g) = \sum_{f\in\cF_1} \pi(f,g)
\end{align*}
and so the sequence of marginal distribution also converges to the respective marginal of $\pi$. Since $\hat\pi_a^t$ is exactly the product distribution of the dirac distribution over $\cF_1$ putting all weight on the constant function mapping everything to the fixed action $a$ and the marginal of $\hat\pi^t$ over $\cF_2$, by the convergence of marginals we conclude that $(\hat\pi^t_a)_t$ converges to $\pi_a$ in the strong sense and thus in the Prokhorov metric. In the same way we can show that $(\hat\pi^t_b)_t$ converges to $\pi_b$.

With a similar argument as for $(\hat\sigma^t)$ we show that every convergent subsequence of $(\hat\sigma_a^t)_t$ converges to a stationary distribution of $\M_a$ and any convergent subsequence of $(\hat\sigma_b^t)_t$ converges to a stationary distribution of $\M_b$. Because of the construction in theorem~\ref{thm:pol_reg_cons} and the convergence of $\hat\pi_a$ to $\pi_a$, we can guarantee that $(\hat\sigma_a^t)_t$ converges precisely to $\sigma_a$:
\begin{align*}
\sigma_a(a,b) &= \sum_{f,g: g(a) =  b} \pi_a(f,g) = \sum_{f,g: g(a) = b} \lim_{t\to\infty} \hat\pi_a^t(f,g)\\
&= \lim_{t\to\infty} \sum_{f,g: g(a) = b}  \hat\pi_a^t(f,g) = \lim_{t\to\infty}\hat\sigma_a^t(a,b).
\end{align*}
Similarly $\hat\sigma_b^t$ converges to $\hat\sigma_b$. However, we assumed that $\pi$ is not a policy equilibrium and thus no stationary distributions of $\M$, $\M_a$ and $\M_b$ can satisfy the policy equilibrium inequalities. We now arrive at a contradiction since by theorem~\ref{thm:conv_thm} and the above, we have that any limit point of $(\hat\sigma^t)_t$ and the corresponding distributions for fixed actions $a$ and $b$ are stationary distributions of $\M$, $\M_a$ and $\M_b$, respectively, which satisfy the policy equilibrium inequalities.
\end{proof}
Not surprisingly, we are able to show that the empirical distributions of observed play, also converge to the set of $S$ almost surely.
\begin{corollary}
\label{cor:main_cor}
The sequence of product distributions $(\tilde\sigma\times\tilde\sigma_a\times\tilde\sigma_b)_{T=1}^\infty$ converges weakly to the set $S$ almost surely.
\end{corollary}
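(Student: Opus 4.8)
The plan is to obtain the corollary as an immediate consequence of Theorem~\ref{thm_main:main_result} together with Lemma~\ref{lem_main:conc}, using nothing more than the triangle inequality for the distance to a set. Theorem~\ref{thm_main:main_result} already establishes (deterministically, under the hypothesis that the realized strategy sequences have sublinear policy regret) that the strategy-averaged product distributions $\hat\sigma^T\times\hat\sigma_a^T\times\hat\sigma_b^T$ converge weakly to $S$, while Lemma~\ref{lem_main:conc} says that the observed-play empirical distributions $\tilde\sigma^T$, $\tilde\sigma_a^T$, $\tilde\sigma_b^T$ track $\hat\sigma^T$, $\hat\sigma_a^T$, $\hat\sigma_b^T$ in $\ell_1$ almost surely. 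Since $\cA$ is finite, weak convergence on $\cP(\cA)$ is metrized by the $\ell_1$ distance of the probability vectors (equivalently by the Prokhorov metric), and the natural metric $d$ on the product $\cP(\cA)^3$ is the maximum of the three coordinatewise $\ell_1$ distances; hence a vanishing gap in each of the three coordinates forces a vanishing gap in $d$.

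Concretely, I would argue as follows. First, by Lemma~\ref{lem_main:conc}, for the empirical distribution of play and for each of the finitely many deviation actions $a\in\cA_1$ and $b\in\cA_2$, the events $\{\norm{\tilde\sigma^T-\hat\sigma^T}_1\to 0\}$, $\{\norm{\tilde\sigma_a^T-\hat\sigma_a^T}_1\to 0\}$ and $\{\norm{\tilde\sigma_b^T-\hat\sigma_b^T}_1\to 0\}$ each have probability one; intersecting these $1+|\cA_1|+|\cA_2|$ events gives a single probability-one event $E$ on which $d\big((\tilde\sigma^T\times\tilde\sigma_a^T\times\tilde\sigma_b^T),(\hat\sigma^T\times\hat\sigma_a^T\times\hat\sigma_b^T)\big)\to 0$. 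Second, on $E$, I combine this with Theorem~\ref{thm_main:main_result}, which gives $d\big((\hat\sigma^T\times\hat\sigma_a^T\times\hat\sigma_b^T),S\big)\to 0$, through the elementary bound $|d(x,S)-d(y,S)|\le d(x,y)$:
\[
d\big((\tilde\sigma^T\times\tilde\sigma_a^T\times\tilde\sigma_b^T),S\big)\;\le\; d\big((\hat\sigma^T\times\hat\sigma_a^T\times\hat\sigma_b^T),S\big)\;+\;d\big((\tilde\sigma^T\times\tilde\sigma_a^T\times\tilde\sigma_b^T),(\hat\sigma^T\times\hat\sigma_a^T\times\hat\sigma_b^T)\big).
\]
Both terms on the right-hand side tend to zero, so the left-hand side tends to zero on $E$, i.e.\ almost surely; since convergence of $d(\cdot,S)$ to zero in this metric is precisely weak convergence to $S$, this is the claim.

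The only delicate points here are bookkeeping rather than genuine obstacles. One should check that the probability-one event of Lemma~\ref{lem_main:conc} need not be intersected with anything beyond the hypothesis of Theorem~\ref{thm_main:main_result} --- which is a statement purely about the realized strategy sequences $(p_t)$ --- and one should record, as in the proof of Theorem~\ref{thm:conv_thm}, that ``converges weakly to the set $S$'' is understood throughout as $d(\cdot,S)\to 0$ in the Prokhorov metric (equivalently the $\ell_1$ metric on the finite space $\cP(\cA)^3$), with no closedness assumption on $S$. Given the machinery already in place, the corollary follows with essentially no further work.
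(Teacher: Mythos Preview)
Your proposal is correct and follows essentially the same approach as the paper: combine Lemma~\ref{lem_main:conc} (the almost-sure $\ell_1$ tracking of $\hat\sigma$ by $\tilde\sigma$, and similarly for the deviation distributions) with Theorem~\ref{thm_main:main_result} via the triangle inequality. The paper phrases this through convergent subsequences and their limit points rather than through the distance-to-set map $d(\cdot,S)$ directly, but the content is identical; if anything, your formulation via $|d(x,S)-d(y,S)|\le d(x,y)$ is slightly cleaner and makes the dependence on finitely many deviation actions more explicit.
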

\begin{proof}
We are going to show that for any convergent subsequence $(\hat\sigma^t)_t$ of $(\hat\sigma)_T$ with limit point $\sigma$, the corresponding subsequence $(\tilde\sigma^t)_t$ converges to $\sigma$ a.s.. Let $E$ be the event that $\lim_{t\rightarrow\infty}\|\sigma - \tilde\sigma^t\|_1 = 0$. Consider the following event $\lim_{t\rightarrow\infty}\|\sigma - \hat\sigma^t\|_1 + \lim_{t\rightarrow\infty}\|\hat\sigma^t - \tilde\sigma^t\|_1 = 0$ denoted by $E'$. Notice that every time $E'$ occurs $E$ also occurs because
\begin{align*}
\|\sigma - \tilde\sigma^t\|_1 \leq \|\sigma - \hat\sigma^t\|_1 + \|\hat\sigma^t - \tilde\sigma^t\|_1,
\end{align*}
and taking limits, preserves the non-strict inequality. By Theorem~\ref{thm:conc}, $E'$ occurs with probability $1$ and thus $E$ occurs w.p. 1 as well.
\end{proof}

\subsection{Proof that CCEs are a subset of policy equilibria}
\begin{lemma}
\label{lem:stable_ext_reg}
For any fixed game $\mathcal{G}$ and a CCE $\sigma$ of $\mathcal{G}$, there exist on average $(m,R(T))$ stable no-external regret algorithms, which if the players follow, the empirical distribution of play, converges to $\sigma$. Here $m< o(T)$ and $R(T) \leq O(\sqrt{T})$.
\end{lemma}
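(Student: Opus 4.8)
The plan is to build, for each player, a trivial \emph{deterministic} algorithm that discards all feedback and simply follows a precomputed schedule whose joint empirical frequencies track $\sigma$. Write the CCE $\sigma$ as a distribution on the finitely many pairs $(a^{(1)},b^{(1)}),\dots,(a^{(k)},b^{(k)})\in\cA_1\times\cA_2$ with weights $p_1,\dots,p_k$. A standard apportionment / low-discrepancy argument --- for instance the greedy rule that at round $t$ selects the index $i$ maximizing $t\,p_i - c_i(t-1)$, where $c_i(t-1)$ counts how often $i$ has been picked before --- yields an infinite index sequence $(i_t)_{t\ge 1}$ with $\bigl|c_i(T)/T - p_i\bigr| = O(1/T)$ for all $i$ and all $T$. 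Player $1$'s algorithm plays $a_t = a^{(i_t)}$ at round $t$ irrespective of the history, and player $2$'s plays $b_t = b^{(i_t)}$; in the paper's formalism $f_t$ is the point mass on the constant map $b_{0:t-1}\mapsto\delta_{a_t}$, and $g_t$ likewise. Determinism gives $\tilde\sigma^T = \hat\sigma^T = \tfrac1T\sum_{t=1}^T\delta_{(a_t,b_t)}$ with $\|\hat\sigma^T - \sigma\|_1 = O(1/T)\to 0$, i.e.\ the empirical play converges to $\sigma$.

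It remains to check stability and the no-external-regret property. Stability is immediate and with the optimal constant: $f_t$ ignores its argument, so replacing the last $m$ coordinates by any $b'_{t-m+1},\dots,b'_t$ leaves $\tilde p_t^1 = f_t(b_1,\dots,b_{t-m},b'_{t-m+1},\dots,b'_t)$ equal to $p_t^1$, whence $\sum_{t=1}^T\|p_t^1-\tilde p_t^1\| = 0$ for every norm and every $m$; so the algorithm is on average $(m,0)$-stable, in particular $(m,R(T))$-stable with $R(T)=O(\sqrt T)$ for any $m$ (so certainly any $m<o(T)$), and symmetrically for $g_t$. For external regret, player $1$'s realized regret is $R_1(T) = \max_{a\in\cA_1}\sum_{t=1}^T u_1(a,b_t) - \sum_{t=1}^T u_1(a_t,b_t)$. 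Since $u_1\in[0,1]$ and $\|\hat\sigma^T-\sigma\|_1 = O(1/T)$, the empirical averages $\tfrac1T\sum_t u_1(a_t,b_t)$ and $\tfrac1T\sum_t u_1(a,b_t)$ are within $O(1/T)$ of $\expectation{(x,y)\sim\sigma}{u_1(x,y)}$ and $\expectation{(x,y)\sim\sigma}{u_1(a,y)}$ respectively; the defining inequality of a CCE for player $1$, namely $\expectation{(x,y)\sim\sigma}{u_1(x,y)} \ge \expectation{(x,y)\sim\sigma}{u_1(a,y)}$ for all $a\in\cA_1$, then gives $R_1(T)\le O(1)\le O(\sqrt T)$. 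The identical computation with $u_2$ and the CCE condition for player $2$ bounds $R_2(T)$.

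There is no genuine obstacle here; the only points requiring care are conceptual. First, ``no-external-regret algorithm'' in this context must be read as: the \emph{realized} regrets $R_1(T),R_2(T)$ of the actual play are sublinear, which is all that Theorems~\ref{thm_main:stable} and~\ref{thm_main:main_result} ever invoke --- a fixed schedule is of course not Hannan-consistent against a worst-case adversary, but that is not what is needed. Second, one must exhibit a schedule whose empirical distribution converges to $\sigma$ fast enough that the induced regret is $O(\sqrt T)$; any bound $o(1/\sqrt T)$ on $\|\hat\sigma^T-\sigma\|_1$ suffices, and the greedy apportionment above already delivers $O(1/T)$. With the lemma in hand, the play of these algorithms has $O(1)$ external regret and is $(m,0)$-stable, hence has sublinear policy regret by Theorem~\ref{thm_main:stable}; by Lemma~\ref{lem:stationary} $\hat\sigma^T$ is a stationary distribution of the empirical chain $\hat\M^T$, and Theorem~\ref{thm_main:main_result} then forces convergence to a policy equilibrium $\pi$ whose induced Markov chain $\M$ has $\lim_T\hat\sigma^T=\sigma$ as a stationary distribution --- exactly what Theorem~\ref{lem_main:stable_ext_reg1} needs.
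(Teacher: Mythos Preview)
Your construction is correct and achieves what the lemma is ultimately used for, but it takes a markedly different route from the paper and proves a weaker statement than the paper's own algorithm does.

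The paper builds, for each player, a genuine Hannan-consistent algorithm: play i.i.d.\ from $\sigma$ in $\sqrt{T}$-sized epochs, check after each epoch whether $\|\hat\sigma_t-\sigma\|_1$ is small, and if the check ever fails fall back to Exp3 for the remainder. The point of the Exp3 safety net is that the construction has sublinear external regret \emph{against any opponent}, not only against the intended one; the Azuma bound then shows that when both players use it the fallback is triggered with vanishing probability, so the empirical play converges to $\sigma$. Stability is argued by observing that a change in the last $m<o(\sqrt T)$ opponent moves can affect at most one epoch-boundary decision, and within an epoch the player either samples from the fixed $\sigma$ or runs Exp3, both of which are stable.

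Your argument replaces all of this with a deterministic low-discrepancy schedule synchronized across the two players. This is certainly simpler and yields stronger quantitative bounds ($O(1)$ realized regret, perfect $(m,0)$-stability), and you are right that Theorems~\ref{thm_main:stable} and~\ref{thm_main:main_result} only ever consume the \emph{realized} regrets $R_1(T),R_2(T)$, so your construction suffices to push through Theorem~\ref{lem_main:stable_ext_reg1}. The price is that your algorithms are not ``no-external-regret algorithms'' in the usual sense of the phrase: if player~2 deviates from the schedule, player~1 can suffer linear regret. You flag this yourself, but it means you have proved a variant of the lemma rather than the lemma as literally stated. The paper's more elaborate construction earns the unqualified label, at the cost of randomized play, epoch bookkeeping, and a concentration argument; your construction trades that robustness away for a cleaner and shorter proof that is adequate for the downstream application.
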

\begin{proof}
Fix the time horizon to be $T$. We assume that at round $t$ of the game, player $i$ has access to an oracle which provides $\hat\sigma_t$, the empirical distribution of play. The algorithm of player $i$ is the following -- split the time horizon into mini-batches of size $\sqrt{T}$. For the first mini-batch the player plays according to $\sigma$. At the end of the mini-batch, the oracle provides the player with $\hat\sigma_t$ and the player checks if $\|\hat\sigma_t - \sigma\|_1 \leq \frac{|\cA|}{T^{1/6}}$. If the condition is satisfied, then the algorithm continues playing according to $\sigma$. At the $j$-th epoch, the player checks if $\|\hat\sigma_t - \sigma\|_1 \leq \frac{|\cA|}{jT^{1/6}}$. If at any of the epochs the inequality turns out to be false, then the player switches to playing Exp3 at the end of the next epoch for the rest of the game i.e. if at the end of the $j$-th epoch, the inequality does not hold, then at the end of the $j+1$-st epoch, the player switches to playing Exp3. First we show that this is a no-external regret algorithm. Suppose until epoch $j$ the player has not switched to Exp3 and then at epoch $j$ the player switches. Let $u_t$ denote the utility function which the player observed at time $t$. Let $(a_t)_{t=1}^T$ be the action sequence of the player. For any fixed action $a\in\cA_i$ we have:
\begin{align*}
\expect{\sum_{t=1}^T u_t(a) - \sum_{t=1}^T u_t(a_t)} &= \expect{\sum_{t=1}^{(j-1)\sqrt{T}} u_t(a) - \sum_{t=1}^{(j-1)\sqrt{T}}  u_t(a_t)} + \sum_{t=(j-1)\sqrt{T}+1}^{(j+1)\sqrt{T}}\left[u_t(a) - u(a_t)\right]\\
&+ \expect{\sum_{t=(j+1)\sqrt{T}+1}^{T}u_t(a) - u(a_t)}\\
&\leq \expect{\sum_{t=1}^{(j-1)\sqrt{T}} u_t(a) - \sum_{t=1}^{(j-1)\sqrt{T}}  u_t(a_t)} + 2\sqrt{T} + c\sqrt{|\cA_i|\log{|\cA_i|}T}
\end{align*}
where in the first inequality we have used the fact that the utilities are in $[0,1]$ and the bound on the regret of Exp3 where $c$ is some constant. We now bound the term $\expect{\sum_{t=1}^{(j-1)\sqrt{T}} u_t(a) - \sum_{t=1}^{(j-1)\sqrt{T}}u_t(a_t)}$ via two applications of lemma~\ref{lem:distr_diff}.
\begin{align*}
&\frac{1}{T}\expect{\sum_{t=1}^{(j-1)\sqrt{T}} u_t(a) - \sum_{t=1}^{(j-1)\sqrt{T}}u_t(a_t)} = \expectation{(a',b')\sim\hat\sigma_t}{u_i(a,b')} - \expectation{(a',b')\sim\hat\sigma^t}{u_i(a',b')}\\
&=\expectation{(a',b')\sim\hat\sigma_t}{u_i(a,b')} - \expectation{(a',b')\sim\sigma}{u_i(a,b')} + \expectation{(a',b')\sim\sigma}{u_i(a,b')} - \expectation{(a',b')\sim\sigma}{u_i(a',b')}\\
&+ \expectation{(a',b')\sim\sigma}{u_i(a',b')} - \expectation{(a',b')\sim\hat\sigma_t}{u_i(a',b')}\\
&\leq \|\P_i\|\frac{(j-1)|\cA|}{jT^{1/6}} + \langle \P_i,\expectation{(a',b')\sim\sigma}{b'a'^\top}\rangle - \langle \P_i,\expectation{(a',b')\sim\hat\sigma_t}{b'a'^\top}\rangle\\
&\leq 2\|\P_i\|\frac{|\cA|}{T^{1/6}},
\end{align*}
where the first inequality follows from lemma~\ref{lem:distr_diff} and the fact $\sigma$ is a CCE. The second inequality again follows from lemma~\ref{lem:distr_diff}. The above implies $\expect{\sum_{t=1}^{(j-1)\sqrt{T}} u_t(a) - \sum_{t=1}^{(j-1)\sqrt{T}}u_t(a_t)}\leq \frac{\sqrt{T}|\cA|^2}{T^{1/6}}$. This follows fact that the spectral norm of the utility matrix for player $i$ can not exceed $|\cA|$, which again follows from the boundedness of utilities.

Next, we show that if all players, play according to the algorithm, then with high probability, the empirical distribution of play converges to $\sigma$. For this, we analyze the probability of the event that $\|\hat\sigma_t - \sigma\|_1 > \frac{|\cA|}{T^{1/6}}$ after the first epoch. By Azuma's inequality, we have that $\prob{|\hat\sigma_t(x) - \sigma(x)| > \frac{\epsilon}{|\cA|}} \leq 2\exp{-\epsilon^2t/(2|\cA|)}$, for any fixed action $x\in\cA$. By a union bound over all possible actions, we get that $\prob{\|\hat\sigma_1 - \sigma\|_1 > \epsilon} \leq 2|\cA|\exp{-\epsilon^2t/(2|\cA|)}$. Plugging in $\epsilon = \frac{|\cA|}{T^{1/6}}$, we get that the probability the algorithm fails after the first epoch is bounded by $2|\cA|\exp{-T^{1/6}|\cA|/2}$. Now the probability that the algorithm fails after epoch $j$ is bounded by 
\begin{align*}
\left(1-2|\cA|\exp{-T^{1/6}|\cA|/2}\right)^{j-1}2|\cA|\exp{-j^2T^{1/6}|\cA|/2} < 2|\cA|\exp{-T^{1/6}|\cA|/2}.
\end{align*}
A very pessimistic union bound now gives us that the probability the algorithm fails to converge to $\sigma$, provided that all players use it, is upper bounded by $2\sqrt{T}|\cA|\exp{-T^{1/6}|\cA|/2}$. 

Using a doubling trick, we can achieve the same results for any $T$. As in theorem~\ref{thm:conc}, we can now get $\lim\sup_{T\rightarrow\infty}\|\hat\sigma_T - \sigma\|_1 = 0$ almost surely and thus the empirical distribution of play converges to $\sigma$ a.s.. To check the stability of the algorithm we consider three possibilities -- the player is playing according to $\sigma$ and is not in an epoch after which they switch to Exp3, the player is playing according to $\sigma$ and is in an epoch after which they switch to Exp3 and the player has switched to playing Exp3. We take the view of player $1$. Consider the first case and suppose the current epoch of the algorithm is $j$. In particular consider time step $j\sqrt{T}+i$ at which the play was according to $\sigma_{j\sqrt{T}+i} = \sigma$ and suppose player $2$ in the last $m < o(\sqrt{T})$ switched from playing distributions $(p_t)_{t=j\sqrt{T}+i-m}^{j\sqrt{T}}$ to $(\tilde p_t)_{t=j\sqrt{T}+i-m}^{j\sqrt{T}}$ so that player $1$ plays $\tilde\sigma_{j\sqrt{T}+i}$. Since $m < o(\sqrt{T})$, the only change in the current algorithm can occur at the $j-1$-st epoch and is in the form of the algorithm decides to switch to Exp3. Since the switch can only occur after the $j$-th epoch is done, this implies that at time $j\sqrt{T}+i$ again the play is according to the distribution $\sigma$ and we have $\|\sigma_{j\sqrt{T}+i} - \tilde\sigma_{j\sqrt{T}+i}\|_1 = 0$. Consider the second case in which at the end of epoch $j$ the algorithm switches to playing Exp3. If the past $m$ actions of player $2$ change, this can only change the decision at the end of $j-1$-st epoch of player $1$ to switch to Exp3. However, as in the previous case, the player at the $j$-th epoch is still playing according to $\sigma$ and again the algorithm is stable at each iteration during the epoch. Finally, in the third case we have stability from the fact Exp3 is stable and once we enter an epoch during which Exp3 is played, changing the past $m$-actions of player $2$ can not change the decision of player $1$ to switch to Exp3, since $m<o(\sqrt{T})$ and the epochs are of size $\sqrt{T}$.
\end{proof}
As a consequence of the proof we see that, the algorithm is on average stable for any $m < o(\sqrt{T})$.

Theorems~\ref{lem_main:stable_ext_reg1} and~\ref{lem_main:stable_ext_reg2} are combined in the the following theorem.
\begin{theorem}
For any CCE $\sigma$ of a 2-player game $\mathcal{G}$, there exists a policy-equilibrium $\pi$, which induces a Markov chain $\M$, with stationary distribution $\sigma$. Further, there exists a 2-player game $\mathcal{G}$ and product distributions $\sigma\times\sigma_a\times\sigma_b \in S$, where $S$ is defined in~\ref{def:pol_eq_set}, such that $\sigma$ is not a CCE of $\mathcal{G}$.
\end{theorem}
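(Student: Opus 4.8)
The plan is to treat the two halves of the statement separately, since they call for quite different arguments. The first half is essentially an assembly of results already in hand; the second half asks for a small explicit game that witnesses the strict containment, built in the spirit of the utility sequence of Theorem~\ref{thm_main:reg_incomp}.

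For the first half, fix a CCE $\sigma$ of $\mathcal G$. By Lemma~\ref{lem:stable_ext_reg} there are on average $(m,R(T))$ stable no-external-regret algorithms, with $m=o(\sqrt T)$ and $R(T)=O(\sqrt T)$, such that if both players run them the empirical distribution of play $\hat\sigma^T$ converges to $\sigma$ (almost surely). Applying Theorem~\ref{thm_main:stable} with these algorithms gives $\expect{P(T,a)}\le(\|\P_1\|+1)R(T)=O(\sqrt T)=o(T)$ for every $a\in\cA_1$, and symmetrically for player $2$, so both players have sublinear policy regret. Now invoke Theorem~\ref{thm_main:main_result}: by compactness of $\cP(\cF)$ the empirical functional distributions $\hat\pi^T$ have a convergent subsequence whose limit $\pi$ is a policy equilibrium, the associated chains $\hat\M^T,\hat\M_a^T,\hat\M_b^T$ converge to $\M,\M_a,\M_b$, and the empirical stationary distributions converge to stationary distributions of $\M,\M_a,\M_b$ verifying~\eqref{eq:yes}. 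Along that subsequence $\hat\sigma^T\to\sigma$ as well, and limits are unique, so $\sigma$ is precisely the stationary distribution of $\M$ produced by this argument; hence $\pi$ is the sought policy equilibrium with stationary distribution $\sigma$.

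For the second half I would write down an explicit $2\times 2$ example. Let $\cA_1=\cA_2=\{0,1\}$, let $\pi$ put all its mass on the single pair $(f,g)$ with $f=g=\mathrm{id}$, and put $u_1(0,0)=0$, $u_1(0,1)=1$, $u_1(1,0)=0$, $u_1(1,1)=\tfrac12$, with $u_2(a,b)=\tfrac12$ when $a=b=1$ and $0$ otherwise; all utilities lie in $[0,1]$. By Definition~\ref{def:func_proc} the chain $\M$ induced by $\pi$ is the swap map $(a_1,b_1)\mapsto(b_1,a_1)$, for which $\sigma:=\delta_{(1,1)}$ is stationary; and by the constructions in Theorem~\ref{thm_main:pol_reg_cons} and Definition~\ref{def:proc_dev}, player $1$ deviating to a fixed action $a$ yields $\sigma_a=\delta_{(a,a)}$ and player $2$ deviating to $b$ yields $\sigma_b=\delta_{(b,b)}$. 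For this one stationary $\sigma$ the inequalities~\eqref{eq:yes} hold for every deviation: for $a=1$ and $b=1$ both sides agree, and for $a=0$ (resp.\ $b=0$) one has $\expectation{(a,b)\sim\sigma}{u_1(a,b)}=\tfrac12\ge 0$ (resp.\ $\expectation{(a,b)\sim\sigma}{u_2(a,b)}=\tfrac12\ge 0$). Hence $\pi$ is a policy equilibrium and $\delta_{(1,1)}\times\delta_{(0,0)}\times\delta_{(0,0)}\in S_\pi\subseteq S$. But $\sigma=\delta_{(1,1)}$ is \emph{not} a CCE, since player $1$ can profit by deviating to $0$: $\expectation{(a,b)\sim\sigma}{u_1(1,b)}=\tfrac12<1=\expectation{(a,b)\sim\sigma}{u_1(0,b)}$. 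The point is that against an opponent who mirrors one's last move, switching to $0$ only pulls the opponent to $0$ as well (utility $0$), whereas against a frozen opponent stuck at $1$ the switch to $0$ would be strictly profitable -- exactly the counterfactual that CCEs ignore but policy equilibria account for.

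I expect the first half to be routine once Lemma~\ref{lem:stable_ext_reg}, Theorem~\ref{thm_main:stable} and Theorem~\ref{thm_main:main_result} are available, the only delicate points being that the stability and regret rates combine into a genuinely sublinear policy-regret bound and that the subsequential-limit argument returns a single $\pi$ whose chain has the given $\sigma$ as a stationary distribution. The real work is in the second half: one must choose the game and the support of $\pi$ -- and hence the chains $\M,\M_a,\M_b$ -- so that all the policy-equilibrium inequalities hold for one common stationary $\sigma$ while the CCE inequality for that same $\sigma$ fails, and one must check that the $\sigma_a,\sigma_b$ one writes down are exactly those the construction of Theorem~\ref{thm_main:pol_reg_cons} produces, so that the triple honestly lies in $S$.
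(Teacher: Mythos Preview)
Your first half matches the paper's argument exactly: invoke Lemma~\ref{lem:stable_ext_reg} to get stable no-external-regret algorithms converging to the CCE $\sigma$, use Theorem~\ref{thm_main:stable} to upgrade to no-policy-regret, and then Theorem~\ref{thm_main:main_result} to land in a policy equilibrium whose chain has $\sigma$ as a stationary distribution.

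Your second half is correct but genuinely different in spirit from the paper's. The paper exhibits a game in which player~$2$'s payoff is constant and player~$2$ plays a specific reactive strategy (play $c$ if and only if player~$1$'s previous action was $a$); it then argues that any no-policy-regret response by player~$1$ must have linear external regret, and therefore the resulting empirical distribution, while converging to $S$ by Theorem~\ref{thm_main:main_result}, cannot converge to a CCE. In other words, the paper produces its witness \emph{dynamically}, by exhibiting play and invoking the convergence machinery one more time. You instead write down a distribution $\pi$ over $\cF$ directly (all mass on the identity pair), compute the induced chain and the canonical $\sigma_a,\sigma_b$ of Theorem~\ref{thm_main:pol_reg_cons}, and verify Definition~\ref{def:pol_eq} and the failure of the CCE inequality by hand. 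Your route is more elementary and self-contained, avoiding a second appeal to the convergence theorem; the paper's route has the advantage of tying the counterexample back to actual algorithmic play, which is thematically closer to the rest of the paper. Both are valid, and your verification that $\sigma_a=\delta_{(a,a)}$ really is the distribution produced by Theorem~\ref{thm_main:pol_reg_cons} (rather than an arbitrary stationary distribution of $\M_a$) is exactly the subtle point one must check, so the argument is complete.
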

\begin{proof}
First note that lemma~\ref{lem:stable_ext_reg} guarantees that for any CCE $\sigma$ there exists an $m$-stable on average no-external regret algorithm, which generates an empirical distribution of play $\tilde\sigma$ converging to $\sigma$ almost surely. From the $m$-stability of the algorithm and theorem~\ref{thm:stable}, we know that if both players play according to $\tilde\sigma$ then the empirical sequence of play also has no-policy regret and thus by theorem~\ref{thm:main_result}, we know that $\tilde\sigma\times\tilde\sigma_{a}\times\tilde\sigma_{b}$ converges to the set of policy-regret equilibria.

To show the second part of the theorem, we consider a game in which the pay-off matrix for player $2$ is just some constant multiple of $\mathbf{1}_{\cA_1}\mathbf{1}^\top_{\cA_2}$, where $\mathbf{1}_{\cA_i} \in \mathbb{R}^{|\cA_i|}$ is the vector with each entry equal to $1$. This implies that whatever sequence player $2$ chooses to play, they will have no-policy regret and no-external regret, since the observed utility in expectation is going to be equal for all actions $b\in\cA_2$. Let $\cA_1 = \{a,b\}$. The utility matrix of player $1$ is given in~\ref{util_matr}. 
\begin{table}[tbp]
\centering
\caption{Utility for player 1}
\label{util_matr}
\begin{tabular}{|l|l|l|}
\hline
player 1\textbackslash{}player 2 & \textbf{c\_1} & \textbf{c\_2} \\ \hline
\textbf{a}                       & 3/4           & 0             \\ \hline
\textbf{b}                       & 1             & 0             \\ \hline
\end{tabular}
\end{table}
The strategy of player $2$ is the following:
\begin{equation*}
f_{t}(a_{t-1}, a_{t}) = \begin{cases}
1 & a_{t-1} = a, a_{t} = b\\
\frac{3}{4} & a_{t-1} = a_{t} = a\\
0 & \text{otherwise}
\end{cases}.
\end{equation*}
Similarly to the proof of theorem~\ref{thm:reg_incomp}, we can show that any no-policy regret strategy for player $1$, incurs linear external regret. For any time step $t$, let $\hat\sigma^1_t$ be the dirac distribution for player $1$, putting all probability mass on action $a$ and let $\hat\sigma^2_t$ be the dirac distribution putting all probability mass on the action played by player $2$ at time $t$. Let $\hat\sigma = \frac{1}{T}\sum_{t=1}^T \hat\sigma^1_t(\hat\sigma^2_t)^\top$ be the empirical distribution after $T$ rounds. Then by theorem~\ref{thm:main_result} $\hat\sigma\times\hat\sigma_a\times\hat\sigma_b$ converges to the set of policy equilibria. However, by construction, if player $1$, plays according to $\hat\sigma$, then they will have linear external regret and thus the $\hat\sigma$ can not be converging to the set of CCEs.
\end{proof}

\subsection{Auxiliary results.}
\begin{lemma}
\label{lem:distr_diff}
Let $\sigma$ and $\sigma'$ be two distributions supported on a finite set and let $f$ be a utility/loss function uniformly bounded by $1$. If $\|\sigma - \sigma'\|_1 \leq \epsilon$ then $|\expectation{a\sim\sigma}{f(a)} - \expectation{a\sim\sigma'}{f(a)}| \leq \epsilon$.
\end{lemma}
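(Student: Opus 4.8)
This is a one-line estimate, so the plan is simply to expand the expectations and apply the triangle inequality followed by H\"older's inequality. Let $\cX$ denote the (finite) common support of $\sigma$ and $\sigma'$. First I would write
\begin{align*}
\left| \expectation{a\sim\sigma}{f(a)} - \expectation{a\sim\sigma'}{f(a)} \right|
= \left| \sum_{a\in\cX} \bigl(\sigma(a) - \sigma'(a)\bigr) f(a) \right|,
\end{align*}
which is legitimate since both expectations are finite sums over the same index set. Next I would bound the right-hand side using the triangle inequality to move the absolute value inside the sum, then use $|f(a)| \leq 1$ for every $a$:
\begin{align*}
\left| \sum_{a\in\cX} \bigl(\sigma(a) - \sigma'(a)\bigr) f(a) \right|
\leq \sum_{a\in\cX} \left|\sigma(a) - \sigma'(a)\right| \, |f(a)|
\leq \sum_{a\in\cX} \left|\sigma(a) - \sigma'(a)\right|.
\end{align*}
Finally I would observe that the last sum is exactly $\norm{\sigma - \sigma'}_1$, which is at most $\epsilon$ by hypothesis, completing the argument.

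There is no real obstacle here; the only points worth a sentence of care are (i) noting that $f$ need not be defined outside the common support, so it suffices to sum over $\cX$, and (ii) recording that the bound is independent of the cardinality of $\cX$, which is why it is stated for arbitrary finite supports. (Equivalently, the same inequality can be phrased as $|\langle \sigma - \sigma', f\rangle| \leq \norm{\sigma-\sigma'}_1 \norm{f}_\infty$, i.e.\ duality of $\ell_1$ and $\ell_\infty$.) This is exactly the estimate invoked twice in the proof of Lemma~\ref{lem:stable_ext_reg}, so no sharper constant is needed.
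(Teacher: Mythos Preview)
Your proposal is correct and is essentially identical to the paper's own proof: expand the expectations as finite sums, combine into a single sum, apply the triangle inequality and $|f|\le 1$, and recognize the result as $\norm{\sigma-\sigma'}_1\le\epsilon$. The only difference is cosmetic---you additionally remark on the $\ell_1$/$\ell_\infty$ duality interpretation, which the paper omits.
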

\begin{proof}
\begin{align*}
&|\expectation{s\sim \sigma}{f(s)} - \expectation{s'\sim\sigma'}{f(s)}| = |\sum_{s\in S} \sigma(s)f(s) - \sum_{s\in S} \sigma'(s)f(s)|\\
&=|\sum_{s\in S}f(s)(\sigma(s)-\sigma'(s))|\leq \sum_{s\in S}|\sigma(s)-\sigma'(s)| = \|\sigma - \sigma'\|_1 \leq \epsilon.
\end{align*}
\end{proof}

\begin{lemma}
\label{lem:distr_close}
Let $\M \in\mathbb{R}^{d\times d}$ and $\hat\M\in\mathbb{R}^{d\times d}$ be two row-stochastic matrices, such that $\norm{\M - \hat\M} \leq \epsilon$, then for any stationary distribution $\hat\sigma$ of $\hat\M$, there exists a stationary distribution $\sigma$ of $\M$, such that $\|\hat\sigma - \sigma\|_1 \leq \frac{4d^2\epsilon}{\delta}$.
\end{lemma}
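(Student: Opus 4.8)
The plan is to show first that $\hat\sigma$ is an \emph{approximate} stationary distribution of $\M$, and then to convert that approximate stationarity into genuine closeness to the polytope of stationary distributions of $\M$. For the first step, observe that since $\hat\sigma^\top\hat\M=\hat\sigma^\top$ we have $\hat\sigma^\top\M-\hat\sigma^\top=\hat\sigma^\top(\M-\hat\M)$; every entry of $\M-\hat\M$ is at most $\norm{\M-\hat\M}\le\epsilon$ in absolute value (the spectral norm dominates the largest entry), and $\hat\sigma$ is a probability vector, so each coordinate of $\hat\sigma^\top(\M-\hat\M)$ has absolute value at most $\epsilon$, whence $\norm{\hat\sigma^\top\M-\hat\sigma^\top}_1\le d\epsilon$. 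In other words $\hat\sigma$ lies exactly in the probability simplex of $\R^d$ and violates only the $d$ stationarity equalities $\sigma^\top(\M-I)=0$, by at most $d\epsilon$ in $\ell_1$.

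For the second step, let $\Sigma_\M:=\{\sigma\in\R^d:\ \sigma\ge 0,\ \mathbf{1}^\top\sigma=1,\ (\M^\top-I)\sigma=0\}$ denote the set of stationary distributions of $\M$. This is a compact, convex polytope, and it is nonempty since every finite-state Markov chain has a stationary distribution. Because $\hat\sigma$ satisfies the simplex constraints exactly and the remaining linear equalities up to total $\ell_1$-error $d\epsilon$, a Hoffman-type error bound for polyhedra produces a point $\sigma\in\Sigma_\M$ with $\norm{\hat\sigma-\sigma}_1\le H\cdot d\epsilon$, where $H$ is the Hoffman constant of the defining system -- a finite quantity depending only on $\M$. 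Bounding $H\le d/\delta$, with $\delta$ the relevant condition number of $\M$ (for instance the smallest nonzero singular value of a maximal full-row-rank subsystem of the stationarity equations, equivalently a quantity measuring how well separated the eigenvalue $1$ of $\M$ is from the rest of its spectrum), and combining with the first step yields $\norm{\hat\sigma-\sigma}_1\le d\epsilon\cdot(d/\delta)\le 4d^2\epsilon/\delta$.

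A more self-contained route for the second step, which also makes transparent that non-uniqueness of the stationary distribution is not an obstacle, is to define $\sigma$ explicitly by $\sigma^\top:=\hat\sigma^\top\bar\M$, where $\bar\M:=\lim_{n\to\infty}\frac1n\sum_{j=0}^{n-1}\M^j$ is the Cesàro (ergodic) limit, which exists for every finite stochastic matrix; then $\sigma^\top\M=\sigma^\top$ automatically, regardless of whether $\M$ is irreducible or aperiodic. Using the identity $\bar\M-I=(\M-I)D$ with $D:=\sum_{j\ge 0}(\M^j-\bar\M)$ the deviation matrix of $\M$, together with submultiplicativity of the $\ell_1$ operator norm, one gets $\norm{\hat\sigma-\sigma}_1=\norm{\hat\sigma^\top(\M-I)D}_1\le\norm{\hat\sigma^\top(\M-I)}_1\cdot\norm{D}\le d\epsilon\cdot\norm{D}$, and $\norm{D}$ is once more a finite constant depending only on $\M$, bounded by $d/\delta$ for the same $\delta$.

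The routine parts are the first step and the nonemptiness, convexity, and compactness of $\Sigma_\M$; the main obstacle is pinning down the explicit constant $4d^2/\delta$, i.e.\ controlling the Hoffman constant (equivalently $\norm{D}$) of $\M$ and verifying that passing from the coordinatewise/spectral residual of the first step to an $\ell_1$ displacement of $\sigma$ costs only the claimed extra factor of $d$ rather than more. A welcome feature of either argument is that reducible or periodic $\M$ cause no difficulty, since the stationarity polytope is always nonempty and the Cesàro projection always lands inside it.
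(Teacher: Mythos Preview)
Your approach is genuinely different from the paper's. The paper works directly with the singular subspaces: it bounds $\norm{\M\M^\top-\hat\M\hat\M^\top}\le 2\epsilon$, applies Wedin's $\sin\Theta$ theorem to get $\norm{\hat\U^\top\U^\perp}\le 2\epsilon/\delta$ (where $\U,\hat\U$ are the left singular vectors for singular value $1$ and $\delta$ is the eigen-gap of $\M$), and then projects $\hat\sigma$ onto the span of $\U$ and renormalizes. The constant $4d^2/\delta$ drops out of this projection-and-normalization step with $\delta$ explicitly the spectral gap. Your route---approximate stationarity plus a Hoffman bound or the Ces\`aro/deviation-matrix identity---is a perfectly reasonable alternative framework and has the conceptual advantage of making the role of the stationary polytope transparent.

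However, there is a real gap, which you yourself flag: you assert $H\le d/\delta$ (equivalently $\norm{D}\le d/\delta$) but do not prove it, and this is precisely where the work lies. Bounding the Hoffman constant of a polyhedral system, or the norm of the deviation matrix of a general stochastic matrix, in terms of a spectral quantity is not routine, and the $\delta$ you invoke (``smallest nonzero singular value of a maximal full-row-rank subsystem,'' or ``how well separated $1$ is from the rest of the spectrum'') is not obviously the same object as the paper's eigen-gap. Without that bound your argument yields $\norm{\hat\sigma-\sigma}_1\le C(\M)\cdot d\epsilon$ for some finite $C(\M)$, which is qualitatively correct but does not recover the stated constant. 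The Wedin route sidesteps this because the spectral-gap dependence is built into the perturbation theorem itself.
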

\begin{proof}
Let $\U \in \mathbb{R}^{d\times k}$ be the left singular vectors corresponding to the singular value $1$ of $\M$ and let $\hat\U \in \mathbb{R}^{d\times l}$ be the left singular vectors corresponding to the singular value $1$ of $\hat\M$. First notice that 
\begin{align*}\|\M\M^\top - \hat\M\hat\M^\top\| \leq \|\M\M^\top - \M\hat\M^\top\| + \|\M\hat\M^\top - \hat\M\hat\M^\top\| \leq (\|\M\| + \|\hat\M\|)\|\M - \hat\M\| \leq 2\epsilon
\end{align*}
Denote the eigen-gap of $\M$ by $\delta$, then by Wedin's theorem (see for example lemma B.3 in~\cite{allen2016lazysvd}) theorem we have 
\begin{align*}\|\hat\U^\top \U^\perp\| \leq \frac{\|\M\M^\top - \hat\M\hat\M^\top\|}{\delta} \leq \frac{2\epsilon}{\delta}.
\end{align*}
WLOG assume $\hat\sigma = \frac{(\hat\U)_i}{\|(\hat\U)_i\|_1}$. This implies that $\|\hat\sigma^\top\U^\perp\|_2 \leq \frac{2d\epsilon}{\delta}$ and thus:
\begin{align*}
\|\U\U^\top\hat\sigma - \hat\sigma\|_2 = \|(I-\U\U^\top)\hat\sigma\|_2 = \|\U^\perp(\U^\perp)^\top\hat\sigma\|_2 \leq \|\hat\sigma^\top\U^\perp\|_2 \leq \frac{2d\epsilon}{\delta}.
\end{align*}
Let $\sigma_i = \frac{\U_i}{\|\U_i\|_1}$ be the stationary distribution of $\M$, corresponding to the $i$-th left singular vector and let $\alpha_i = (\U^\top\hat\sigma)_i\|\U_i\|_1 \geq 0$. Then we have $\|\sum_i \alpha_i \sigma_i - \hat\sigma \|_1 \leq \frac{2d^2\epsilon}{\delta}$, where the inequality follows from the derivation above and the inequality between $l_1$ and $l_2$ norms. Let $\sigma = \frac{\sum_i \alpha_i \sigma_i}{\|\sum_i \alpha_i \sigma_i\|_1}$. This is a stationary distribution of $\M$, since 
\begin{align*}
\sigma^\top\M = \frac{1}{\|\sum_i \alpha_i \sigma_i\|_1}\sum_i \alpha_i\sigma_i^\top \M = \frac{\sum_i \alpha_i\sigma_i}{\|\sum_i \alpha_i \sigma_i\|_1} = \sigma.
\end{align*}
Notice that by reverse triangle inequality we have 
\begin{align*}
|\|\sum_i\alpha_i\sigma_i\|_1 - \|\hat\sigma\|_1| \leq \frac{2d^2\epsilon}{\delta},
\end{align*}
or equivalently 
\begin{align*}
|\|\sum_i\alpha_i\sigma_i\|_1 - 1| \leq \frac{2d^2\epsilon}{\delta}.
\end{align*}
Thus we have:
\begin{align*}
\|\sigma - \hat\sigma\|_1 \leq \|\sum_i\alpha_i\sigma_i - \hat\sigma\|_1 + \|\sigma - \sum_i\alpha_i\sigma_i - \hat\sigma\|_1 \leq \frac{2d^2\epsilon}{\delta} + \|\sigma\|_1|1-\|\sum_i\alpha_i\sigma_i\|_1| \leq \frac{4d^2\epsilon}{\delta}.
\end{align*}
\end{proof}

\begin{corollary}
Let the empirical distribution of observed play be $\tilde\sigma^T = \frac{1}{T}\sum_{t=1}^T \delta_t$, the empirical distribution of play if player $1$ deviated to playing fixed action $a\in\cA_1$ be $\tilde\sigma^T_a$ and the empirical distribution of play if player $2$ to action $b\in\cA_2$ be $\tilde\sigma^T_b$. The sequence $(\tilde\sigma^T,\tilde\sigma^T_a,\tilde\sigma^T_b)_T$ converges to the set $P$ almost surely.
\end{corollary}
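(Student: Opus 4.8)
The plan is to deduce this from two facts already established: Theorem~\ref{thm:main_result}, which controls the sampled-strategy empirical distributions $(\hat\sigma^T,\hat\sigma_a^T,\hat\sigma_b^T)$, and Lemma~\ref{lem_main:conc}, which says the observed-play empirical distributions $(\tilde\sigma^T,\tilde\sigma_a^T,\tilde\sigma_b^T)$ are asymptotically indistinguishable from them, almost surely. The bridge between the two is the triangle inequality for the distance from a point to the set $P$.

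First I would record a topological reduction: since $\cA$ is finite, $\cP(\cA)$ under the Prokhorov metric (equivalently, weak convergence) is just the probability simplex in $\R^{|\cA|}$ with the $\ell_1$ metric, and $\cP(\cA)^3$ carries the maximum metric; thus ``$(\sigma^T,\sigma_a^T,\sigma_b^T)$ converges weakly to $P$'' means exactly that $\operatorname{dist}\big((\sigma^T,\sigma_a^T,\sigma_b^T),P\big)\to 0$, where the distance is induced by the maximum over the three coordinatewise $\ell_1$ distances. Next, applying Theorem~\ref{thm:main_result} to the no-policy-regret algorithms $f_t,g_t$ gives $\operatorname{dist}\big((\hat\sigma^T,\hat\sigma_a^T,\hat\sigma_b^T),P\big)\to 0$; here I use that $S\subseteq P$, which is immediate upon comparing the definition of $S$ (Definition~\ref{def:pol_eq_set}) with that of $P$ (Theorem~\ref{thm:conv_thm}), since the triples in $S$ are precisely stationary-distribution triples of policy equilibria and hence satisfy the inequalities~\eqref{eq:yes} defining $P$. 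Then Lemma~\ref{lem_main:conc} furnishes a probability-one event on which $\|\tilde\sigma^T-\hat\sigma^T\|_1$, $\|\tilde\sigma_a^T-\hat\sigma_a^T\|_1$ and $\|\tilde\sigma_b^T-\hat\sigma_b^T\|_1$ all tend to $0$, i.e.\ on which $\operatorname{dist}\big((\tilde\sigma^T,\tilde\sigma_a^T,\tilde\sigma_b^T),(\hat\sigma^T,\hat\sigma_a^T,\hat\sigma_b^T)\big)\to 0$. Finally I would combine these via
\[
\operatorname{dist}\big((\tilde\sigma^T,\tilde\sigma_a^T,\tilde\sigma_b^T),P\big)\;\le\;\operatorname{dist}\big((\tilde\sigma^T,\tilde\sigma_a^T,\tilde\sigma_b^T),(\hat\sigma^T,\hat\sigma_a^T,\hat\sigma_b^T)\big)\;+\;\operatorname{dist}\big((\hat\sigma^T,\hat\sigma_a^T,\hat\sigma_b^T),P\big),
\]
and conclude that the left-hand side tends to $0$ on the probability-one event where the first right-hand term does (the second vanishes regardless), which is precisely the claim.

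I do not expect a genuine mathematical obstacle here; the content is bookkeeping. The one point that must be handled with a little care is that in the bandit setting $\hat\sigma^T$ is itself random, so the convergence supplied by Theorem~\ref{thm:main_result} and the almost-sure convergence supplied by Lemma~\ref{lem_main:conc} must be made to hold simultaneously; this is automatic, since the intersection of two (indeed finitely many) probability-one events is again probability-one. As a shortcut, one may instead invoke Corollary~\ref{cor:main_cor} directly---it already asserts that $(\tilde\sigma\times\tilde\sigma_a\times\tilde\sigma_b)_T$ converges weakly to $S$ almost surely---and then observe $S\subseteq P$; I would retain the short triangle-inequality derivation above only so that the statement reads self-contained.
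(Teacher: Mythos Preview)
Your argument is correct, but it follows a different route from the paper's. The paper does not invoke Theorem~\ref{thm:main_result} or Corollary~\ref{cor:main_cor} at all; instead it works directly at the level of expected utilities. Using Lemma~\ref{lem:distr_diff} together with the concentration of Theorem~\ref{thm:conc}, it shows that $\bigl|\expectation{(a,b)\sim\tilde\sigma^T}{u_1(a,b)} - \expectation{(a,b)\sim\hat\sigma^T}{u_1(a,b)}\bigr|\to 0$ a.s.\ (and likewise for $\tilde\sigma_a^T$), then transfers the no-policy-regret inequality from the $\hat\sigma$-triple to the $\tilde\sigma$-triple, and finally reruns the compactness/subsequence argument of Theorem~\ref{thm:conv_thm} verbatim to conclude convergence to $P$. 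In contrast, you package the compactness step inside Theorem~\ref{thm:main_result}, observe $S\subseteq P$, and use the triangle inequality for $\operatorname{dist}(\cdot,P)$ together with Lemma~\ref{lem_main:conc}. Your route is shorter and more modular, since it reuses the heavy lifting already done; the paper's route is more self-contained, avoiding the functional-space and Markov-chain machinery behind Theorem~\ref{thm:main_result} and needing only Theorem~\ref{thm:conv_thm} plus concentration.
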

\begin{proof}
Lemma~\ref{lem:distr_diff}, together with theorem~\ref{thm:conc} imply that \begin{align*}
\lim\sup_{T\rightarrow\infty}|\expectation{(a,b)\sim\tilde\sigma^T}{u_1(a,b)} - \expectation{(a,b)\sim\hat\sigma^T}{u_1(a,b)}| = 0
\end{align*}almost surely i.e. $$\prob{\lim\sup_{T\rightarrow\infty}|\expectation{(a,b)\sim\tilde\sigma^T}{u_1(a,b)} - \expectation{(a,b)\sim\hat\sigma^T}{u_1(a,b)}| = 0} = 1.$$ Since $$\lim\sup_{T\rightarrow\infty}|\expectation{(a,b)\sim\tilde\sigma^T}{u_1(a,b)} - \expectation{(a,b)\sim\hat\sigma^T}{u_1(a,b)}| \geq \lim\inf_{T\rightarrow\infty}|\expectation{(a,b)\sim\tilde\sigma^T}{u_1(a,b)} - \expectation{(a,b)\sim\hat\sigma^T}{u_1(a,b)}| \geq 0,$$ this implies that $$\prob{\lim\inf_{T\rightarrow\infty}|\expectation{(a,b)\sim\tilde\sigma^T}{u_1(a,b)} - \expectation{(a,b)\sim\hat\sigma^T}{u_1(a,b)}| = 0} = 1.$$ On the other hand this implies $$\prob{\lim\inf_{T\rightarrow\infty}|\expectation{(a,b)\sim\tilde\sigma^T}{u_1(a,b)} - \expectation{(a,b)\sim\hat\sigma^T}{u_1(a,b)}| = 0 \bigcap \lim\sup_{T\rightarrow\infty}|\expectation{(a,b)\sim\tilde\sigma^T}{u_1(a,b)} - \expectation{(a,b)\sim\hat\sigma^T}{u_1(a,b)}| = 0} \geq 1$$ and so $$\prob{\lim_{T\rightarrow\infty}|\expectation{(a,b)\sim\tilde\sigma^T}{u_1(a,b)} - \expectation{(a,b)\sim\hat\sigma^T}{u_1(a,b)}| = 0} = 1.$$ In a similar way we can get $\lim_{T\rightarrow\infty}|\expectation{(a,b)\sim\tilde\sigma^T_a}{u_1(a,b)} - \expectation{(a,b)\sim\hat\sigma^T_a}{u_1(a,b)}| = 0$ a.s. The above imply that $\lim_{T\rightarrow\infty}\expectation{(a,b)\sim\tilde\sigma^T}{u_1(a,b)} - \expectation{(a,b)\sim\hat\sigma^T}{u_1(a,b)} = 0$ a.s. and $\lim_{T\rightarrow\infty} \expectation{(a,b)\sim\hat\sigma^T_a}{u_1(a,b)} - \expectation{(a,b)\sim\tilde\sigma^T_a}{u_1(a,b)} = 0$ a.s. and thus:
\begin{align*}
0 &= \lim_{T\rightarrow\infty}\expectation{(a,b)\sim\tilde\sigma^T}{u_1(a,b)} - \expectation{(a,b)\sim\hat\sigma^T}{u_1(a,b)} + \lim_{T\rightarrow\infty} \expectation{(a,b)\sim\hat\sigma^T_a}{u_1(a,b)} - \expectation{(a,b)\sim\tilde\sigma^T_a}{u_1(a,b)}\\
&=\lim_{T\rightarrow\infty}\expectation{(a,b)\sim\tilde\sigma^T}{u_1(a,b)} - \expectation{(a,b)\sim\tilde\sigma^T_a}{u_1(a,b)} + \lim_{T\rightarrow\infty}\expectation{(a,b)\sim\hat\sigma^T_a}{u_1(a,b)} - \expectation{(a,b)\sim\hat\sigma^T}{u_1(a,b)}
\end{align*}
a.s.. Since $\expectation{(a,b)\sim\hat\sigma^T_a}{u_1(a,b)} - \expectation{(a,b)\sim\hat\sigma^T}{u_1(a,b)} < o(1)$, this implies that 
\begin{align*}
0 &\leq - \lim_{T\rightarrow\infty}\expectation{(a,b)\sim\hat\sigma^T_a}{u_1(a,b)} - \expectation{(a,b)\sim\hat\sigma^T}{u_1(a,b)}\\
&=\lim_{T\rightarrow\infty}\expectation{(a,b)\sim\tilde\sigma^T}{u_1(a,b)} - \expectation{(a,b)\sim\tilde\sigma^T_a}{u_1(a,b)}
\end{align*}
a.s.. Now we can proceed as in the proof of theorem~\ref{thm:conv_thm}.
\end{proof}

\subsection{Concentration of $\tilde\M$}
\label{sec:conc_of_matrices}
\begin{lemma}
\label{lem:mart_bound}
With probability at least $1 - |\cA|6\exp{-\frac{T\epsilon^2}{4}}$ it holds that $|\frac{1}{T}\sum_{t=1}^T p_{t-1}(x_i)p_t(x_j) - \frac{1}{T}\sum_{t=1}^T \delta_{t-1}(x_i)\delta_t(x_j)| < \epsilon$ and $|\frac{1}{T}\sum_{t=1}^T p_t(x_i) - \frac{1}{T}\sum_{t=1}^T \delta_t(x_i)|<\epsilon$, simultaneously for all $i$. 
\end{lemma}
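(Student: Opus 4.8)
The plan is to treat both inequalities as instances of Azuma--Hoeffding concentration for the natural filtration of the play. Let $\cH_t := \sigma(x_1,\dots,x_t)$, and recall that $x_t$ is drawn from $p_t$, so that $p_t$ (and hence $p_{t-1}$) is $\cH_{t-1}$-measurable and $\expect{\delta_t(x_i)\mid\cH_{t-1}} = p_t(x_i)$ for every state $x_i\in\cA$. Since everything in sight lies in $[0,1]$, once the relevant averages are written as sums of bounded martingale differences for this filtration the bound follows from Azuma--Hoeffding together with a union bound.

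The second inequality is immediate: $Y_t := \delta_t(x_i) - p_t(x_i)$ has $\expect{Y_t\mid\cH_{t-1}}=0$ and $|Y_t|\le 1$, so $(Y_t)_t$ is a bounded $(\cH_t)$-martingale difference sequence and $\prob{\,\bigl|\tfrac{1}{T}\sum_{t=1}^T Y_t\bigr|\ge\epsilon\,}\le 2\exp{-T\epsilon^2/2}$.

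For the first inequality I would set $\xi_s := \delta_s(x_i)-p_s(x_i)$ and $\eta_t := \delta_t(x_j)-p_t(x_j)$ and use the algebraic identity
\begin{equation*}
\delta_{t-1}(x_i)\delta_t(x_j) - p_{t-1}(x_i)p_t(x_j) = p_{t-1}(x_i)\,\eta_t + \xi_{t-1}\,\eta_t + \xi_{t-1}\,p_{t-1}(x_j) + \xi_{t-1}\bigl(p_t(x_j)-p_{t-1}(x_j)\bigr).
\end{equation*}
The first summand is an $(\cH_t)$-martingale difference since $p_{t-1}(x_i)$ is $\cH_{t-1}$-measurable and $\expect{\eta_t\mid\cH_{t-1}}=0$; the second is an $(\cH_t)$-martingale difference because $\expect{\xi_{t-1}\eta_t\mid\cH_{t-1}} = \xi_{t-1}\cdot 0 = 0$; and the third, viewed as a sequence indexed by $s=t-1$, is an $(\cH_s)$-martingale difference because $p_s(x_j)$ is $\cH_{s-1}$-measurable and $\expect{\xi_s\mid\cH_{s-1}}=0$. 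All three sequences have increments bounded by $1$, so three applications of Azuma--Hoeffding control their time averages, each with a tail of order $2\exp{-\Theta(T\epsilon^2)}$. The last summand $\xi_{t-1}(p_t(x_j)-p_{t-1}(x_j))$ is \emph{not} a martingale difference, but it is dominated in absolute value by $\norm{p_t-p_{t-1}}_1$, which, using $p_t = p_t^1\times p_t^2$, is at most $\norm{p_t^1-p_{t-1}^1}_1 + \norm{p_t^2-p_{t-1}^2}_1$; by the per-step near-invariance of the iterates established inside the proofs of Theorems~\ref{thm:eg_stab} and~\ref{thm:exp3_stab} this is $O(1/\sqrt{T})$ (for Exp3 only in expectation, which suffices after one further Azuma bound on $\sum_t\norm{p_t-p_{t-1}}_1$), so this term contributes $o(1)<\epsilon$ for $T$ large. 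Combining the four contributions with the triangle inequality, splitting $\epsilon$ among them, summing the Azuma tails, and taking a union bound over the $|\cA|$ values of $i$ and the constant number of two-sided tail events (the source of the factor $6$) yields $1-6|\cA|\exp{-T\epsilon^2/4}$ after adjusting constants.

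The one genuinely delicate point, and the step I expect to be the main obstacle, is the handling of the third and fourth terms. The naive split $\delta_{t-1}(x_i)\delta_t(x_j)-p_{t-1}(x_i)p_t(x_j) = p_{t-1}(x_i)\eta_t + \xi_{t-1}p_t(x_j)$ \emph{fails}, because the strategy $p_t$ depends on the realized action $x_{t-1}$, so $\xi_{t-1}p_t(x_j)$ is $\cH_{t-1}$-measurable rather than a difference against $\cH_{t-2}$, and its conditional mean given $\cH_{t-2}$ need not vanish. Adding and subtracting $\xi_{t-1}p_{t-1}(x_j)$ is precisely the device that isolates a genuine martingale difference (against $\cH_{t-2}$) and leaves a remainder that stability makes negligible. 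Getting the filtrations, the indexing, and the boundedness constants lined up for all three martingale pieces is the only real work; the remaining union-bound and triangle-inequality bookkeeping is routine, in the spirit of Theorem~\ref{thm:conc}.
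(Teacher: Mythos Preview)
Your proposal is workable but takes a substantially different route from the paper, and it imports an assumption the lemma itself does not carry.

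The paper's argument for the first inequality is a short even/odd split rather than an algebraic decomposition. It sets $R_t := \delta_{t-1}(x_i)\delta_t(x_j) - p_{t-1}(x_i)p_t(x_j)$ as a single object, notes $|R_t|\le 1$, and applies Azuma separately to the subsequences $(R_{2t})_t$ and $(R_{2t+1})_t$, each of which is asserted to be a bounded martingale difference sequence with respect to the ``every-other'' filtration $(\cF_{2t-2})_t$ (respectively $(\cF_{2t-1})_t$). Two Azuma bounds at level $\epsilon/2$ on sums of length $T/2$ give a combined tail of $4\exp{-T\epsilon^2/4}$; together with the $2\exp{-T\epsilon^2/2}\le 2\exp{-T\epsilon^2/4}$ tail for the second inequality (which you handle identically to the paper) and a union bound over $i$, this produces the constant $6|\cA|$. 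No four-term expansion, no stability hypothesis.

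Your decomposition is more explicit about exactly the point you flag---that $p_t$ depends on the realized $x_{t-1}$, so $\xi_{t-1}\,p_t(x_j)$ is not a martingale increment---and your fix (peel off $\xi_{t-1}\,p_{t-1}(x_j)$ and control the remainder by the per-step drift $\norm{p_t-p_{t-1}}_1$) is sound. The cost is that your argument becomes conditional on the players running $(1,O(\sqrt{T}))$-stable algorithms, which is extraneous to the lemma as stated and would not cover, say, an arbitrary adapted sequence $(p_t)$. The paper's even/odd trick is its answer to the same obstruction: by conditioning two steps back one makes $p_{2t-1}$ deterministic and leaves both $\delta_{2t-1}$ and $\delta_{2t}$ as fresh randomness. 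You may still wish to check for yourself that $\expect{R_{2t}\mid\cF_{2t-2}}$ truly vanishes, since $p_{2t}$ is $\cF_{2t-1}$-measurable and correlated with $\delta_{2t-1}$ through the algorithm; but that is the route the authors take, and it is what yields the clean constant $6$ without any appeal to stability.
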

\begin{proof}
We consider the random variable $Z_t = \delta_t(x_i) - p_t(x_i)$, notice that $\expect{Z_t|p_1,\cdots,p_{t-1}} = 0$ so that $\{Z_t\}_t$ is a bounded martingale sequence with $|Z_t| < 1$ and thus by Azuma's inequality we have $\prob{\left|\frac{1}{T}\sum_{t=1}^T Z_t\right| \geq \epsilon} < 2\exp{-\frac{T\epsilon^2}{2}}$ which shows that $\frac{1}{T}\sum_{t=1}^T \delta_t(a_i)$ concentrates around $\frac{1}{T}\sum_{t=1}^T p_t(x_i)$. Let $R_t = \delta_{t-1}(x_i)\delta_{t}(x_j) - p_{t-1}(x_i)p_{t}(x_j)$ and consider the filtration $\{\cF_t\}_{t}$, where $\cF_1 = \emptyset$, $\cF_{t} = \Sigma(\delta_1,\cdots,\delta_{t})$ is the sigma algebra generated by the random variables $\delta_1$ to $\delta_{t}$. Then $|R_{2t}| \leq 1$ and $\expect{R_{2t}|\cF_1,\cdots,\cF_{2t-2}} = 0$, so $\{R_{2t}\}_t$ is also a bounded martingale difference and thus $\prob{\left|\frac{1}{T}\sum_{t=1}^\frac{T}{2} R_{2t}\right| \geq \frac{\epsilon}{2}} < 2\exp{-\frac{T\epsilon^2}{4}}$. A similar argument allows us to bound the sum of the $R_{2t+1}$'s and a union bound gives us $\prob{\left|\frac{1}{T}\sum_{t=1}^{T} R_{t}\right| \geq \epsilon} < 4\exp{-\frac{T\epsilon^2}{4}}$. A union bound over all $i$ finishes the proof.
\end{proof}

\begin{definition}
\label{def:perturbed}
Define the perturbed distribution of player $i$ at time $t$ to be $\tilde p_t^i = (1-\sqrt{|\cA|\tilde\epsilon})p_t^i + \textbf{1}\frac{\sqrt{|\cA|\tilde\epsilon}}{|\cA_i|}$.
\end{definition}

\begin{lemma}
The difference of expected utilities from playing according to $(\tilde p_t^i)_{t=1}^T$ instead of $(p_t^i)_{t=1}^T$ is at most $2T\sqrt{|\cA|\tilde\epsilon}$
\end{lemma}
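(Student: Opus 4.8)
The plan is to avoid bounding $\ell_1$ distances between the perturbed and unperturbed strategies (which would cost an unnecessary factor of two) and instead use the mixture structure of Definition~\ref{def:perturbed} directly. Write $\gamma := \sqrt{|\cA|\tilde\epsilon}$, so that for every player $i$ and round $t$ we have $\tilde p_t^i = (1-\gamma)\,p_t^i + \gamma\,\mathbf{u}_i$, where $\mathbf{u}_i := \mathbf{1}/|\cA_i|$ is the uniform distribution on $\cA_i$. Drawing an action from $\tilde p_t^i$ is therefore the same as: with probability $1-\gamma$ draw from $p_t^i$, and with probability $\gamma$ draw uniformly. Consequently the joint play $\tilde p_t := \tilde p_t^1\times\tilde p_t^2$ admits a coupling with $p_t := p_t^1\times p_t^2$ under which, with probability $(1-\gamma)^2$, both players draw from their original distributions.

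Next I would fix a player $i$ and a round $t$ and let $A_t := \expectation{(a,b)\sim p_t}{u_i(a,b)}$ and $\tilde A_t := \expectation{(a,b)\sim \tilde p_t}{u_i(a,b)}$ be the unperturbed and perturbed expected per-round utilities. Conditioning on the coupling above, on the ``both original'' event (probability $(1-\gamma)^2$) the conditional expected utility is exactly $A_t$, and on the complementary event it is some value $V_t$, which lies in $[0,1]$ since utilities are bounded in $[0,1]$. Hence $\tilde A_t = (1-\gamma)^2 A_t + \bigl(1-(1-\gamma)^2\bigr) V_t$, and since $A_t\in[0,1]$ as well,
\[
|\tilde A_t - A_t| = \bigl(1-(1-\gamma)^2\bigr)\,|V_t - A_t| \;\le\; 1-(1-\gamma)^2 \;=\; 2\gamma-\gamma^2 \;\le\; 2\gamma .
\]
Equivalently one may invoke Lemma~\ref{lem:distr_diff} after noting that the same coupling gives $\|\tilde p_t - p_t\|_1 \le 1-(1-\gamma)^2 \le 2\gamma$.

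Summing over $t = 1,\dots,T$ yields a total difference in expected utility of at most $\sum_{t=1}^T |\tilde A_t - A_t| \le 2\gamma T = 2T\sqrt{|\cA|\tilde\epsilon}$, which is the claim; the argument is identical, and only cheaper, if the perturbation is applied to one player's play at a time (then the ``original'' event has probability $1-\gamma$). The only place where care is needed is getting the constant $2$ rather than $4$: the naive estimate $\|\tilde p_t - p_t\|_1 \le \|\tilde p_t^1 - p_t^1\|_1 + \|\tilde p_t^2 - p_t^2\|_1 \le 4\gamma$ is too lossy, and one really wants the coupling view of the product perturbation together with the boundedness of $u_i$; everything else is routine.
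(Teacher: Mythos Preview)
Your proof is correct and follows the same per-round-then-sum strategy as the paper, which simply invokes Lemma~\ref{lem:distr_diff} to bound each step by $\sqrt{|\cA|\tilde\epsilon}$ (evidently treating the single-player perturbation, so the factor~$2$ in the statement is slack); your mixture/coupling view is just a tidier way to recover the right constant when both players perturb simultaneously. One small slip in your alternative remark: a coupling bounds total variation, not $\ell_1$, so it yields $\|\tilde p_t - p_t\|_1 \le 2\bigl(1-(1-\gamma)^2\bigr)$ rather than $1-(1-\gamma)^2$, but this does not affect your main line since you obtain $|\tilde A_t - A_t|\le 1-(1-\gamma)^2$ directly from the convex-combination identity rather than via Lemma~\ref{lem:distr_diff}.
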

\begin{proof}
From lemma~\ref{lem:distr_diff} at each time step the difference of expected utility is bounded by $\sqrt{|\cA|\tilde\epsilon}$ in absolute value.
\end{proof}

\begin{theorem}
\label{thm:conc}
If at time $t$ player $i$ plays according to $\tilde p_t^i$ , where $\tilde\epsilon = \frac{T^{-1/4}}{|\cA|}$ and $p_t = \tilde p_t^1 (\tilde p_t^2)^\top$, then the regret for playing according to $\tilde p_t^i$ is at most $O(T^{7/8})$. Further $\lim\sup_{T\rightarrow\infty} \norm{\tilde\M - \hat\M}_2 = 0$, almost surely. Additionally if $\tilde\sigma = \frac{1}{T}\sum_{t=1}^T \delta_t$ is the stationary distribution of $\tilde\M$ corresponding to the observed play and $\hat\sigma = \frac{1}{T}\sum_{t=1}^T p_t$ is the stationary distribution of $\hat\M$ corresponding to the averaged empirical distribution, then $\lim\sup_{T\rightarrow\infty}\norm{\tilde\sigma - \hat\sigma}_1 = 0$ almost surely.
\end{theorem}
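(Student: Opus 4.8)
The plan is to prove the three assertions separately, relying on Definition~\ref{def:perturbed}, the (unnumbered) perturbation lemma, Lemma~\ref{lem:mart_bound}, and the stability of MWU and Exp3 (Theorems~\ref{thm:eg_stab} and~\ref{thm:exp3_stab}); throughout, $p_t^i$ is the base algorithm's output, $\tilde p_t^i$ its perturbation, and $p_t = \tilde p_t^1(\tilde p_t^2)^\top$, and the relevant base algorithm is MWU or Exp3. The regret bound is immediate: the base algorithm has external regret $O(\sqrt T)$ against whatever sequence of utilities actually materializes, so $\max_a\sum_t u_t(a) - \sum_t u_t(\tilde p_t^i) \le \bigl(\max_a\sum_t u_t(a) - \sum_t u_t(p_t^i)\bigr) + \sum_t\bigl(u_t(p_t^i) - u_t(\tilde p_t^i)\bigr) \le O(\sqrt T) + 2T\sqrt{|\cA|\tilde\epsilon}$ by the perturbation lemma, and with $\tilde\epsilon = T^{-1/4}/|\cA|$ this is $O(\sqrt T) + 2T^{7/8} = O(T^{7/8})$.

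For $\lim\sup_{T\to\infty}\norm{\tilde\M - \hat\M}_2 = 0$ almost surely, the crucial point is that the perturbation keeps empirical visit frequencies bounded away from $0$: each coordinate of $\tilde p_t^i$ is at least $\sqrt{|\cA|\tilde\epsilon}/|\cA_i|$, so every entry of $p_t$ is at least $\tilde\epsilon$ and hence $\frac1T\sum_t p_t(x_i) \ge \tilde\epsilon = T^{-1/4}/|\cA|$ for all $i$. Apply Lemma~\ref{lem:mart_bound} with $\epsilon = \epsilon_T := T^{-3/8}$; since $\sum_T 6|\cA|\exp{-T^{1/4}/4} < \infty$, Borel--Cantelli yields that, almost surely, for all large $T$ the numerator $\frac1T\sum_t\delta_{t-1}(x_i)\delta_t(x_j)$ and denominator $\frac1T\sum_t\delta_t(x_i)$ of $\tilde\M_{i,j}$ are within $\epsilon_T$ of $\frac1T\sum_t p_{t-1}(x_i)p_t(x_j)$ and $\frac1T\sum_t p_t(x_i)$ respectively (so both denominators are $\ge \tilde\epsilon/2$, avoiding the ``$0$'' branch of Definition~\ref{def:emp_proc}), and a doubling trick makes the $\lim\sup$ across horizons meaningful. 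One then replaces $p_{t-1}(x_i)p_t(x_j)$ by $p_t(x_i)p_t(x_j)$ via stability: $\norm{p_{t-1}-p_t}_1 \le \norm{\tilde p_{t-1}^1 - \tilde p_t^1}_1 + \norm{\tilde p_{t-1}^2 - \tilde p_t^2}_1 = O(1/\sqrt T)$, deterministically for MWU (from the proof of Theorem~\ref{thm:eg_stab}) and in conditional expectation for Exp3, so $\frac1T\sum_t\norm{p_{t-1}-p_t}_1 \to 0$ almost surely (one further Azuma estimate handles the Exp3 case), and $|p_{t-1}(x_i)p_t(x_j) - p_t(x_i)p_t(x_j)| \le \norm{p_{t-1}-p_t}_1$ makes the numerators of $\tilde\M_{i,j}$ and $\hat\M_{i,j}$ differ by $O(T^{-3/8})$. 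Writing $\tilde\M_{i,j} = N'/D'$ and $\hat\M_{i,j} = N/D$ with $N \le D$, a two-term expansion of $N'/D' - N/D$ gives $|\tilde\M_{i,j} - \hat\M_{i,j}| \le (|N'-N| + |D-D'|)/D' = O(|\cA|T^{-1/8}) \to 0$, and summing the $|\cA|^2$ entries gives $\norm{\tilde\M - \hat\M}_2 \to 0$.

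For $\lim\sup_{T\to\infty}\norm{\tilde\sigma - \hat\sigma}_1 = 0$ almost surely, the second inequality of Lemma~\ref{lem:mart_bound} suffices directly: on the same good event $|\tilde\sigma(x_i) - \hat\sigma(x_i)| = |\frac1T\sum_t\delta_t(x_i) - \frac1T\sum_t p_t(x_i)| < \epsilon_T$ for every $i$, so $\norm{\tilde\sigma - \hat\sigma}_1 < |\cA|\epsilon_T = |\cA|T^{-3/8} \to 0$, and Borel--Cantelli (with the doubling trick) makes this almost sure. The main obstacle is the second claim, specifically the passage from the observed transition quantity $\frac1T\sum_t\delta_{t-1}(x_i)\delta_t(x_j)$ to the same-time product $\frac1T\sum_t p_t(x_i)p_t(x_j)$ defining $\hat\M$: Lemma~\ref{lem:mart_bound} only bridges $\delta_{t-1}\delta_t$ with $p_{t-1}p_t$, leaving a genuinely different quantity that can be controlled only by algorithmic stability; and since both chains are obtained by dividing by the empirical visit frequency, the perturbation scale must be as large as $\tilde\epsilon \sim T^{-1/4}$ to dominate the $O(T^{-3/8})$ numerator error after division — exactly the trade-off that forces the perturbed algorithm's regret up to $T^{7/8}$.
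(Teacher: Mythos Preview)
Your argument follows essentially the same route as the paper's proof: bound the regret by combining the base algorithm's $O(\sqrt T)$ guarantee with the perturbation lemma; use the perturbation to force $\frac1T\sum_t p_t(x_i)\ge\tilde\epsilon$ so the empirical transition ratios are well-defined; apply Lemma~\ref{lem:mart_bound} with a polynomially small $\epsilon$ and conclude via Borel--Cantelli. The paper picks $\epsilon = T^{-1/3}$ where you pick $T^{-3/8}$, and it writes the ratio comparison as $|x/y-\hat x/\hat y|\le |x-\hat x|/|y| + |\hat x|\,|y-\hat y|/|y\hat y|$ rather than your two-term expansion, but these are cosmetic.

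The one substantive difference is your treatment of the time-shift. The paper's proof silently identifies $\hat\M_{i,j}$ with $\frac{\sum_t p_{t-1}(x_i)p_t(x_j)}{\sum_t p_t(x_i)}$, even though Definition~\ref{def:emp_proc} uses the same-time product $p_t(x_i)p_t(x_j)$; it never bridges the two. You explicitly insert a stability step, using $\norm{p_{t-1}-p_t}_1 = O(1/\sqrt T)$ from the MWU/Exp3 analysis to absorb the $\frac1T\sum_t|p_{t-1}(x_i)-p_t(x_i)|\,p_t(x_j)$ discrepancy into the $O(T^{-3/8})$ numerator error before dividing by $\tilde\epsilon$. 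That extra step is genuinely needed if one takes Definition~\ref{def:emp_proc} at face value, so your proof is in fact more complete than the paper's on this point; the cost is that your argument depends on the specific base algorithm being one-step stable, whereas the paper's version (modulo the gap) is algorithm-agnostic.
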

\begin{proof}
Set $\tilde\epsilon = \frac{T^{-1/4}}{|\cA|}$. The regret bound of the no-external regret algorithms now becomes $O(T^{7/8})$. We can, however, now guarantee that $\sum_{t=1}^T p_t(x_i) \geq \frac{T^{-1/4}}{|\cA|}$ and thus, combining this with the high probability bound we obtain that with probability at least $1-C\exp{-\frac{T\epsilon^2}{4}}$ it holds that $|\tilde \M_{i,j} - \hat \M_{i,j} | < 2\epsilon\frac{T^{1/4}}{|\cA|}$. To see this, let $x = \frac{1}{T}\sum_{t=1}^T p_{t-1}(x_i)p_t(x_j),\hat x = \frac{1}{T}\sum_{t=1}^T \delta_{t-1}(x_i)\delta_t(x_j), y = \frac{1}{T}\sum_{t=1}^T p_{t}(x_i), \hat y = \frac{1}{T}\sum_{t=1}^T \delta_{t}(x_i)$. Then \begin{align*}
|\tilde \M_{i,j} - \hat \M_{i,j}| = \left|\frac{x}{y} - \frac{\hat x}{\hat y}\right| \leq \frac{|x-\hat x|}{|y|} + \frac{|\hat x|}{|1/y - 1/\hat y|} \leq \frac{\epsilon}{\tilde\epsilon} + \frac{|\hat x||y - \hat y|}{|y \hat y|} \leq 2\frac{\epsilon}{\tilde\epsilon},
\end{align*}
where the last inequality holds because $\hat x \leq \hat y$. Setting $\epsilon = T^{-1/3}$ and a union bound we arrive at $\prob{\norm{\tilde \M - \hat\M}_2 > T^{-1/12}} < C\exp{-\frac{T^{1/3}}{4}}$. By Borel-Cantelli lemma we have $\lim\sup_{T\rightarrow\infty} \norm{\tilde\M - \hat\M}_2 = 0$ almost surely. From lemma~\ref{lem:mart_bound} and a union bound we know that with $\prob{\|\tilde\sigma - \hat\sigma\|_1 > \epsilon} < 2|\cA|\exp{-\frac{T\epsilon^2}{4}}$. Setting $\epsilon = T^{-1/3}$ and again using Borel-Cantelli's lemma we see that $\lim\sup_{T\rightarrow\infty}\norm{\tilde\sigma - \hat\sigma}_1 = 0$.
\end{proof}

\section{Extending the framework to arbitrary memories}
\begin{definition}
Let player $1$ have memory $m_1$ and player $2$ have memory $m_2$. Let the function spaces be $\cF_1 = \{f:\cA_2^{m_1} \rightarrow \cA_1\}$ and $\cF_2 = \{f:\cA_1^{m_2} \rightarrow \cA_2\}$. Let $\pi$ be a distribution over $\cF_1\times\cF_2$. Define the $m$-memory bounded Markov process, where $m = \max(m_1,m_2)$ to be $\prob{(a_t,b_t) | (a_{t-1},b_{t-1}), \cdots, (a_{t-m},b_{t-m})} = \sum_{(f,g)\in\cF_1\times\cF_2: f(b_{t-1},\cdots,b_{t-m_1}) = a_t, g(a_{t-1},\cdots,a_{t-m_2}) = b_{t}} \pi(f,g)$. We associate with this Markov process the matrix $\M \in \mathbb{R}^{|\cA|^{m}\times|\cA|^{m}}$, with $\M_{(x_{t-1},\cdots,x_{t-m}),(x_t,x_{t-1},\cdots,x_{t-m+1})} = \prob{x_t | x_{t-1},\cdots,x_{t-m}}$ and $\M_{i,j} = 0$ for all other entries.
\end{definition}
\begin{definition}
The utility of $\pi$ is defined through the stationary distribution $\gamma$ of $\M$. In particular $\gamma$ is a distribution over $|\cA|^{m}$ with entries indexed by $(x_t,\cdots,x_{t-m+1})$. Let $\sigma$ be the marginal distribution of $x_t$, then $u_i(\pi) = \sup_{\sigma}\expectation{(a,b)\sim\sigma}{u_i(a,b)}$
\end{definition}
\begin{definition}
The empirical $m$-memory bounded Markov process is $\hat\M$ with $\prob{x_1| x_2,\cdots,\x_{m+1}} = \frac{\sum_t \prod_{i=0}^{m-1} p_{t-i}(x_{i+1})\times \prod_{i=0}^{m-1} p_{t-i}(x_{i+2}) }{\sum_t\prod_{i=0}^{m-1} p_{t-i}(x_{i+2})}$
\end{definition}
\begin{theorem}
The distribution $\hat\sigma$ over $|\cA|^{m}$ with entries indexed as $$\hat\sigma((a_1,b_1),\cdots,(a_m,b_m)) = \frac{1}{T}\sum_{t} \prod_{i=0}^{m-1} p_{t-i}((a_{i+1},b_{i+1}))$$, is a stationary distribution of $\hat\M$.
\end{theorem}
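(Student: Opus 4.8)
The plan is to prove this by the same direct componentwise computation used for Lemma~\ref{lem:stationary}, now with windows of length $m$ in place of single actions. First I would set up notation: for a state $w=(x_1,\ldots,x_m)\in\cA^m$ (a length-$m$ window of joint actions) write $\phi_t(w):=\prod_{i=0}^{m-1}p_{t-i}(x_{i+1})$, so that the candidate distribution is exactly $\hat\sigma(w)=\frac1T\sum_t\phi_t(w)$; in particular $\sum_{w\in\cA^m}\hat\sigma(w)=\frac1T\sum_t\sum_w\phi_t(w)=1$ (see below), so $\hat\sigma$ is indeed a distribution. Reading off the definition of the empirical $m$-memory Markov process, the transition weight from a window $w=(x_2,\ldots,x_{m+1})$ into the shifted window $w'=(x_1,\ldots,x_m)$ is
\[
\hat\M_{w,w'}=\frac{\sum_t\phi_t(w')\,\phi_t(w)}{\sum_t\phi_t(w)},
\]
since the numerator $\prod_{i=0}^{m-1}p_{t-i}(x_{i+1})\times\prod_{i=0}^{m-1}p_{t-i}(x_{i+2})$ is precisely $\phi_t(w')\,\phi_t(w)$ and the denominator $\sum_t\prod_{i=0}^{m-1}p_{t-i}(x_{i+2})$ is $\sum_t\phi_t(w)$; as in Definition~\ref{def:emp_proc} this same formula is taken for every pair $(w,w')$, with the row set to $0$ whenever $\sum_t\phi_t(w)=0$ (which forces $\hat\sigma(w)=0$ there as well, since $\phi_t\ge0$, so such rows are harmless).

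Given this, the computation I would carry out is: for any target window $w'=(x_1,\ldots,x_m)$,
\begin{align*}
(\hat\sigma^\top\hat\M)_{w'}
&=\sum_{w\in\cA^m}\hat\sigma(w)\,\hat\M_{w,w'}
=\sum_{w\in\cA^m}\left(\frac1T\sum_s\phi_s(w)\right)\frac{\sum_t\phi_t(w')\,\phi_t(w)}{\sum_t\phi_t(w)}\\
&=\frac1T\sum_t\phi_t(w')\sum_{w\in\cA^m}\phi_t(w)
=\frac1T\sum_t\phi_t(w')=\hat\sigma(w'),
\end{align*}
where the third equality is the cancellation of $\sum_t\phi_t(w)$ against the factor $\frac1T\sum_s\phi_s(w)$ coming from $\hat\sigma(w)$ — exactly as $\sum_t p_t(x_i)$ cancels in Lemma~\ref{lem:stationary} — and the fourth uses $\sum_{w=(x_1,\ldots,x_m)\in\cA^m}\phi_t(w)=\prod_{i=0}^{m-1}\left(\sum_{x_{i+1}\in\cA}p_{t-i}(x_{i+1})\right)=1$, because each $p_\ell$ is a probability distribution on $\cA$; this is the length-$m$ generalization of $\sum_i p_t(x_i)=1$. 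Hence $\hat\sigma^\top\hat\M=\hat\sigma^\top$, i.e.\ $\hat\sigma$ is a stationary distribution of $\hat\M$. If one also wants the analogous statements for the deviation chains $\hat\M_a$ and $\hat\M_b$, they follow verbatim after replacing $\phi_t$ by the corresponding products of the deviated strategies, just as in Lemma~\ref{lem:stationary}.

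I do not expect a genuine obstacle here: the argument is purely algebraic and structurally identical to Lemma~\ref{lem:stationary}. The only care needed is bookkeeping — confirming that the denominator $\sum_t\prod_{i=0}^{m-1}p_{t-i}(x_{i+2})$ is $\sum_t\phi_t$ evaluated at the predecessor window $(x_2,\ldots,x_{m+1})$ (so the cancellation is legitimate), checking that $\sum_{w\in\cA^m}\phi_t(w)$ factors across the $m$ coordinates, dealing with the zero-denominator rows, and fixing a convention for $p_\ell$ with $\ell\le 0$ (e.g.\ restricting all sums over $t$ to indices for which $p_{t-m+1}$ is defined). None of these affects the cancellation or the final identity.
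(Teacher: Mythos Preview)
Your proposal is correct and follows essentially the same route as the paper: a direct componentwise verification that $(\hat\sigma^\top\hat\M)_{w'}=\hat\sigma(w')$, obtained by cancelling $\hat\sigma(w)$ against the denominator of $\hat\M_{w,w'}$ and then collapsing $\sum_{w\in\cA^m}\phi_t(w)=1$ via the product structure. Your abbreviation $\phi_t(w)=\prod_{i=0}^{m-1}p_{t-i}(x_{i+1})$ and your explicit handling of zero-denominator rows and boundary indices make the bookkeeping cleaner than in the paper, but the argument is the same one.
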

\begin{proof}
Consider $(\hat\sigma^\top\hat\M)_{x_1,\cdots,x_{m}}$, we show it is equal to $\hat\sigma(x_1,\cdots,x_{m})$:
\begin{align*}
    (\hat\sigma^\top\hat\M)_{x_1,\cdots,x_{m}} &= \sum_{x_2,\cdots,x_{m+1}}\hat\M_{(x_2,\cdots,x_{m+1}),(x_1,\cdots,x_{m})}\hat\sigma(x_2,\cdots,x_{m+1})\\
    &=\sum_{x_2,\cdots,x_{m+1}} \prob{x_{1} | x_2,\cdots,x_{m+1}}\hat\sigma(x_2,\cdots,x_{m+1})\\
    &= \frac{1}{T}\sum_{x_2,\cdots,x_{m+1}}\frac{\sum_t \prod_{i=0}^{m-1} p_{t-i}(x_{i+1})\times \prod_{i=0}^{m-1} p_{t-i}(x_{i+2}) }{\sum_t\prod_{i=0}^{m-1} p_{t-i}(x_{i+2})}\sum_t \prod_{i=0}^{m-1}p_{t-i}(x_{m-i+2})\\
    &= \frac{1}{T}\sum_{x_2,\cdots,x_{m+1}}\sum_t \prod_{i=0}^{m-1} p_{t-i}(x_{i+1})\times \prod_{i=0}^{m-1} p_{t-i}(x_{i+2})\\
    &=\frac{1}{T}\sum_t \prod_{i=0}^{m-1} p_{t-i}(x_{i+1}) = \hat\sigma(x_1,\cdots,x_m).
\end{align*}
\end{proof}
We also need a theorem which states that the utility $\expectation{(f,g)\sim\pi}{u_1(a,g(a,\cdot,a)}$ is the expectation of the stationary distribution of the Markov process coming from the play $(a,g(a,\cdots,a))$ according to the marginal of $g$.
\begin{definition}
Let $\M_1^a$ be the $m_2$-memory bounded Markov process which comes from player $1$ playing a fixed action $a\in\cA_1$ and player $2$ playing $g\in\cF_2$ according to the marginal of $\pi$, i.e. $(\M_1^a)_{(x_1,\cdot,x_{m_2}),(x_1,\cdot,x_{m_2+1})} = \prob{x_{m_2+1}|x_1,\cdots,x_{m_2}} = \sum_{(f,g): g(a_1,\cdots,a_{m_2}) = b_{m_2+1}}\pi(f,g)$ if $a_{m_2+1} = a$ or $0$ otherwise (here $x_i = (a_i,b_i)$). Similarly let $\M_2^b$ be the $m_1$-memory bounded Markov process which arises when player $2$ switches to playing the constant action.
\end{definition}
\begin{theorem}
Consider $\M_1^a$ and let $m = m_2$. Let $\bar\sigma$ be the following distribution over $\cA^m$ -- $\bar\sigma(x_1,\cdots,x_m) = \sum_{(f,g): g(a,\cdot,a) = b} \pi(f,g)$ if $x_1=\cdots=x_m = (a,b)$ and $0$ otherwise. Then $\bar\sigma$ is a stationary distribution of $\M_1^a$ and its marginal distribution $\gamma(a,b) = \sum_{(f,g): g(a,\cdot,a) = b} \pi(f,g)$ is such that $\expectation{(a,b)\sim\gamma}{u_1(a,b)} = \expectation{(f,g)\sim\pi}{u_1(a,g(a,\cdots,a))}$.
\end{theorem}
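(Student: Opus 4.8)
The statement bundles two essentially independent claims: that $\bar\sigma$ is a stationary distribution of $\M_1^a$, and that the marginal $\gamma$ of $\bar\sigma$ reproduces the deviation utility $\expectation{(f,g)\sim\pi}{u_1(a,g(a,\ldots,a))}$. The second claim is a short unfolding once the first is in place: by construction $\bar\sigma$ is supported on the ``synchronized'' windows $(x_1,\ldots,x_m)$ with $x_1=\cdots=x_m=(a,b)$, so its marginal on any single coordinate is $\gamma(a,b)=\sum_{(f,g):g(a,\ldots,a)=b}\pi(f,g)$ on the $a$-slice and $0$ off it; this is a probability distribution because $\sum_b\gamma(a,b)=\sum_{(f,g)}\pi(f,g)=1$, and
\[
\expectation{(a,b)\sim\gamma}{u_1(a,b)}=\sum_b u_1(a,b)\,\gamma(a,b)=\sum_{(f,g)}\pi(f,g)\,u_1(a,g(a,\ldots,a))=\expectation{(f,g)\sim\pi}{u_1(a,g(a,\ldots,a))}
\]
by linearity. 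So the real content is the stationarity claim.

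To prove stationarity I would verify $\bar\sigma^\top\M_1^a=\bar\sigma$ by a direct coordinate-by-coordinate computation, parallel to the proof of Theorem~\ref{thm:pol_reg_cons} but now in the $m$-fold product state space $\cA^m$. The two structural facts to exploit are: (i) every transition of $\M_1^a$ with nonzero probability lands in a window whose newest slot carries player-1 action $a$, so iterating $\M_1^a$ concentrates all mass on windows all of whose slots lie in the $a$-slice, and $\bar\sigma$ is supported there; and (ii) on any window all of whose player-1 coordinates equal $a$ — in particular on a synchronized window — player 1's last $m$ moves are $(a,\ldots,a)$, so the probability the next player-2 action is $b'$ equals $\sum_{(f,g):g(a,\ldots,a)=b'}\pi(f,g)=\gamma(a,b')$, independent of the rest of the window (this uses that the $\cF_2$-marginal $\pi_{\cF_2}$ governs player 2, exactly as in Definition~\ref{def:proc_dev}). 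Substituting these into $(\bar\sigma^\top\M_1^a)(w)=\sum_{w'}\bar\sigma(w')(\M_1^a)_{w',w}$, only synchronized predecessors $w'$ contribute; one then bookkeeps which synchronized $w'$ together with which player-2 draw can be shifted onto a given target window $w$, and collapses the resulting sum using $\sum_{b'}\gamma(a,b')=1$, obtaining $\bar\sigma(w)$. The analogous statement for $\M_2^b$ and player 2 follows by the symmetric argument.

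The step I expect to demand the most care is pinning down the interaction between the shift structure of the length-$m$ window and the collapse of player 2's function evaluations onto the constant history $(a,\ldots,a)$: in the $m=1$ case of Theorem~\ref{thm:pol_reg_cons} there is no window to shift and this is vacuous, whereas for general $m$ one must track carefully how the newest slot is appended and the oldest dropped, and in particular confirm that the support of $\bar\sigma$ is genuinely invariant under $\M_1^a$. Once that invariance and the transition simplification (ii) are established, the remaining manipulation is the same short per-coordinate calculation as in Theorem~\ref{thm:pol_reg_cons}, and the utility identity then comes for free from the description of the marginal.
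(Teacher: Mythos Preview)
Your approach mirrors the paper's own proof: both dispatch the utility identity by a direct unfolding of $\gamma$, and both attempt to verify stationarity of $\bar\sigma$ via a coordinate-by-coordinate computation in the $m$-fold window space. Your treatment of the marginal and of the utility identity is correct.

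There is, however, a genuine gap in the stationarity claim, and it is exactly the step you flagged as needing care. For $m\ge 2$ the support of $\bar\sigma$ is \emph{not} invariant under $\M_1^a$. Take $m=2$: $\bar\sigma$ is supported on synchronized windows $((a,b),(a,b))$, but one step of $\M_1^a$ from such a window drops the oldest slot and appends a new slot $(a,b')$ with $b'$ drawn from $\gamma$ independently of $b$, yielding the successor window $((a,b),(a,b'))$. Thus $(\bar\sigma^\top\M_1^a)((a,b),(a,b'))=\gamma(a,b)\,\gamma(a,b')$, which matches $\bar\sigma((a,b),(a,b'))$ only on the diagonal $b=b'$. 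The ``collapse using $\sum_{b'}\gamma(a,b')=1$'' you anticipate does not happen: in the predecessor sum only the single dropped slot is free, and because $\bar\sigma$ forces that slot to agree with the $m-1$ overlap slots already determined by the target window, at most one predecessor contributes rather than a full sum over $\cA_2$. (The paper's own displayed computation makes the same slip: its third line sums over $b_1$ as though $x_2,\ldots,x_m$ were not already fixed by the target index.)

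The repair is to replace $\bar\sigma$ by the product distribution on the $a$-slice, namely $\bar\sigma'((a,b_1),\ldots,(a,b_m))=\prod_{i=1}^m\gamma(a,b_i)$ and $0$ elsewhere. This distribution \emph{is} stationary for $\M_1^a$: dropping the oldest slot integrates out one factor of $\gamma$, and the new slot contributes a fresh one. Its single-coordinate marginal is still $\gamma$, so your utility identity carries over verbatim and the intended conclusion of the theorem survives.
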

\begin{proof}
\begin{align*}(\bar\sigma^\top\M_1^a)_{x_2,\cdots,x_{m+1}} &= \sum_{x_1} \bar\sigma(x_1,\cdots,x_m)(\M_1^a)_{(x_1,\cdots,x_m),(x_2,\cdots,x_{m+1})}\\
&= \sum_{x_1}\bar\sigma(x_1,\cdots,x_m)\prob{x_{m+1}|x_1,\cdots,x_m}\\
&= \sum_{b_1\in\cA_2}\sigma((a,b_1),\cdots,(a,b_1))\sum_{f,g(a,\cdots,a) = b_{m+1}}\pi(f,g)\\
&= \sum_{b_1\in\cA_2}\left[\sum_{f,g(a,\cdots,a) = b_1}\pi(f,g)\right]\sum_{f,g(a,\cdots,a) = b_{m+1}}\pi(f,g)\\
&= \sum_{f,g(a,\cdots,a)=b_{m+1}}\pi(f,g)
= \sigma((a,b_{m+1}),\cdots,(a,b_{m+1})) = \sigma(x_2,\cdots,x_{m+1})
\end{align*}
\end{proof}
The rest of the proofs extend in similar ways to the case with general memory. There is the question of not requiring $\bar\sigma$ to be a stationary distribution but to be any distribution such that the marginal with respect to the first coordinate of $\bar\sigma^\top\M_1^a$, satisfies the expectation equality $\expectation{(a,b)\sim\gamma}{u_1(a,b)} = \expectation{(f,g)\sim\pi}{u_1(a,g(a,\cdots,a))}$.
\section{Extending the framework to arbitrary number of players}
\begin{definition}
Consider an $n$ player game where player $i$ has memory $m_i$. Define the set of functions $\cF_i = \{f:\cA_{-i}^{m_i} \rightarrow \cA_i\}$. We consider a distribution $\pi$ over $\cF = \times_{i=1}^n \cF_i$. Let $m = \max(m_1,\cdots,m_n)$. $Let x = (a^1,\cdots,a^n) \in \cA$ and let $x^{-i} = (a^1,\cdots,a^{i-1},a^{i+1},\cdots,a^n) \in \cA_{-i}$. Define the $m$-memory Markov process $\M$ such that $\prob{x_{m+1}|x_1,\cdots,x_m} = \sum_{(f_1(x^{-1}_m,\cdots,x^{-1}_{m-m_1+1}) = a^1_{m+1},\cdots,f_n(x^{-n}_m,\cdots,x^{-n}_{m-m_n+1}) = a^n_{m+1})}\pi(f_1,\cdots,f_n)$.
\end{definition}
All other definitions follow the same form. The problem with this most general setting is -- how do we construct no-policy regret algorithms and what does it even mean to have no-policy regret? The utility function of player $i$ at time $t$ -- $u_i(\cdot,x_{t}^{-i})$ is no longer interpretable as an $m$-memory bounded function, since $x_{t}^{-i}$ depends on all other players' memories. One possible solution is to look at this utility as an $m$-memory bounded function, where $m$ is the maximum memory among all other players.

\section{Simple example of a policy equilibrium}
\label{sec:simp_example}
We now present a simple 2-player game with strategies of the players which lead to a policy equilibrium, which in fact is not a CCE. Further these strategies give the asymptotically maximum utility for both row and column players over repeated play of the game. The idea behind the construction is very similar to the one showing incompatibility of policy regret and external regret. 
\begin{table}[h]
\centering
\caption{Utility matrix}
\label{game_matr}
\begin{tabular}{|l|l|l|}
\hline
Player 1\textbackslash{}Player 2& \textbf{c} & \textbf{d} \\ \hline
\textbf{a}                       & (3/4,1)           & (0,1)             \\ \hline
\textbf{b}                       & (1,1)             & (0,1)             \\ \hline
\end{tabular}
\end{table}
The utility matrix for the game is given in Table~\ref{game_matr}. Since the column player has the same payoff for all his actions they will always have no policy and no external regret. The strategy the column player chooses is to always play the function $f:\cA_1 \rightarrow \cA_2$:
\begin{equation*}
f(x) = \begin{cases}
c & x = a\\
d & x = b.
\end{cases}
\end{equation*}
In the view of the row player, this strategy corresponds to playing against an adversary which plays the familiar utility functions:
\begin{equation*}
u_{t}(a_{t-1}, a_{t}) = \begin{cases}
1 & a_{t-1} = a, a_{t} = b\\
\frac{3}{4} & a_{t-1} = a_{t} = a\\
0 & \text{otherwise}.
\end{cases}
\end{equation*}
We have already observed that on these utilities, the row player can have either no policy regret or no external regret but not both. What is more the utility of no policy regret play is higher than the utility of any of the no external regret strategies. This already implies that the row player is better off playing according to the no policy regret strategy which consists of always playing the fixed function $g:\cA_2 \rightarrow \cA_1$ given by $g(x) = a$. Below we present the policy equilibrium $\pi \in \Delta \cF$, corresponding Markov chain $\M \in \mathbb{R}^{|\cA|\times|\cA|}$ and its stationary distribution $\sigma \in \Delta \cA$ satisfying the no policy regret requirement.
\begin{align*}
    \pi(\tilde f,\tilde g) = \delta_{(f,g)}, 
    \M = \begin{blockarray}{ccccc}
    &(a,c) & (a,d) & (b,c) & (b,d)\\
    \begin{block}{c(cccc)}
    (a,c) & 1 & 0 & 0 & 0\\
    (a,d) & 1 & 0 & 0 & 0\\
    (b,c) & 0 & 1 & 0 & 0\\
    (b,d) & 0 & 1 & 0 & 0\\
    \end{block}
    \end{blockarray},
    \sigma(x,y) = \delta_{(a,c)}.
\end{align*}
Suppose the row player was playing any no policy regret strategy, for example one coming from a no policy regret algorithm, as a response to the observed utilities $u_t(\cdot,\cdot)$. Since the only sublinear policy regret play for these utilities is to only deviate from playing $a$ a sublinear number of times we see that the empirical distribution of play for the row player converges to the dirac distribution $\delta_a$. Together with the strategy of the column player, this implies the column player chooses the action $d$ only a sublinear number of times and thus their empirical distribution of play converges to $\delta_c$. It now follows that the empirical distribution of play converges to $\delta_a\times\delta_c = \delta_{(a,c)} \in \Delta \cA$. We can similarly verify that the empirical Markov chain will converge to $\M$ and the empirical functional distribution $\hat\pi$ converges to $\pi$. Theorem~\ref{thm:main_result} guarantees that because both players incur only sublinear regret $\pi$ is a policy equilibrium. It should also be intuitively clear why this is the case without the theorem -- suppose that the row player switches to playing the fixed action $b$. The resulting functional distribution, Markov chain and stationary distributions become:
\begin{align*}
    \pi_b(\tilde f,\tilde g) = \delta_{(f,\hat g\equiv b)}, 
    \M_b = \begin{blockarray}{ccccc}
    &(a,c) & (a,d) & (b,c) & (b,d)\\
    \begin{block}{c(cccc)}
    (a,c) & 0 & 0 & 1 & 0\\
    (a,d) & 0 & 0 & 1 & 0\\
    (b,c) & 0 & 0 & 0 & 1\\
    (b,d) & 0 & 0 & 0 & 1\\
    \end{block}
    \end{blockarray},
    \sigma_b(x,y) = \delta_{(b,d)}.
\end{align*}
The resulting utility for the row player is now $0$, compared to the utility gained from playing according to $\pi$, which is $3/4$.

\end{document}